\RequirePackage{fix-cm}
\RequirePackage{amsmath}
\documentclass[fleqn,twocolumn]{svjour3}          
\smartqed  
\usepackage{microtype}

\usepackage{amsmath,amssymb,amsthm}
\usepackage{mathtools}
\usepackage{graphicx}  
\usepackage{amsfonts}
\usepackage{multirow}
\usepackage{silence}
\usepackage{subcaption}
\usepackage{natbib}
\usepackage{longtable}
\usepackage{array}
\usepackage{pifont}
\usepackage{txfonts}
\usepackage{hyperref,lineno}
\usepackage{color}
\DeclareGraphicsExtensions{.jpg,.eps,.png,.pdf,.mps,.gif}
\usepackage{url}
\usepackage{epstopdf}
\usepackage{xcolor}
\usepackage{breqn}
\usepackage{booktabs}
\usepackage{longtable}

\usepackage{subfiles}
\usepackage{wrapfig}
\usepackage{bm}

\usepackage{esvect}
\usepackage{comment}

\makeatletter
\newcommand\footnoteref[1]{\protected@xdef\@thefnmark{\ref{#1}}\@footnotemark}
\makeatother

\newcommand{\be}{\begin{equation}}
\newcommand{\ee}{\end{equation}}
\newcommand{\bbm}{\begin{bmatrix}}
\newcommand{\ebm}{\end{bmatrix}}

\DeclareSymbolFont{perso}{U}{cmr}{bx}{n}
\DeclareMathSymbol{\varbPsi}{\mathalpha}{perso}{"09}
\DeclareMathSymbol{\varbPhi}{\mathalpha}{perso}{"08}
\DeclareMathSymbol{\varbOmega}{\mathalpha}{perso}{"0A}
\DeclareMathSymbol{\varbdollar}{\mathalpha}{perso}{"24} 

\usepackage[normalem]{ulem}
\usepackage{soul}

\def\Q{{\mathbb{Q}}}
\def\C{{\mathbb{C}}}

\hypersetup{final}

\begin{document}

\title{Singularity Analysis for the Perspective-Four and Five-Line Problems}
\thanks{
This work was supported by the project PROMPT funded by the RFI AtlanSTIC2020, and by the French ANR project SESAME (funding ID: ANR-18-CE33-0011).
}


\author{Jorge Garc\'ia Font\'an \and Abhilash Nayak \and S\'ebastien Briot \and Mohab Safey El Din }


\institute{  Corresponding author: J. Garc\'ia Font\'an \at
            Sorbonne Universit\'e, LIP6, Equipe PolSys, Paris, France\\
             \email{Jorge.Garcia-Fontan@lip6.fr} \\
              ORCID Id: 0000-0003-4187-9624 
\and
              A. Nayak \at
             Centre national de la Recherche Scientifique (CNRS), Laboratoire des Sciences du Num\'erique de Nantes (LS2N), UMR CNRS 6004, Nantes, France \\
             \email{Abhilash.Nayak@ls2n.fr} \\
              ORCID Id: 0000-0001-6228-203X
\and
             S. Briot \at
              Centre national de la Recherche Scientifique (CNRS), Laboratoire des Sciences du Num\'erique de Nantes (LS2N), UMR CNRS 6004, Nantes, France \\
              \email{Sebastien.Briot@ls2n.fr}             \\
              ORCID Id: 0000-0002-0419-6042%
\and
             M. Safey El Din \at
             Sorbonne Universit\'e, CNRS, LIP6, Equipe PolSys, Paris, France\\
             \email{Mohab.Safey@lip6.fr} \\
              ORCID Id: 0000-0001-9463-1257 
}

\date{Received: date / Accepted: date}

\maketitle

\begin{abstract}
{This paper deals with image-based visual servoing and pose estimation by observing four and five lines. Our main interest is to determine the relative configurations of the camera and the observed lines that lead to problems in control and stability. Since it is equivalent to finding the singularities of the corresponding Jacobian matrix, we use tools from computational algebraic geometry to seek configurations such that all of its minors vanish simultaneously. By choosing a suitable basis for this matrix, we revisit the problem in the case of three lines to show that one type of the singularities is when the camera lies on the hyperboloid of one sheet uniquely defined by the lines. This result is further exploited to prove that the one-dimensional singularities, if any, in the case of $n$ lines appear when the camera lies on the transversals to the observed lines. Thus, by forcing the transversals to be complex, we can avoid the aforementioned type of singularities in the case of four lines although the algebra shows that there can always be up to 10 inevitable singular locations of the camera for the other type of singularity. For five lines, we find out that there are no singularities in the generic case. The singularities are also characterized for four and five lines with orthogonality and parallelism constraints. Furthermore, a visual servoing library is used to conduct some simulated experiments to substantiate the theoretical results. As expected, we observe problems in control in the vicinity of a singularity as well as increased errors in pose estimation.}

\keywords{Pose estimation \and Visual servoing \and Singularity \and PnL}
\end{abstract}    

\section{Introduction}
\label{sec:intro}




{A standard problem in computer vision, which has many applications in augmented reality~\citep{Marchand16a} and robotics (especially in visual servoing~\citep{Hutchinson_tutovisualservoTRO_1996}) is the estimation of the pose based on the features projected in the camera image. When the 2D image is a set of $n$ points that are projections of their 3D counterparts on the image plane, the problem is known as P$n$P~(Perspective-$n$-Point) and has been dealt with extensively in literature~\citep{gao2003complete,Horaud1998,Kneip_CVPR11,Wu2006pnp}. Similarly, when the features observed by the camera are $n$ straight lines, the problem is referred to as P$n$L~(Perspective-$n$-Line)~\citep{Dhome_IEEE_1989,Xu2017,Wang2020}.}


In particular, P$n$P and P$n$L involve determining the six pose parameters of a camera (degrees of freedom corresponding to its position and orientation) belonging to the Special Euclidean group SE(3) knowing the 3D-2D correspondences of the $n$ observed points and lines, respectively. {By taking the time derivatives of the parameters involved in  P$n$P or P$n$L, we obtain the so called motion-field equations~\citep{LonguetHiggins80} that are crucial to visual servoing in robotics. They involve the mapping between the time derivative of camera pose parameters belonging to se(3), being the Lie algebra of SE(3)~(3D vector space of translational and orientational velocities of the camera) and the relative velocities of the projected features on the image plane, 
through the \emph{image Jacobian} or \emph{interaction matrix}~\citep{Chaumette_Tutorial_Part1,Chaumette_tutorial_Part2,Chaumette_Handbook_2008}}.

Indeed, the problem of determining the singularities of the interaction matrix is crucial, {especially for the following reasons: 
\begin{itemize}
    \item In visual servoing tasks, we face potential accuracy and controllability issues of the robot when the camera is in the vicinity of a singularity~\citep{hutchinson1996tutorial}.
    \item The singularities are known to considerably worsen the pose reconstruction accuracy~\citep{Pascual-Escudero2021}. Moreover, their are known to influence the number of solutions to pose estimation problem as shown by~\cite{Rieck2013} and~\cite{Zhang2006WhyIT} in the case of P3P.
\end{itemize}}

{However, determining those singularities is computationally heavy since it involves solving the complex algebraic systems arising due to the loss of rank of the interaction matrix. As a result, the singularity analysis in the past has been limited to simple image features, such as the observation of three points in space (P3P).}
For this problem, a well-known result is that a singularity occurs if the three points are aligned or if the camera lies on the cylinder that contains the three points and is perpendicular to the plane they define~\citep{michel1993singularities}. {This result and tools from algebraic geometry were recently used by~\cite{Pascual-Escudero2021} to show that, in P4P, there are always two to six camera configurations where the corresponding interaction matrix becomes rank-deficient. }

In the case of P$n$L, most of the research has been focused on finding solutions to P3L~\citep{Dhome_IEEE_1989,Xu2017,Wang2020} without looking at the singularity problem, to the best of our knowledge. However, recently, the singularities in P3L were determined using a tool called the $\textit{hidden robot}$ which was introduced in~\citep{briot2013minimal}. It proved to be efficient in determining the singularities of vision-based controllers applied to parallel robots and broader classes of visual servo controllers~\citep{Briot_2015_TRO,QuattroVisualServoBriotIROS13,Briot_TRO_2017,Briot_RAL17}. With this method, it is possible to compute a simplified basis for the rows of the interaction matrix, based on the realization that they can be understood as a system of lines with Pl\"ucker coordinates. For the problem of visual servoing using three image lines, the $\textit{hidden robot}$ concept was used in~\cite{Briot_RAL17} to show that the singularities appear when the camera lies on a quadric or a cubic surface.

In this paper, we first provide a geometric interpretation of the results of~\cite{Briot_RAL17} and we exploit this interpretation in order to perform the singularity analysis for the generic P4L and P5L. {In order to do so, we first identify a basis of the rows of the interaction matrix. Then, the conditions for degeneracy of this basis are obtained thanks to the use of Gr\"obner Bases~\citep{cox2013ideals}.} Furthermore, we also provide the singularity loci when the lines are subjected to orthogonality and parallelism constraints. We focus on such type of lines because they often appear in a structured environment, e.g. edge tracking, navigation in urban areas, in corridors or any buildings, and are as a result commonly used as visual features for robot control tasks~\citep{andreff2002visual}. {Finally, simulations of camera motions near singularities are performed to illustrate their impact on robot control from visual servoing and on the pose estimation problem.}


The structure of the paper is as follows: Section~\ref{section:recalls} recalls the form of the interaction matrix related to image lines and provides the computation of a simplified basis for its rows. Section~\ref{sec:singP3L} revisits the singularities in P3L~\citep{Briot_RAL17} and puts forth their geometric interpretation. Sections~\ref{sec:singP4L} and~\ref{sec:singP5L} give a complete analysis of singularities in P4L and P5L, respectively, with a focus on the special cases where the observed lines are bound by orthogonality and parallelism constraints. Section~\ref{sec:sim} presents experimental results from simulations based on the exposed singularities. Section~\ref{sec:conclu} draws conclusions.

\section{Row basis of the interaction matrix \label{section:recalls}}

In this section, we deal with the interaction matrix related to PnL which corresponds to the observation of $n$ lines in space, and the computation of a new basis for the space spanned by its rows. It results in a simplified system of equations leading to a new interpretation of the results obtained in~\citep{Briot_RAL17}.

\subsection{Recalls on the interaction model related to image lines} \label{subsec:intmat}

In what follows, and without loss of generality, we will use the standard pin-hole camera model with focal length equal to 1, and the $\mathbf{z}$-axis oriented along the optical axis. However, any other camera model based on projective geometry could be used~\citep{michel1993singularities}.

There are several possibilities for describing a line in 3D space. Here we will use the Pl\"ucker representation which is free of representation singularities, and will be helpful later for expressing a basis of the interaction matrix.

Let us start by reviewing the geometric description of lines by Pl\"ucker coordinates. A 3D line $\mathcal{L}_i$ can be characterized in the camera frame by a Pl\"ucker vector $[\mathbf{u}_i^T \ \mathbf{l}_i^T ]^T \in \mathbb{R}^6$, where $\mathbf{u}_i$ is the direction of the line, and $\mathbf{l}_i = \mathbf{x} \times \mathbf{u}_i$, with $\mathbf{x}=\vv{CM_i}$ the position vector from the focal point $C$ to any point $M_i$ on the line.
\begin{figure}[t]
	\centering
	\includegraphics[width=0.9\linewidth]{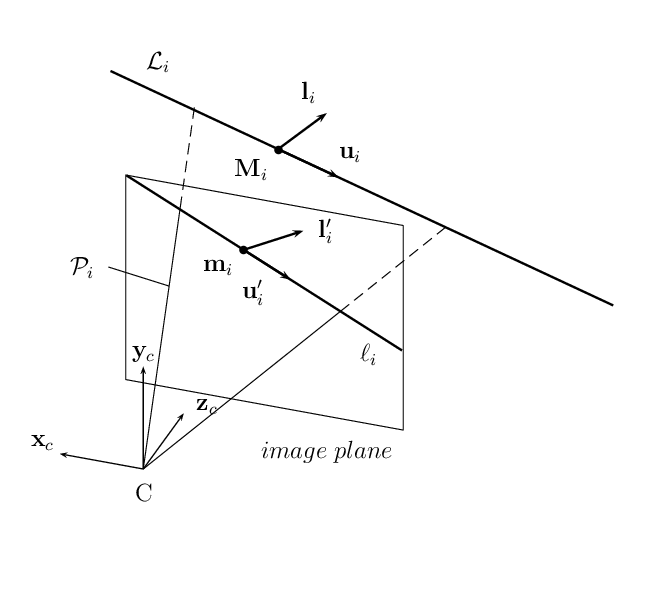}
	\caption{Observation of a line.\label{fig:Line_projection.png}}
\end{figure}

The 3D line $\mathcal{L}_i$ projects on the image plane of the camera on a 2D line $\ell_i$ (see Fig.~\ref{fig:Line_projection.png}) with Pl\"ucker coordinates $[{\mathbf{u}_i'}^T \ {\mathbf{l}_i'}^T]$, where $\mathbf{u}_i'$ is the image line direction, and $\mathbf{l}_i' = \mathbf{x}' \times \mathbf{u}_i'$ for any point on $\ell_i$ {with position vector $\mathbf{x}'$}. The coordinates of $\mathcal{L}_i$ and of its projection are related by the perspective equations~\citep{chaumette1990relation}:
\begin{equation}
  \label{eq:plucker2dline}
    \mathbf{u}_i' = \begin{bmatrix}
    u_{xi}'\\  u_{yi}'\\ u_{zi}'
    \end{bmatrix} =
    \begin{bmatrix}
    L_{yi}/\Delta_i\\ -L_{xi}/\Delta_i\\ 0
    \end{bmatrix}; \quad
    \mathbf{l}_i' = \begin{bmatrix}
    l_{xi}'\\  l_{yi}'\\ l_{zi}'
    \end{bmatrix} = \begin{bmatrix}
    L_{xi}/\Delta_i\\ L_{yi}/\Delta_i \\ L_{zi}/\Delta_i
    \end{bmatrix}
\end{equation}
where $\mathbf{l}_i=[L_{xi} \ L_{yi} \ L_{zi}]^T$ and $\Delta_i = \sqrt{L_{xi}^2 + L_{yi}^2}$ is a depth factor. The image line $\ell_i$ is fully determined from the three coordinates $l_{xi}'$, $l_{yi}'$ and $l_{zi}'$, so it suffices to use $\mathbf{l}_i'$ as the vector of features for the line in the visual servo scheme. 

A relative camera-object velocity, represented by the velocity twist $\boldsymbol{\tau}_c = [\mathbf{v}_c^T \ \boldsymbol{\omega}_c^T]^T$ expressed in the camera frame, will induce a variation of the image feature coordinates $\dot{l}_{xi}'$, $\dot{l}_{yi}'$ and $\dot{l}_{zi}'$, according to the relation
\begin{equation}
  \dot{\mathbf{l}}_i' = \mathbf{M}_{i} \boldsymbol{\tau}_c.
\end{equation}
The interaction matrix $\mathbf{M}_{i}$, which was derived in~\citep{chaumette1990relation}, has dimension $(3\times 6)$, but maximum rank $2$. Hence, we can control at maximum two degrees of freedom of the camera with each image line, and at least three lines are necessary to fully constrain the system~\citep{andreff2002visual}.

For the observation of $n>1$ lines, the interaction matrix $\mathbf{M}_{(n)}$, relating a change in the vector of features $\dot{\mathbf{s}} = [\dot{\mathbf{l}}_1^{'T}, \dots, \dot{\mathbf{l}}_n^{'T}]^T$ to the camera velocity, is obtained by stacking the three rows of $\mathbf{M}_{i}$ corresponding to each line $\mathcal{L}_i$:
\begin{equation}
\label{eq:interaction_matrix_full}
  \mathbf{M}_{(n)} = [\mathbf{M}_{1}^T, \ \mathbf{M}_{2}^T, \dots, \mathbf{M}_{n}^T]^T.
\end{equation}

\subsection{Revisiting the interaction matrix as a system of Pl\"ucker lines} \label{subsec:Plu}

Differentiating the vector $\mathbf{l}_i'$ in~\eqref{eq:plucker2dline} with respect to time we obtain:
\begin{equation}
\label{eq:ldot}
  \dot{\mathbf{l}}_i' = \frac{1}{\Delta_i^3} \begin{bmatrix} L_{yi}^2 & -L_{xi}L_{yi} & 0 \\
-L_{xi}L_{yi} & L_{xi}^2 & 0\\
-L_{xi}L_{zi} & -L_{yi}L_{zi} & \Delta_i^2
  \end{bmatrix} \begin{bmatrix} \dot{L}_{xi} \\ \dot{L}_{yi} \\ \dot{L}_{zi} \end{bmatrix} = \begin{bmatrix}  \mathbf{p}_{1}^T \\ \mathbf{p}_{2}^T \\ \mathbf{p}_{3}^T \end{bmatrix} \dot{\mathbf{l}}_i.
\end{equation}

The variation of the coordinates $\mathbf{l}_i$ of the 3D line is linked to the velocity twist of the camera by the relation: $\dot{\mathbf{l}}_i = \mathbf{v}_c \times \mathbf{u}_i + \boldsymbol{\omega}_c \times \mathbf{l}_i$~\citep{chaumette1990relation, rives1987estimation}, which can be expressed in matrix form as
\begin{equation}
\label{eq:Ldot}
  \dot{\mathbf{l}}_i = \begin{bmatrix} \left[ \mathbf{u}_i \right]_{\times}^T \ \left[ \mathbf{l}_i \right]_{\times}^T \end{bmatrix} \boldsymbol{\tau}_c.
\end{equation}
Here $\left[ \mathbf{u}_i \right]_{\times}$ and $\left[ \mathbf{l}_i \right]_{\times}$ denote the cross-product matrices associated to vectors $\mathbf{u}_i$ and $\mathbf{l}_i$. Introducing~\eqref{eq:Ldot} in~\eqref{eq:ldot}, we arrive at an expression for the interaction matrix:
\begin{equation}
\label{eq:MPlucker}
  \mathbf{M}_i = \begin{bmatrix} (\mathbf{u}_i \times \mathbf{p}_1)^T & (\mathbf{l}_i \times \mathbf{p}_1)^T \\
(\mathbf{u}_i \times \mathbf{p}_2)^T & (\mathbf{l}_i \times \mathbf{p}_2)^T\\
(\mathbf{u}_i \times \mathbf{p}_3)^T & (\mathbf{l}_i \times \mathbf{p}_3)^T \end{bmatrix}.
\end{equation}

By analysing the matrix in~\eqref{eq:ldot}, we see that the first and second rows are related by $L_{xi}\mathbf{p}_{1}+L_{yi}\mathbf{p}_{2} = 0$. Note that the row vectors $\mathbf{p}_{j}$ are all orthogonal to $\mathbf{l}_i$, and that $\mathbf{p}_{3}$ is linearly independent from $\mathbf{p}_{1}$, $\mathbf{p}_{2}$ as long as $\Delta_i \neq 0$. Since $\mathbf{l}_i$ is defined as $\mathbf{l}_i = \vv{CM_i} \times \mathbf{u}_i$, the vectors $\{\mathbf{p}_{1}, \mathbf{p}_{2}, \mathbf{p}_{3}\}$ span the same subspace as $\{\vv{CM_i},\mathbf{u}_i\}$, namely, the plane $\mathcal{P}_i$ containing the line and the focal point $C$, and whose normal has direction $\mathbf{l}_i$.

As a consequence, the vectors $\mathbf{u}_i \times \mathbf{p}_j$ and $\mathbf{l}_i \times \mathbf{p}_j$ in~\eqref{eq:MPlucker} are mutually orthogonal for each $j$ and therefore, the rows of the interaction matrix $\mathbf{M}_i$ can also be regarded as the coordinates of a system of Pl\"ucker lines.

\subsection{Change of basis for the rows of the interaction matrix} 
\label{subsec:newbasis}

The result that the rows of the matrix in~\eqref{eq:MPlucker} define a system of lines was generalized in~\citep{Briot_RAL17} to obtain a new basis $\boldsymbol{\xi}_i$ for the rows of $\mathbf{M}_i$. 

To start, the fact that vectors $\{\mathbf{p}_1, \mathbf{p}_2, \mathbf{p}_3\}$ in~\eqref{eq:ldot} span the same subspace as $\{\vv{CM_i}, \mathbf{u}_i\}$ implies that there exist two vectors $\boldsymbol{\mu}_{i1}$ and $\boldsymbol{\mu}_{i2}$, collinear with $\vv{CM_i}$ and with $\mathbf{u}_i$ respectively, which are linear combinations of $\mathbf{p}_1$, $\mathbf{p}_2$ and $\mathbf{p}_3$:
\begin{equation}
\boldsymbol{\mu}_{i1} = \sum_{j=1}^{3} a_j \cdot \mathbf{p}_j \, \propto \, \vv{CM_i}, 
\quad 
\boldsymbol{\mu}_{i2} = \sum_{j=1}^{3} b_j \cdot \mathbf{p}_j \, \propto \, \mathbf{u_i}.
\end{equation}

Assuming that the factor $\Delta_i$ in~\eqref{eq:ldot} is non-zero, there exists a transformation matrix $\mathbf{H}_i$, which is always of rank $2$,
\begin{equation}
\mathbf{H}_i = \begin{bmatrix}
a_1 & a_2 & a_3 \\ b_1 & b_2 & b_3 ,
\end{bmatrix}
\end{equation}
such that the product $\mathbf{H}_i\cdot \mathbf{M}_i$ is, after some manipulation,
\begin{equation}
\label{eq:introHM}
\mathbf{H}_i\cdot \mathbf{M}_i = \begin{bmatrix} (\mathbf{u}_i \times \boldsymbol{\mu}_{i1})^T & \left( \vv{CM_i} \times ( \mathbf{u}_i \times \boldsymbol{\mu}_{i1} ) \right)^T \\
\mathbf{0}_{1\times 3} & \left( \mathbf{u}_i \times ( \boldsymbol{\mu}_{i2} \times \vv{CM_i} ) \right)^T \end{bmatrix}.
\end{equation}


The matrix~\eqref{eq:introHM} is in fact proportional to the basis $\boldsymbol{\xi}_i$ for the rows of $\mathbf{M}_i$, obtained differently in~\citep{Briot_RAL17}, and which can be expressed as
\begin{equation}
\label{eq:basiseta}
\boldsymbol{\xi}_i = \begin{bmatrix} \boldsymbol{\xi}_{i1} \\ \boldsymbol{\xi}_{i2} \end{bmatrix} = \begin{bmatrix} \mathbf{f}_{i1}^T & (\vv{QM_i}\times \mathbf{f}_{i1})^T \\ \mathbf{0}_{1\times 3} & \mathbf{m}_{i2}^T \end{bmatrix},
\end{equation}
where 
\begin{equation}
\label{eq:fandmdirections}
\mathbf{f}_{i1} \propto \mathbf{u}_i \times \vv{CM_i}, \quad \mathbf{m}_{i2} \propto \mathbf{u}_i \times \mathbf{f}_{i1},
\end{equation}
and $Q$ is any point in space. Note that, by definition, $\mathbf{f}_{i1}$ is any vector normal to the plane $\mathcal{P}_i$ which contains the line and the focal point $C$, and $\mathbf{m}_{i2}$ is any vector orthogonal to both $\mathbf{f}_{i1}$ and the line direction $\mathbf{u}_i$. 

Let us remark that, in~\eqref{eq:basiseta}, the vector $\boldsymbol{\xi}_{i1}$ represents a straight line passing through the point $M_i$ with direction $\mathbf{f}_{i1}$, while $\boldsymbol{\xi}_{i2}$ can be regarded as a line at infinity (alias an ideal line) in the projective space, with direction $\mathbf{m}_{i2}$. 

Some observations arise from the fact that the basis $\boldsymbol{\xi}_i$ is spanned by a system of lines:
\begin{itemize}
\item Degeneracy of a system of lines is independent of the choice of point $Q$ (appearing in~\eqref{eq:basiseta}) at which the lines are expressed. However, when computing the analytical expressions, all lines must be given in reference to the same point. Note however that vector $\mathbf{f}_{i1}$ is still dependent on the location of $C$.
\item The conditions for degeneracy of any system of lines depend only on the relative configuration of the lines~\citep{Briot_RAL17, kanaan2009singularity}. Specifically, they are independent of the frame where the Pl\"ucker vectors are expressed, and therefore of the relative orientation of the object and camera frames.
\end{itemize}

In particular the second remark will be useful to simplify the computations in the following sections by assuming a constant zero orientation for the camera.

The new basis $\boldsymbol{\xi}_i$ is a valid representation so long as the depth factor $\Delta_i = \sqrt{L_{xi}^2+L_{yi}^2}$ appearing in~\eqref{eq:plucker2dline} is non-zero. This excludes only $2$ camera configurations: \textit{1)} when line $\mathcal{L}_i$ is fully contained in the plane $Z=0$ \textit{of the camera frame} and \textit{2)} when the focal point $C$ lies on the line $\mathcal{L}_i$; in both situations the coordinates $L_{xi}$ and $L_{yi}$ vanish. These are degenerate cases for which the projection mapping in~\eqref{eq:plucker2dline} is ill-defined, so we will not consider them in the sections that follow.

Finally, based on the previous results, a basis $\boldsymbol{\xi}_{(n)}$ for the full interaction matrix $\mathbf{M}_{(n)}$ is obtained by stacking the rows of~\eqref{eq:basiseta} for each line $i$:
\begin{equation} \label{eq:zetan}
\boldsymbol{\xi}_{(n)} = [\boldsymbol{\xi}_1^T \ \boldsymbol{\xi}_2^T \dots \boldsymbol{\xi}_n^T]^T \in \mathbb{R}^{2n\times 6}.
\end{equation}

For instance, a basis for the interaction matrix $\mathbf{M}_{(3)}$ corresponding to P3L is given by $\boldsymbol{\xi}_{(3)} = [\boldsymbol{\xi}_{11}^T \ \boldsymbol{\xi}_{12}^T \ \boldsymbol{\xi}_{21}^T \ \boldsymbol{\xi}_{22}^T \ \boldsymbol{\xi}_{31}^T \ \boldsymbol{\xi}_{32}^T]^T$. The singularities of this matrix were analysed algebraically in~\citep{Briot_RAL17}. These results are revisited in the next section with a geometric point of view, which will then be used to analyse the singularities both algebraically and geometrically in the observation of more than three lines. 

\section{Revisiting singularities in P3L} \label{sec:singP3L}

\subsection{Parametrization} \label{subsec:paramP3L}
Since the set of lines in the three dimensional projective space $\mathbb{P}^3$ is a four-dimensional manifold, a line can be defined using four independent parameters. In the Pl\"ucker representation of lines as described in Section~\ref{subsec:intmat}, only four out of the six Pl\"ucker coordinates are independent. {Hence, we need $12$ parameters to define three generic lines.}
The object frame $\mathcal{F}_o: (O, \mathbf{x}_o, \mathbf{y}_o, \mathbf{z}_o)$ is fixed relative to the three lines, with its axes defining an orthonormal, right-handed basis. The first line can be placed on the $\mathbf{x}_o$ axis and the second line parallel to the plane $z_o=0$ and intersecting the $\mathbf{z}_o$ axis. {Any other line can be defined using its points of intersection with any two of the three planes $x_o=0$, $y_o=0$ or $z_o=0$. So, the third line is defined using its points of intersection with the plane $x_o=0$ and $z_o=0$. This leaves us $7$ parameters to define three lines using the following two points $M_i$ and $N_i$ on each line.}
\begin{align}
		&\overrightarrow{OM_1}=[0,0,0]^T,\;\overrightarrow{ON_1}=[1,0,0]^T, \nonumber \\
		&\overrightarrow{OM_2}=[0,0,d_1]^T,\;\overrightarrow{ON_2}=[r_1,r_2,d_1]^T, \nonumber \\
		&\overrightarrow{OM_3}=[d_2,d_3,0]^T,\;\overrightarrow{ON_3}=[0,r_3,r_4]^T. 
\end{align} 
Since the rows of the interaction matrix defined in Section~\ref{subsec:newbasis} consist of some affine and ideal lines, defined using the direction vector of the lines, the parametrization can be simplified by considering the direction vector $\mathbf{u}_i=\overrightarrow{OM_i}-\overrightarrow{ON_i},\,i=1,2,3$ and changing the parameters $-r_1=s_1$, $-r_2=s_2$, $-r_4=s_4$, $d_3-r_3=s_3$ as follows:
\begin{align} 
		&\overrightarrow{OM_1}=[0,0,0]^T,\;\mathbf{u}_1=[1,0,0]^T, \nonumber \\
		&\overrightarrow{OM_2}=[0,0,d_1]^T,\;\mathbf{u}_2=[s_1,s_2,0]^T, \nonumber \\
		&\overrightarrow{OM_3}=[d_2,d_3,0]^T,\;\mathbf{u}_3=[d_2,s_3,s_4]^T. \label{eq:param}
\end{align} 
We also define the position of the focal point $C$ in frame $\mathcal{F}_o$ by the vector ${}^o\vv{OC} = [X \ Y \ Z]^T$ and define the camera frame $\mathcal{F}_c:\{C, \mathbf{x}_c,\mathbf{y}_c,\mathbf{z}_c\}$ centred at $C$, with $\mathbf{x}_c,\mathbf{y}_c,\mathbf{z}_c$ also an orthonormal basis. It was noted in Section~\ref{subsec:newbasis} that the singularity conditions of the problem are independent of the relative orientation of the object and camera frames. Hence, for the computations we will assume that $\mathcal{F}_c$ can be obtained from $\mathcal{F}_o$ by a direct translation by the vector ${}^o\vv{OC}$.

The singularity loci will be given in terms of the location of the focal point $C$ relative to the fixed object frame $\mathcal{F}_o$, that is as a set of expressions involving variables $X$, $Y$ and $Z$. If $\mathcal{F}_o$ and $\mathcal{F}_c$ do not share the same orientation, we can retrieve the position of the origin $O$ expressed in the camera frame $\mathcal{F}_c$ by introducing a new set of variables $X'$, $Y'$, $Z'$ such that
\begin{equation}
{}^c\vv{CO} = \begin{bmatrix}X' \\ Y' \\ Z'\end{bmatrix} = - {}^c\mathbf{R}_o \cdot {}^o\vv{OC} = - {}^c\mathbf{R}_o \cdot \begin{bmatrix}X \\ Y \\ Z\end{bmatrix},
\end{equation}
where the matrix ${}^c\mathbf{R}_o$ represents the relative camera-object orientation. In the following, we assume ${}^c\mathbf{R}_o$ is the identity matrix.

\subsection{Geometric interpretation of singularities in P3L} \label{subsec:geomP3L}

Singularities in the observation of three lines have already been determined algebraically~\citep{Briot_RAL17}. In this section, we provide the geometric interpretation of those results which will aid us in determining the singularities in P4L and P5L. By choosing the point $Q$ in~\eqref{eq:basiseta} as the camera center $C$, we have
\begin{equation}
	\xi_i= \left[ \begin{matrix}
		\xi_{i1} \\ \xi_{i2} \end{matrix} \right] = \left[ \begin{matrix} \mathbf{f}_{i1}^T & (\overrightarrow{CM_i} \times \mathbf{f}_{i1})^T \\ \mathbf{0}_{(1 \times 3)} & \mathbf{m}_{i2}^T
	\end{matrix} \right],
\end{equation}
where $\overrightarrow{CM_i}=\overrightarrow{OM_i}-\overrightarrow{OC}$ with $\overrightarrow{OC}=[X,Y,Z]^T$ being the position vector of the camera center. Consequently, a basis for the rows of the interaction matrix $\mathbf{M}_{(3)}$ can be obtained under the form $\boldsymbol{\xi}_{(3)} = [\boldsymbol{\xi}_{11}^T \ \boldsymbol{\xi}_{12}^T \ \boldsymbol{\xi}_{21}^T \ \boldsymbol{\xi}_{22}^T \ \boldsymbol{\xi}_{31}^T \ \boldsymbol{\xi}_{32}^T]^T$. It was proven in \citep{Briot_RAL17} that the degeneracy of the basis $\boldsymbol{\xi}_{(3)}$ and thus of the interaction matrix $\mathbf{M}_{(3)}$ occurs if and only if one of the following two conditions is satisfied:
\begin{equation}
\label{eq_conditionG}
  G = \mathbf{f}_{11}\cdot (\mathbf{f}_{21} \times \mathbf{f}_{31}) = 0
\end{equation}
or
\begin{equation}
\label{eq_conditionH}
  H = \mathbf{m}_{12}\cdot (\mathbf{m}_{22} \times \mathbf{m}_{32}) = 0,
\end{equation}
that is, when the three vectors $\mathbf{f}_{i1}$, $i=1,2,3$ (respectively, $\mathbf{m}_{i2}$) defined in \eqref{eq:fandmdirections} are parallel to the same plane. The product $G\cdot H$ is in fact the determinant of the matrix $\boldsymbol{\xi}_{(3)}$. In the case of three general lines observed in space, it was shown in~\citep{Briot_RAL17} that a singularity appears when the camera lies either on a quadric or on a cubic surface, defined respectively by~\eqref{eq_conditionG} and~\eqref{eq_conditionH}. 
The quadratic factor~\eqref{eq_conditionG} using the parametrization~\eqref{eq:param} is as follows:
\begin{dmath}\label{eq:hyp3}
	G=d_{{1}}s_{{2}}s_{{4}}XY-s_{{2}} \left( d_{{1}}s_{{3}}+d_{{3}}s_{{4}}
	\right) XZ-d_{{1}}s_{{1}}s_{{4}}{Y}^{2}+ \left( d_{{1}}s_{{1}}s_{{3}}
	+d_{{2}}s_{{2}}s_{{4}} \right) YZ-d_{{1}}s_{{4}} \left( d_{{2}}s_{{2}}
	-d_{{3}}s_{{1}} \right) Y+d_{{2}}s_{{2}} \left( d_{{3}}-s_{{3}}
	\right) {Z}^{2}-d_{{1}}d_{{2}}s_{{2}} \left( d_{{3}}-s_{{3}} \right) 
	Z.
\end{dmath}
\begin{figure}[t]
	\begin{center}
		\includegraphics[width=0.8\linewidth]{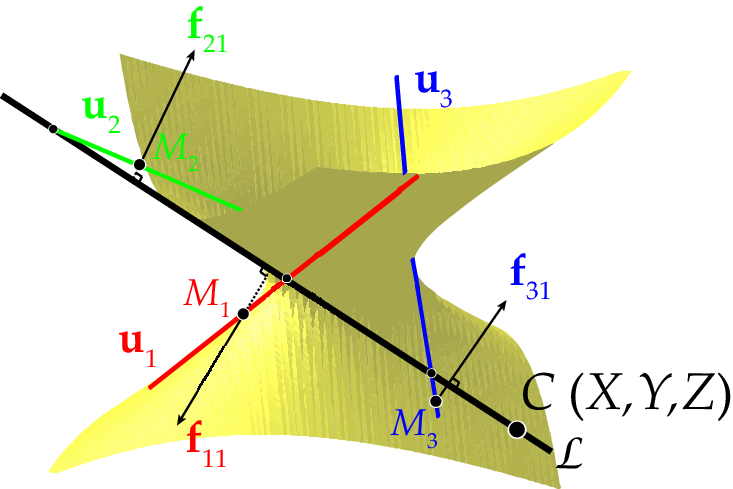}
		\caption{One of the singularities in P3L is when the camera center $C$ lies on the hyperboloid formed by the three observed lines. \label{fig:3lhyp}}
	\end{center}
\end{figure}
It is a hyperboloid of one sheet, leading to the following theorem.
\begin{theorem}
 The quadratic factor corresponding to singularities in P3L implies that the camera center lies on the hyperboloid of one sheet uniquely defined by the three observed lines.
\end{theorem}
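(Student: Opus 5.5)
Since $\mathbf{f}_{i1}$ is, by \eqref{eq:fandmdirections}, a normal vector of the plane $\mathcal{P}_i$ spanned by the focal point $C$ and the line $\mathcal{L}_i$, the condition $G=\mathbf{f}_{11}\cdot(\mathbf{f}_{21}\times\mathbf{f}_{31})=0$ in \eqref{eq_conditionG} says exactly that the three normals are coplanar. Equivalently, the three planes $\mathcal{P}_1,\mathcal{P}_2,\mathcal{P}_3$, which all pass through $C$, share a common line $\mathcal{T}$ through $C$. The plan is to turn this into a statement about transversals and then invoke the classical fact that the transversals of three pairwise skew lines sweep out the unique quadric containing them.

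\textbf{Step 1 (from $G=0$ to a transversal).} Suppose $G=0$ and assume $C$ does not lie on any $\mathcal{L}_i$; this case is already excluded by $\Delta_i\neq 0$. Then the planes $\mathcal{P}_i$ all contain a common line $\mathcal{T}$ through $C$. Since $\mathcal{T}$ and $\mathcal{L}_i$ both lie in $\mathcal{P}_i$, they meet, possibly at infinity, so $\mathcal{T}$ is a line through $C$ meeting all three lines.

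Conversely, if some line through $C$ meets all three $\mathcal{L}_i$, then that line lies in each $\mathcal{P}_i$. Hence the three normals are orthogonal to its direction, and $G=0$. So $\{G=0\}$ is the locus of points lying on a common transversal of $\mathcal{L}_1,\mathcal{L}_2,\mathcal{L}_3$.

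\textbf{Step 2 (the transversal locus is the regulus).} For three pairwise skew lines, which is the generic case of the parametrization \eqref{eq:param}, there is a unique quadric $\mathcal{Q}$ containing them. To see this, note that quadrics form a $\mathbb{P}^9$, and containing a line imposes 3 linear conditions, giving $9-9=0$ dimensions. A non-ruled or degenerate quadric cannot contain three pairwise skew lines, so $\mathcal{Q}$ is doubly ruled, and the $\mathcal{L}_i$ belong to one ruling. Every line of the other ruling meets each $\mathcal{L}_i$. Conversely, any transversal meets $\mathcal{Q}$ in at least three points and therefore lies on $\mathcal{Q}$. Thus the union of transversals is $\mathcal{Q}$, and it is a hyperboloid of one sheet over $\mathbb{R}$; a hyperbolic paraboloid arises in the special case where the three directions are coplanar.

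\textbf{Step 3 (algebraic confirmation).} Since $G$ in \eqref{eq:hyp3} is of degree 2, it suffices to check that it vanishes on each $\mathcal{L}_i$. Substituting $C=M_i+t\,\mathbf{u}_i$ from \eqref{eq:param} into \eqref{eq:hyp3} should give $G\equiv 0$ in $t$. For instance, on $\mathcal{L}_1$ we have $Y=Z=0$, and every monomial of \eqref{eq:hyp3} contains $Y$ or $Z$. Together with uniqueness of $\mathcal{Q}$ and $G\not\equiv0$, this shows $\{G=0\}=\mathcal{Q}$. Alternatively, the quadratic part of \eqref{eq:hyp3} can be checked to have the signature of a one-sheeted hyperboloid for generic parameters.

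The main obstacle is the degenerate-case bookkeeping in Step 1: the possibility that two of the planes $\mathcal{P}_i$ coincide, and intersections at infinity (parallel $\mathcal{T}$ and $\mathcal{L}_i$). I would handle these by working projectively and noting that they occur only on lower-dimensional subsets, so they do not affect the identification of the surface $G=0$. The cleanest rigorous route is the direct substitution of Step 3 combined with uniqueness of the quadric through three skew lines.
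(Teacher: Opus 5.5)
Your proof is correct and is essentially the paper's argument. $G=0$ makes the normals $\mathbf{f}_{i1}$ coplanar, so the planes $\mathcal{P}_i$ share a line through $C$; that line meets all three observed lines and is therefore a ruling of the complementary regulus of the quadric through them. The paper simply cites this classical regulus fact, whereas you also prove the converse and confirm the result algebraically in Step~3.

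Two small remarks. First, the count $9-9=0$ only gives existence of the quadric. Uniqueness actually comes from your later observation that every transversal lies on any quadric containing the three lines, together with the fact that these transversals already sweep out $\mathcal{Q}$. Second, your hyperbolic-paraboloid caveat cannot arise within~\eqref{eq:param}: there $\det[\mathbf{u}_1\ \mathbf{u}_2\ \mathbf{u}_3]=s_2s_4$, which vanishes only when two of the lines fail to be skew.
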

\begin{proof}
The quadric factor is the determinant of the upper left $(3 \times 3)$ matrix of the basis $\boldsymbol{\xi}_{(3)}$. Let the kernel of the matrix $\left[ \begin{matrix} \mathbf{f}_{11} & \mathbf{f}_{21} & \mathbf{f}_{31} \end{matrix} \right] $ be the direction vector of a line $\mathcal{L}$. Since the affine lines are represented according to~\eqref{eq:fandmdirections}, the line $\mathcal{L}$ must lie in a plane containing $\overrightarrow{CM_i}$ and $\mathbf{u}_i$, say $\mathcal{P}_i$. Consequently, $\mathcal{L}$ has to intersect all $\mathbf{u}_i$ and therefore the observed three lines~(see Fig.~\ref{fig:3lhyp}). As a result, it belongs to the complementary regulus of a hyperboloid of one sheet defined by the regulus of the observed lines~\citep[Chapter~2]{Odenhal}. $\mathcal{L}$ is called the \emph{transversal} line. Moreover, $\mathcal{L}$ must be the intersection of planes $\mathcal{P}_1$, $\mathcal{P}_2$ and $\mathcal{P}_3$. Since $C$ belongs to $\mathcal{P}_i\, \forall \, i$, it should lie in their intersection too and hence $\mathcal{L}$ has to contain $C$. 
\end{proof}
\begin{remark}\label{rem:kern}
	When the camera center $C$ lies on the hyperboloid defined by the three observed lines, the kernel of the interaction matrix $\boldsymbol{\xi}_{(3)}$ is an ideal line whose moment vector is the same as the direction vector of the line passing through $C$ and intersecting the observed lines. As a result, we face problems in control for infinitesimal translations of the camera along the transversal line.
\end{remark}
\begin{remark}\label{rem:singll}
	When the camera center $C$ lies on the hyperboloid defined by the three observed lines, the finite lines constituting the rows of the interaction matrix are all parallel to the same plane whose normal vector is along the transversal. Then, the six lines $\boldsymbol{\xi}_{i1}$ and $\boldsymbol{\xi}_{i2}$, $i=1,2,3$ are said to be in a singular linear line complex~\citep[Chapter~3]{Pottmann_Wallner}. 
\end{remark}
The latter remark assures that the kernel of the interaction matrix represents the Pl\"ucker coordinates of a line. In terms of screw theory~\citep{Hunt1987}, it is always a screw of infinite pitch.%

Unfortunately, the geometric interpretation is not straightforward when $C$ lies on the cubic surface $H=0$ of~\eqref{eq_conditionH} leading to singularities. In this case, the lines $\boldsymbol{\xi}_{i1}$ and $\boldsymbol{\xi}_{i2}$ belong to a regular linear line complex~\citep[Chapter~3]{Pottmann_Wallner}. In terms of screw theory~\citep{Hunt1987}, the kernel is no longer a line but a screw, meaning that the controllability issues arise when the camera performs this instantaneous screw motion. Additionally, unlike the hyperboloid, the cubic surface is not uniquely defined by the three observed lines. This is due to a classic result from Geometry by Arthur Cayley and George Salmon who showed in 1849 that there are 27 lines on a cubic surface~(refer to~\cite{Lazarus2014} for a proof from an algebraic geometry point of view). Therefore, computational algebraic techniques will be employed to deal with this singularity in the case of P4L.

\section{Singularities in P4L} \label{sec:singP4L}

\subsection{Parametrization}
{Following Section~\ref{subsec:paramP3L}, the first three lines are defined according to~\eqref{eq:param}.
The fourth line is defined using its two points of intersection $M_4$ and $N_4$ with the planes $x_o=0$ and $y_o=0$, respectively:  
\begin{align} 
\overrightarrow{OM_4}=[0,d_4,d_5]^T,\;\overrightarrow{ON_4}=[r_5,0,r_6]^T.
\end{align}
Thus, the direction vector of the fourth line is given by $\mathbf{u}_4=\overrightarrow{OM_4}-\overrightarrow{ON_4}$. After replacing $-r_5$ and $d_5-r_6$ by $s_5$ and $s_6$, respectively, we have
\begin{align} \label{eq:param_l4}
\overrightarrow{OM_4}=[0,d_4,d_5]^T,\;\mathbf{u}_4=[s_5,d_4,s_6]^T.
\end{align}
As mentioned in Section~\ref{subsec:paramP3L}, the assumption that the relative orientation between the camera frame $\mathcal{F}_c$ and the object frame $\mathcal{F}_o$ is zero remains valid in the following analysis.}

\subsection{Singularity analysis}

From the remarks made in Section~\ref{subsec:newbasis}, a basis $\boldsymbol{\xi}_i = [{\xi}_{i1}^T \ {\xi}_{i2}^T]^T$ for the rows of the interaction matrix for each line is computed as in~\eqref{eq:basiseta}, with point $Q$ taken as the camera center $C$, and with vectors $\mathbf{f}_{i1}$ and $\mathbf{m}_{i2}$ given by~\eqref{eq:fandmdirections}.

Singularities of the interaction matrix $\mathbf{M}_{(4)}$ of the four lines appear when the $(8\times 6)$ matrix
\begin{equation}
\label{eq:eta_4}
\boldsymbol{\xi}_{(4)} = [\boldsymbol{\xi}_1^T \ \boldsymbol{\xi}_2^T \ \boldsymbol{\xi}_3^T \ \boldsymbol{\xi}_4^T]^T,
\end{equation}
formed by stacking the rows in~\eqref{eq:basiseta} for all lines, becomes rank-deficient. This is the case if and only if all the $28$ maximal minors of~\eqref{eq:eta_4} vanish simultaneously.

All the entries of the matrix $\boldsymbol{\xi}_{(4)}$ are polynomials in the
variables $\{X,Y,Z\}$ representing the camera location, with coefficients which
are polynomials in the parameters $\boldsymbol{\eta}=\{s_1,s_2,s_3,s_4,s_5,s_6,d_1,d_2,d_3,d_4,d_5\}$. The minors
of $\boldsymbol{\xi}_{(4)}$ then form a system of $28$ polynomials $p_i$, which are said to generate an \textit{ideal} (refer to~\citep{cox2013ideals} for
more information about ideals), denoted by $\mathcal{I}_{28}=\langle p_1, p_2,
\dots, p_{28} \rangle$. An ideal is defined as follows: let $f_1, \dots, f_s$ in
$\Q[x_1, \dots, x_n]$ be a set of polynomials in the variables $x_1, \dots, x_n$
with coefficients in the field of rational numbers $\Q$; the ideal generated by
$f_1, \dots, f_s$ is the set
\begin{equation}
\langle f_1, \dots, f_s \rangle = \left\{\sum_{i}^{s} h_i \, f_i \ | \ h_i \in K[x_1, \dots, x_n]\right\}
\end{equation}
of all polynomials which are algebraic combinations of its generators. By this definition, any common solution of the system $p_1 = \dots = p_{28} = 0$ formed by the minors of $\boldsymbol{\xi}_{(4)}$ is also a solution of any polynomial in the ideal $\mathcal{I}_{28}$.

In geometric terms, the locus of complex solutions of a system of polynomial
equations is called a \textit{variety}. The (complex) solutions of all the
polynomials in $\mathcal{I}_{28}$ define a variety
$\mathbf{V}(\mathcal{I}_{28})\subset \C^{14}$, which consists of all points where
the matrix $\boldsymbol{\xi}_{(4)}$ becomes rank-deficient.

To get a better insight of $\mathbf{V}(\mathcal{I}_{28})$, we describe below how it can be split into subsets, i.e. written as the union of solution sets of simpler polynomial systems of equations.

Consider the $28$ maximal submatrices of size $(6\times 6)$ of
$\boldsymbol{\xi}_{(4)}$. There are three types of them:
\begin{equation}  \label{eq:subm1}
\boldsymbol{\xi}^{ij}_{1234}=
\begin{bmatrix}
		\mathbf{f}_{i1}^T & (\overrightarrow{CM_i} \times \mathbf{f}_{i1})^T\\
		\mathbf{f}_{j1}^T & (\overrightarrow{CM_j} \times \mathbf{f}_{j1})^T\\
		\mathbf{0}_{(1 \times 3)} & \mathbf{m}_{12}^T\\
		\mathbf{0}_{(1 \times 3)} & \mathbf{m}_{22}^T\\
		\mathbf{0}_{(1 \times 3)} & \mathbf{m}_{32}^T\\
		\mathbf{0}_{(1 \times 3)} & \mathbf{m}_{42}^T
	\end{bmatrix},
\end{equation}
\begin{equation}  \label{eq:subm2}
\boldsymbol{\xi}^{ijk}_{lmn}=
\begin{bmatrix}
		\mathbf{f}_{i1}^T & (\overrightarrow{CM_i} \times \mathbf{f}_{i1})^T\\
		\mathbf{f}_{j1}^T & (\overrightarrow{CM_j} \times \mathbf{f}_{j1})^T\\
		\mathbf{f}_{k1}^T & (\overrightarrow{CM_k} \times \mathbf{f}_{k1})^T\\
		\mathbf{0}_{(1 \times 3)} & \mathbf{m}_{l2}^T\\
		\mathbf{0}_{(1 \times 3)} & \mathbf{m}_{m2}^T\\
		\mathbf{0}_{(1 \times 3)} & \mathbf{m}_{n2}^T
	\end{bmatrix},
\end{equation}
\begin{equation} \label{eq:subm3}
\boldsymbol{\xi}^{1234}_{lm}=
\begin{bmatrix}
		\mathbf{f}_{11}^T & (\overrightarrow{CM_i} \times \mathbf{f}_{11})^T\\
		\mathbf{f}_{21}^T & (\overrightarrow{CM_j} \times \mathbf{f}_{21})^T\\
		\mathbf{f}_{31}^T & (\overrightarrow{CM_k} \times \mathbf{f}_{31})^T\\
		\mathbf{f}_{41}^T & (\overrightarrow{CM_l} \times \mathbf{f}_{41})^T\\
		\mathbf{0}_{(1 \times 3)} & \mathbf{m}_{l2}^T\\
		\mathbf{0}_{(1 \times 3)} & \mathbf{m}_{m2}^T\\
	\end{bmatrix},
\end{equation}
where $i,j,k$ and $l,m,n$ range every triplet of numbers in $\{1,2,3,4\}$.

There are six submatrices of type $\boldsymbol{\xi}^{ij}_{1234}$ and their determinants are zero since out of four lines at infinity, only three can be independent and hence, the matrix always has a rank of at most 5.

The second category of block-triangular submatrices are composed of row vectors that represent three affine lines and three lines at infinity. There are $\binom{4}{3} \times \binom{4}{3} =16$ of them. The determinants of $\boldsymbol{\xi}^{ijk}_{lmn}$ are of the form $G_{ijk} \cdot H_{lmn}$, with:
\begin{equation}
\label{eq:Gijk}
G_{ijk} = \mathbf{f}_{i1} \cdot (\mathbf{f}_{j1} \times \mathbf{f}_{k1})
, 
H_{lmn} = \mathbf{m}_{l2} \cdot (\mathbf{m}_{m2} \times \mathbf{m}_{n2}).
\end{equation}
The cases where the subindices $\{i,j,k\}$ and $\{l,m,n\}$ coincide correspond to the singularity conditions~\eqref{eq_conditionG} and~\eqref{eq_conditionH} in P3L for each triplet of lines taken individually. Thus, it is useful to note that for a singularity of the four lines, a necessary, but not sufficient condition is that each triplet of lines is in turn in a singular configuration. The determinants of matrices $\boldsymbol{\xi}^{ijk}_{lmn}$ generate an ideal $\mathcal{I}_{16} = \langle G_{123} H_{123}, \dots ,G_{234} H_{234} \rangle$ which is contained in the larger ideal $\mathcal{I}_{28}$.

There are six submatrices $\boldsymbol{\xi}^{1234}_{lm}$ of the third category. Their determinants are of degree $5$ in the ring $\Q[X,Y,Z]$. Let them generate an ideal $\mathcal{K}$.

It follows that the union of ideals $\mathcal{I}_{16}$ and $\mathcal{K}$ yields $\mathcal{I}_{28}$. Dually, the intersection of their varieties yields $\mathbf{V}(\mathcal{I}_{28})$ (see~\citep{cox2013ideals} for more information on correspondence between ideals and varieties):
\begin{equation}
\label{eq:KunionI16}
  \mathcal{I}_{28} = \mathcal{I}_{16} \cup \mathcal{K} \Longrightarrow \mathbf{V}(\mathcal{I}_{28}) = \mathbf{V}(\mathcal{I}_{16}) \cap \mathbf{V}(\mathcal{K}),
\end{equation}
{We first thoroughly analyze the ideal $\mathcal{I}_{16}$ to show how it can be further decomposed into two sub-ideals and then incorporate the analysis of $\mathcal{K}$.} Note that all mathematical derivations shown below can be followed in the attached Maple file.
 Since a solution of the polynomials in $\mathcal{I}_{28}$ must also be a solution for the polynomials in $\mathcal{I}_{16}$, we say that the variety $\mathbf{V}(\mathcal{I}_{28})$ is contained in $\mathbf{V}(\mathcal{I}_{16})$:
\begin{equation}
\label{eq:VI28inVI16}
  \mathcal{I}_{16} \subseteq \mathcal{I}_{28} \Longrightarrow \mathbf{V}(\mathcal{I}_{28})\subseteq \mathbf{V}(\mathcal{I}_{16}),
\end{equation}
although $\mathbf{V}(\mathcal{I}_{16})$ may contain points outside $\mathbf{V}(\mathcal{I}_{28})$.

The ideal $\mathcal{I}_{16}$ can be factorized as the product of two simpler ideals: 
{\begin{equation}
\label{eq:VI16_VGpVH}
\mathcal{I}_{16} = \mathcal{G}\times \mathcal{H},
\end{equation}
where
\begin{align}
    \mathcal{G} = & \langle \mathcal{G}_{123},\, \mathcal{G}_{124},\, \mathcal{G}_{134},\, \mathcal{G}_{234} \rangle, \label{eq:idealG_4lines} \\
    \mathcal{H} = & \langle \mathcal{H}_{123},\, \mathcal{H}_{124},\, \mathcal{H}_{134},\, \mathcal{H}_{234} \rangle. \label{eq:idealH_4lines}
\end{align}}
It implies that the variety $\mathbf{V}(\mathcal{I}_{16})$ is the union of two smaller sets: $\mathbf{V}(\mathcal{I}_{16})=\mathbf{V}(\mathcal{G})\cup \mathbf{V}(\mathcal{H})$. That is, the polynomials in $\mathcal{I}_{16}$ vanish whenever $G_{ijk}=0$ for all $i,j,k$; or when $H_{lmn}=0$ for all $l,m,n$. {As a consequence, we can rewrite~\eqref{eq:KunionI16} as
\begin{align}
    \label{eq:KuGiH}
 \mathbf{V}(\mathcal{I}_{28}) & = \left(\mathbf{V}(\mathcal{G}) \cup \mathbf{V}(\mathcal{H})\right) \cap \mathbf{V}(\mathcal{K}) \nonumber \\
  & = \left(\mathbf{V}(\mathcal{G}) \cap \mathbf{V}(\mathcal{K})\right) \cup \left(\mathbf{V}(\mathcal{H}) \cap \mathbf{V}(\mathcal{K})\right).
\end{align}
The variety $\mathbf{V}(\mathcal{G})$ defined by the ideal in~\eqref{eq:idealG_4lines} in $\C[X,Y,Z]$ describes the intersection between four hyperboloids, while $\mathbf{V}(\mathcal{H})$ defined by~\eqref{eq:idealH_4lines} is the intersection of four cubic surfaces.}

We can analyze the sub-varieties on the right hand side of~\eqref{eq:KuGiH} separately. First, we can check if  $\mathbf{V}(\mathcal{K})$ or a component of it lies in $\mathbf{V}(\mathcal{G})$ or $\mathbf{V}(\mathcal{H})$. For instance, if $\mathbf{V}(\mathcal{G}) \subset \mathbf{V}(\mathcal{K})$ then the analysis is much simpler since the intersection between those varieties would just yield $\mathbf{V}(\mathcal{G})$. 
So, to check if the varieties share any component, we need to know the following definitions.

Ideals can have different bases: there are many different
polynomials which generate the same family of polynomial combinations. A particularly useful basis for an ideal $\mathcal{I}$ is the Gr\"obner Basis~\citep{cox2013ideals}. It is defined as a set of polynomials $gb = \{g_1, \dots, g_s\}$ in $\mathcal{I}$ whose leading monomials $LM(g_i)$ generate the same ideal as all the leading monomials in $I$, that is, $\langle LM(g_1), \dots, LM(g_s) \rangle = \langle LM(\mathcal{I}) \rangle$. A Gr\"obner Basis $gb$ is only dependent on any set of generators of $I$ and an ordering of the monomials ``$\succ$", which must be specified \textit{a priori}, and is unique up to certain algorithmic reductions.
One can refer to~\citep{cox2013ideals} for detailed information on the properties of Gr\"obner Bases. A useful consequence of this definition is that any polynomial $f$ can be written as an algebraic combination of the elements of $gb$ plus a residue term $\overline{f}^{gb}$ which is unique: $f=\sum_1^s h_i\ g_i + \overline{f}^{gb}$. This residue $\overline{f}^{gb}$ is called the \textit{normal form} of $f$ w.r.t. the basis $gb$, and it vanishes if
and only if the polynomial $f$ belongs in the ideal $\mathcal{I}$. Hence, when $\overline{f}^{gb}=0$, the solutions of the system $g_1=\dots=g_s=0$ are also solutions of $f=0$. We can use this to find the Gr\"obner bases of ideals $\mathcal{G}$ and $\mathcal{H}$ and then to find the normal form of polynomials in $\mathcal{K}$ w.r.t to those bases. This will help us find which of the solutions of the ideal $\mathcal{I}_{16}= \mathcal{G} \times \mathcal{H}$ are also solutions of the larger ideal $\mathcal{I}_{28}$. In what follows, the analysis is done separately for varieties $\mathbf{V}(\mathcal{G}) \cap \mathbf{V}(\mathcal{K})$ and $\mathbf{V}(\mathcal{H}) \cap \mathbf{V}(\mathcal{K})$ of~\eqref{eq:KuGiH}.

\subsection{{Analysis of the variety $\mathbf{V}(\mathcal{G}) \cap \mathbf{V}(\mathcal{K})$}} \label{subsec:VG}

Let $gb_G=\{g_1, \dots, g_s\}$ be a Gr\"obner Basis of the ideal $\mathcal{G}$ with respect to \textit{pure lexicographical} monomial ordering~{(denoted by $\succ_{lex}$; see~\citep{cox2013ideals} for information about using different monomial orderings in the computation of Gr\"obner Bases)} with $X\succ_{lex} Y\succ_{lex} Z\succ_{lex} s_1\succ_{lex} \dots \succ_{lex} s_6\succ_{lex} d_1\succ_{lex} \dots \succ_{lex} d_5$. Here, we are assuming that our polynomials lie in the ring $\mathbb{Q}[X,Y,Z, \boldsymbol{\eta}]$ of polynomials in variables $X,Y,Z$ and $\boldsymbol{\eta}$ with rational coefficients, where $\boldsymbol{\eta}=\{d_1,\ldots,d_5,s_1,\ldots,s_6 \}$. By considering $s_1,\dots,d_5$ as variables, our Gr\"obner Bases computations will be valid under any specialization of the parameters.

{Knowing $gb_{G}$, we can compute the normal form of the polynomials $p_i \in \mathcal{K},\, i=1,\ldots,6$ to check if the varieties $\mathbf{V}(\mathcal{G})$ and $\mathbf{V}(\mathcal{K})$ share any component. It can be done using a multivariate polynomial division algorithm (see the attached Maple file) and we find that they vanish in every case: $\overline{p_i}^{\langle g_1, \dots, g_s\rangle} = 0$.} 
{The consequence is that any common solution of the system $g_1 = \dots = g_s = 0$ is also a solution of the system $p_1 = \dots = p_{6} = 0$ and is therefore a singular point of the matrix $\boldsymbol{\xi}_{(4)}$. In terms of algebraic
varieties, this can be written as
\begin{equation}
\label{eq:VGinVI28}
\mathbf{V}(\mathcal{G})\subseteq \mathbf{V}(\mathcal{K}) \textrm{ and hence } \mathbf{V}(\mathcal{G}) \cap \mathbf{V}(\mathcal{K}) = \mathbf{V}(\mathcal{G}) .
\end{equation}
As a result, Equation~\eqref{eq:KuGiH} can be updated as
\begin{equation}
    \label{eq:VGunionrest}
 \mathbf{V}(\mathcal{I}_{28}) = \mathbf{V}(\mathcal{G})  \cup \left(\mathbf{V}(\mathcal{H}) \cap \mathbf{V}(\mathcal{K})\right),
\end{equation}
implying that one of the singularities in P4L is when the camera center lies on the intersection of the four hyperboloids given by the variety $\mathbf{V}(\mathcal{G})$.}

{Additionally}, this result can be geometrically interpreted as follows. The basis of the interaction matrix $\mathbf{M}_{(4)}$ in this case consists of four affine lines and four ideal lines. {As we know from Section~\ref{sec:singP3L}, one of the singularities in P3L is when the three affine lines belong to a singular linear line complex~(meaning that they are parallel to the same plane; see Remark~\ref{rem:singll}). When this happens, the kernel of the interaction matrix is a line at infinity (see Remark~\ref{rem:kern}) and the camera center lies on a line that intersects all the three observed lines. Similarly, in the case of P4L, $\mathbf{V}(\mathcal{G})$ results in four affine lines of $\boldsymbol{\xi}_{(4)}$ being parallel to the same plane so that its kernel is a line at infinity. Hence, we can expect that a singularity occurs when the camera center lies on a line that intersects the four observed lines.
In fact, this is true for singularities in P$n$L for any $n \geq 3$.
\begin{theorem}\label{thm:transversals}
     A singular configuration of P$n$L is when the camera center $C$ lies on the line(s) intersecting the observed lines.
\end{theorem}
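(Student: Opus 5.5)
The plan is to exhibit an explicit nonzero vector in the kernel of $\boldsymbol{\xi}_{(n)}$ whenever $C$ lies on a transversal $\mathcal{L}$ of the observed lines. Since $\boldsymbol{\xi}_{(n)}$ spans the same row space as $\mathbf{M}_{(n)}$ (Section~\ref{subsec:newbasis}, assuming all $\Delta_i\neq 0$), this forces $\operatorname{rank}\mathbf{M}_{(n)}<6$. Guided by Remark~\ref{rem:kern}, the candidate kernel element is the pure translation twist along the transversal, $\boldsymbol{\tau}=[\mathbf{v}^T\ \mathbf{0}^T]^T$ with $\mathbf{v}$ the direction of $\mathcal{L}$. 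With the ordering of \eqref{eq:basiseta}, the first three entries of each row multiply $\mathbf{v}_c$ and the last three multiply $\boldsymbol{\omega}_c$, so I need $\boldsymbol{\xi}_{i1}\boldsymbol{\tau}=\mathbf{f}_{i1}\cdot\mathbf{v}$ and $\boldsymbol{\xi}_{i2}\boldsymbol{\tau}=0$ for every $i$.

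First, the rows $\boldsymbol{\xi}_{i2}=[\mathbf{0}\ \mathbf{m}_{i2}^T]$ annihilate $\boldsymbol{\tau}$ automatically, because its angular part is zero. Second, for the affine rows, fix $i$ and let $\mathcal{P}_i$ be the plane through $C$ and $\mathcal{L}_i$, with normal $\mathbf{f}_{i1}\propto\mathbf{u}_i\times\vv{CM_i}$ by \eqref{eq:fandmdirections}. The transversal $\mathcal{L}$ passes through $C$ and meets $\mathcal{L}_i$ at some point $P_i$. Hence $\mathcal{L}$ joins two points of $\mathcal{P}_i$ and lies in $\mathcal{P}_i$, so $\mathbf{v}\propto\vv{CP_i}$ is a combination of $\vv{CM_i}$ and $\mathbf{u}_i$. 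Therefore $\mathbf{f}_{i1}\cdot\mathbf{v}=0$ for all $i=1,\dots,n$. This matches the P3L theorem, where the transversal was the common line of the planes $\mathcal{P}_i$.

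Together these give $\boldsymbol{\xi}_{(n)}\boldsymbol{\tau}=\mathbf{0}$ with $\boldsymbol{\tau}\neq\mathbf{0}$, so every $6\times6$ minor vanishes. Geometrically, all $\mathbf{f}_{i1}$ are orthogonal to $\mathbf{v}$, so any three of them are coplanar and every $G_{ijk}=0$; for $n=4$ this places $C$ in $\mathbf{V}(\mathcal{G})$, consistent with \eqref{eq:VGinVI28}. Translating back through the row-space equivalence, $\mathbf{M}_{(n)}\boldsymbol{\tau}=0$: a translation along the transversal produces no first-order change in any image line.

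The main obstacle is the degenerate cases. If $\mathcal{L}$ is parallel to some $\mathcal{L}_i$ (meeting it at infinity), the same argument still holds, since $\mathbf{v}\propto\mathbf{u}_i\in\mathcal{P}_i$. If $C$ lies on some $\mathcal{L}_i$, or $\mathcal{L}_i$ lies in the camera's $Z=0$ plane, then $\Delta_i=0$ and the basis is invalid; these cases are excluded by the standing assumption. I will also remark that for $n\ge 4$ the transversals may be finitely many (two for four generic lines, possibly complex) or absent, and the theorem then asserts nothing. Finally, the vector $\mathbf{v}$ may be complex; because the kernel argument is purely linear, it holds over $\C$ as well.
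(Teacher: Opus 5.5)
Your proof is correct and follows essentially the same route as the paper. Both arguments exhibit the pure translation along the transversal as a kernel vector of $\boldsymbol{\xi}_{(n)}$: the paper writes it as the ideal line $(\mathbf{0},\mathbf{l})$, you write it as the twist $[\mathbf{v}^T\ \mathbf{0}^T]^T$, and both show $\mathbf{f}_{i1}\cdot\mathbf{v}=0$ for every $i$. The paper gets this orthogonality from the triple-product coplanarity condition for two intersecting lines, while you use the equivalent fact that the transversal lies in each plane $\mathcal{P}_i$, and you treat the ideal rows, the parallel case and the $\Delta_i\neq 0$ assumption more explicitly.
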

\begin{proof}
	Let the observed lines be $\mathcal{L}_1,\mathcal{L}_2,\ldots,\mathcal{L}_n$. Given a line $\mathcal{L}$ on which $C$ lies, the distance between lines $\mathcal{L}$ and $\mathcal{L}_i$ is given by
	\begin{equation}
		d_{\mathcal{L}\mathcal{L}_i}=\left\Vert \frac{(\hat{\bf l} \times \hat{{\bf l}}_i)^T}{||\hat{\bf l} \times \hat{{\bf l}}_i||} (\overrightarrow{OC}-\overrightarrow{OM_i}) \right\Vert,
	\end{equation}
where $M_i$ is a point on line $\mathcal{L}_i$, $\hat{\bf l}$ and $\hat{\bf l}_i$ are unit direction vectors of lines $\mathcal{L}$ and $\mathcal{L}_i$, respectively. $\mathcal{L}$ and $\mathcal{L}_i$ intersect when 
\begin{align}
      & (\mathbf{l} \times \mathbf{l}_i )^T(\overrightarrow{OC}-\overrightarrow{OM_i})=0, \\
      & (\mathbf{l}_i \times \overrightarrow{CM_i} )^T\mathbf{l}=0.
\end{align}
	It follows from~\eqref{eq:fandmdirections} that $\mathbf{f}_{i1}^T\mathbf{l}=0,\, i=1,\ldots,n$. Thus, for $n$ lines, the matrix $\boldsymbol{\xi}_{(n)}$ in~\eqref{eq:zetan} has a kernel which represents a line at infinity with Pl\"ucker coordinates $(\mathbf{0},\mathbf{l})$ and hence it is singular. 
\end{proof}
For $n=3$, the locus of lines intersecting the three observed lines is a hyperboloid of one sheet. For $n=4$, in the generic case, it is two lines and the observed lines belong to a linear line congruence~\citep{Pottmann_Wallner}. For $n=5$, it is just a line and the observed lines belong to a linear line complex~\citep{Pottmann_Wallner}.}

\begin{figure}[t]
	\centering
	\includegraphics[width=0.8\linewidth]{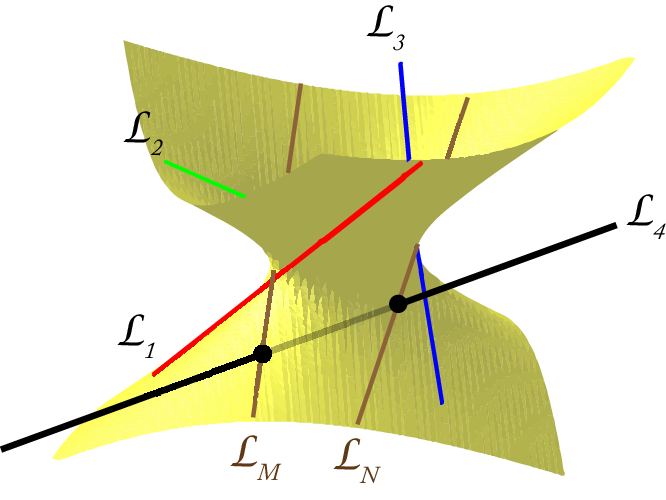}
	\caption[simple]{Four observed lines $\mathcal{L}_i,\,i=1,2,3,4$ in a hyperbolic congruence leading to two singular lines $\mathcal{L}_M$ and $\mathcal{L}_N$.\label{fig:cong}}
\end{figure}

{Moreover, the variety of the ideal $\mathcal{G}$ in~\eqref{eq:idealG_4lines} consists of two lines which are also the intersections of the four hyperboloids $\mathcal{G}_{123}=\mathcal{G}_{124}=\mathcal{G}_{134}=\mathcal{G}_{134}=0$. These transversal lines can be real or complex.}

In the real domain $\mathbb{R}[X,Y,Z]$, the intersection of the four hyperboloids i.e. $\mathbf{V}(\mathcal{G})$ can be an empty set, a line or two lines. If we consider the hyperboloid defined by the first three observed lines, assuming the fourth line does not lie entirely on the hyperboloid, it can intersect the hyperboloid in $0,1$ or $2$ points. Then, the four lines are said to be in an elliptic, a parabolic or a hyperbolic line congruence, respectively~\citep{Pottmann_Wallner}. A case of hyperbolic congruence is shown in Fig.~\ref{fig:cong}. A line passing through the point of intersection and lying on the hyperboloid intersects all four lines. By forcing the four observed lines to be in an elliptic congruence, we can avoid these singularities {as follows}.

By finding the Gr\"obner basis of the ideal $\mathcal{G}$ with respect to the monomial ordering $Y\succ Z \succ X$, we obtain four polynomials, of which the first polynomial is as follows:
\begin{align} \label{eq:discr}
	&a_2Y^2+a_1Y+a_0=0, \\
	& \textrm{ where }  \nonumber \\
	&a_2=d_{{1}}s_{{4}} \left( d_{{2}}s_{{2}}s_{{6}}-d_{{3}}s_{{1}}s_{{6}}-d_{{
			4}}d_{{5}}s_{{1}}+d_{{4}}s_{{1}}s_{{6}}+d_{{5}}s_{{2}}s_{{5}} \right), \nonumber \\
   &a_1= ( d_{{1}}d_{{2}}d_{{3}}s_{{2}}s_{{6}}-d_{{1}}d_{{2}}d_{{4}}s_{{2
   }}s_{{4}}-d_{{1}}d_{{2}}s_{{2}}s_{{3}}s_{{6}}+d_{{1}}d_{{3}}d_{{4}}s_{
   	{1}}s_{{4}} \nonumber \\
   &\quad +d_{{1}}d_{{4}}d_{{5}}s_{{1}}s_{{3}}-d_{{1}}d_{{4}}s_{{1}}s
   _{{3}}s_{{6}}-d_{{1}}d_{{4}}s_{{2}}s_{{4}}s_{{5}}-d_{{1}}d_{{5}}s_{{2}
   }s_{{3}}s_{{5}}\nonumber \\
& \quad +d_{{2}}d_{{4}}d_{{5}}s_{{2}}s_{{4}}-d_{{2}}d_{{4}}s_{{
   		2}}s_{{4}}s_{{6}}-d_{{3}}d_{{5}}s_{{2}}s_{{4}}s_{{5}} ) Z, \nonumber \\
   &a_0=-d_{{4}}s_{{2}} ( d_{{1}}d_{{2}}d_{{3}}-d_{{1}}d_{{2}}s_{{3}}-d_{
   	{1}}s_{{3}}s_{{5}}-d_{{2}}d_{{3}}d_{{5}}+d_{{2}}d_{{3}}s_{{6}}\nonumber \\
   & \quad +d_{{2}}
   d_{{5}}s_{{3}}-d_{{2}}s_{{3}}s_{{6}}-d_{{3}}s_{{4}}s_{{5}} ) {Z}
   ^{2}. \nonumber
\end{align}

{Since the variety of $\mathcal{G}$ represents two real or complex lines, the quadratic element~\eqref{eq:discr} must factorize into two linear polynomials which represent the planes containing the two lines that are transversal to the four observed lines (the remaining elements of the Gr\"obner basis can be used to deduce the equations of the two lines that constitute $\mathbf{V}(\mathcal{G})$; see Appendix~A for an example). These planes and hence the transversals lying on them are either real or complex depending on the sign of the discriminant of~\eqref{eq:discr}. The equation~\eqref{eq:discr} has no real solutions if the discriminant
\begin{equation}
    \Delta=a_1^2-4a_2a_0=Z^2f(\boldsymbol{\eta})<0,
\end{equation}
where $f(\boldsymbol{\eta})$ is a function of the parameters $\boldsymbol{\eta}$. Since $Z^2 \geq 0$, a necessary condition that there are no real transversals intersecting the four observed lines can be given by an inequality solely in terms of the Pl\"ucker coordinates of the four observed lines. This helps us avoid the one dimensional singularities due to $\mathbf{V}(\mathcal{G})$. This is quite useful as we will see in the next section that the remaining singularities due to $\mathbf{V}(\mathcal{H}) \cap \mathbf{V}(\mathcal{K})$ are only of dimension zero, implying that they are isolated points.}

{{\bf All in all, in the case of P4L, we have a singularity when $C$ lies on any transversal line that intersects the four observed lines. The transversal appears as the intersection of the hyperboloids defined by four triplets of the observed lines.}

{{\bf Moreover, by forcing the four observed lines to be in an elliptic congruence, we can make sure that their transversals are not real and therefore avoid the one dimensional singularities.}}

\subsection{{Analysis of the variety $\mathbf{V}(\mathcal{H}) \cap \mathbf{V}(\mathcal{K})$}} \label{subsec:VH}

The analysis of the component {$\mathbf{V}(\mathcal{H}) \cap \mathbf{V}(\mathcal{K})$} is slightly more involved (see the attached Maple file). We obtain a Gr\"obner Basis $gb_H=\{h_1, \dots, h_t\}$ for the ideal $\mathcal{H}$ w.r.t. the ordering with $X \succ_{lex} Y \succ_{lex} Z$,
and compute the normal form of the minors $p_i \in \mathcal{K}$ with respect to it. Note that we now consider polynomials in the ring $\mathbb{Q}(\boldsymbol{\eta})[X,Y,Z]$ of polynomials in $X$, $Y$ and $Z$ alone. This time, the residues are polynomials in $X$, $Y$ and $Z$ with coefficients that depend on the parameters:
\begin{equation}
\label{eq:4lines_NMH}
\overline{p_i}^{\langle h_1, \dots, h_t\rangle} = f_i \neq 0 \in \mathbb{Q}(\boldsymbol{\eta})[X,Y,Z], \ i=1,\ldots,6.
\end{equation}
{Since the residues $f_i$ do not vanish, unlike~\eqref{eq:VGinVI28}, $\mathbf{V}(\mathcal{H}) \not \subseteq \mathbf{V}(\mathcal{K})$. It implies that any common solution of the system $h_1 = \dots = h_t = 0$ is not a solution of the system $p_1 = \dots = p_{6} = 0$.
However, the analysis of $\mathbf{V}(\mathcal{H}) \cap \mathbf{V}(\mathcal{K})$ can be simplified by noting that $\mathbf{V}(\mathcal{K})$ contains the four observed lines and their two transversals. This is because the matrix $\boldsymbol{\xi}^{1234}_{lm}$ in~\eqref{eq:subm3} loses rank if $C$ lies on the observed four lines or their transversals. As a consequence, $\mathbf{V}(\mathcal{H}) \cap \mathbf{V}(\mathcal{K})$ might contain points on the four observed lines and their two transversals. As we know from Section~\ref{subsec:VG} that they are the singularity loci corresponding to $\mathbf{V}(\mathcal{G})$, we would like to remove them from the variety $\mathbf{V}(\mathcal{H}) \cap \mathbf{V}(\mathcal{K})$. We know that these six lines must lie in the union of four hyperboloids $\mathbf{V}(\mathcal{G}_{123}) \cup \mathbf{V}(\mathcal{G}_{124}) \cup \mathbf{V}(\mathcal{G}_{134}) \cup \mathbf{V}(\mathcal{G}_{234})$ that appear in~\eqref{eq:idealG_4lines}. 
Therefore, we can remove each hyperboloid from $\mathbf{V}(\mathcal{H}) \cap \mathbf{V}(\mathcal{K})$ to obtain the remaining singularities. In algebraic geometry terms, removing one variety from the other amounts to finding the difference of the varieties~\citep{cox2013ideals}.}

The \emph{set difference} of two affine varieties is generally not an affine variety but an open subset of a variety. It cannot be written as the set of solutions of a system of polynomial equations (it is not an affine variety). The smallest affine variety which contains it, is called the \emph{Zariski closure} of the difference, denoted with an overline. Loosely speaking, taking the Zariski closure amounts to patching up the holes in the open set. Therefore, in this case, we need to find the following Zariski closure of the difference:
\begin{equation}\label{eq:VF}
	\mathbf{V}(\mathcal{F})=\bigcap_{i=1}^{4} \overline{ \left(\mathbf{V}(\mathcal{K}) \cap \mathbf{V}(\mathcal{H}) \right) \setminus \mathbf{V}(S_i)},
\end{equation}
{where $S_i$ is an element of $\{\mathcal{G}_{123}$, $\mathcal{G}_{124}$, $\mathcal{G}_{134}$, $\mathcal{G}_{234}\}$.}

Using the correspondence between ideals and varieties, an ideal defining $\overline{V \setminus W}$ where $V$ and $W$ are affine varieties is obtained as the \emph{saturation} of an ideal $I$ defining $V$ with an ideal $J$ defining $W$. It is denoted by $I:J^\infty$. The colon symbolizes the \emph{quotient} of one ideal w.r.t. the other, and it removes factors from the polynomials in the first ideal which appear as polynomials themselves in the second ideal. The infinity symbol corresponds to considering all products of arbitrarily many polynomials in the second ideal. It can be done in a computational environment such as Maple or Singular and we obtain:
\begin{equation}\label{eq:idealF}
	\mathcal{F}=\bigcup_{i=1}^{4}\left(\mathcal{H} \cup \mathcal{K} \right) :S_i^{\infty},
\end{equation}

Due to a large number of variables leading to heavy computations, we did not succeed in determining $\mathcal{F}$ in~\eqref{eq:idealF} using the above mentioned approach. Therefore, Appendix~A shows an example where $\mathbf{V}(\mathcal{H}) \cap \mathbf{V}(\mathcal{K})$ is analysed for some specialization of the parameters $s_1,\ldots, s_6$, $d_1, \ldots, d_5$. 

Since we are dealing here with polynomial systems, we know that for 
almost all values of the parameters, the specialized systems have all the same number
of complex solutions~\citep{cox2013ideals}. More precisely, there
exists a polynomial $B$ depending on the parameters, such that when specializing the
parameters outside the zero set of $B$, the number of complex solutions to the system
that we obtain remains invariant.

In our analysis, we have observed that when specializing the parameters to random values and removing those solutions lying on the lines, one 
always obtains $10$ complex solutions. This indicates that for generic values of the parameters (outside this zero set of polynomial B), there are at most $10$ isolated 
singularities in the case of P4L. {Appendix~A shows one such example where a random specialization of parameters yields $10$ complex solutions of which $6$ are real.}

Now that the generic case is treated, let us deal with a more specific case. Indeed, it is often the case that the observed lines in an environment are constrained with orthogonality and/or parallelism. {This} special case is considered in the next section and the singularities are determined with the proposed approach without specializing any parameters.

\subsection{Singularities in P4L with orthogonality and parallelism} \label{subsec:P4Lop}

We consider three mutually orthogonal lines $\mathcal{L}_1$, $\mathcal{L}_2$ and $\mathcal{L}_3$, and a fourth one $\mathcal{L}_4$ with direction parallel to $\mathcal{L}_1$. The parametrization~\eqref{eq:param} and~\eqref{eq:param_l4} cannot be used in this context since we need the lines to only intersect one of the planes $x_o=0$, $y_o=0$ or $z_o=0$. The object frame $\mathcal{F}_o: (O, \mathbf{x}_o, \mathbf{y}_o, \mathbf{z}_o)$ is fixed relative to the four lines, with its axes defining an orthonormal, right-handed basis, and such that $\mathbf{x}_o$ is collinear to $\mathcal{L}_1$ and $\mathcal{L}_4$; $\mathbf{y}_o$ is collinear to $\mathcal{L}_2$, and $\mathbf{z}_o$ is collinear to $\mathcal{L}_3$. With this parametrization in the object frame, the direction vector $\mathbf{u}_i$ of the four lines and the coordinates of points $M_i$ belonging to each of them are given by:
\begin{align} \label{eq:param2}
		&\overrightarrow{OM_1}=[0,0,0]^T,\;\mathbf{u}_1=[1,0,0]^T, \nonumber \\
		&\overrightarrow{OM_2}=[0,0,d_1]^T,\;\mathbf{u}_2=[0,1,0]^T, \nonumber \\
		&\overrightarrow{OM_3}=[d_2,d_3,0]^T,\;\mathbf{u}_3=[0,0,1]^T, \nonumber \\
		&\overrightarrow{OM_4}=[0,d_4,d_5]^T,\;\mathbf{u}_4=[1,0,0]^T.
\end{align}

{Following the analysis done in the preceding section, the varieties $\mathbf{V}(\mathcal{G})$ and $\mathbf{V}(\mathcal{H}) \cap \mathbf{V}(\mathcal{K})$ will be analyzed separately.}

{In this context, the ideal $\mathcal{G}$ in~\eqref{eq:idealG_4lines} is calculated as follows:}
\begin{dmath}
    \mathcal{G}  = \langle XY(d_{{1}}-d_{{5}})-XZ(d_{{3}}-d_{{4}})-X(d_{{1}}d_{{4}}-d_{{3}}d_{{5}})+YZd_{{2}}-Yd_{{1}}d_{{2}}-Zd_{{2}}d_{{4}}+d_{{1}}d_{{2}}d_{{4}}, \,
 \left( -d_{{3}}+Y \right)  \left( Yd_{{5}}-Zd_{{4}} \right) , \, - \left( -d_{{1}}+Z \right)  \left( Yd_{{5}}-Zd_{{4}} \right) , \, XYd_{{1}}-XZd_{{3}}+YZd_{{2}}-Yd_{{1}}d_{{2}} \rangle.
\end{dmath}
{According to Theorem~\ref{thm:transversals}, we expect the positive dimensional singularities corresponding to $\mathbf{V}(\mathcal{G})$ to be the transversals that intersect the four observed lines. It can be verified by finding the Gr\"obner basis $gb_{G}$ of $\mathcal{G}$ w.r.t. $Z\succ_{lex} Y \succ_{lex} X$. It consists of four elements $\{g_1,g_2,g_3,g_4\}$, where the first element factors as follows:}
\begin{align} \label{eq:gbGv}
    g_1=\left( Z-d_1 \right)\left( Yd_5-Zd_4 \right).
\end{align}
By substituting the factors into the other elements of $gb_{G}$, $\mathcal{G}_v$ can be decomposed into two sub-ideals {$\mathcal{M}$ and $\mathcal{N}$} whose varieties correspond to the two transversal lines intersecting all the four observed lines:
\begin{align} \label{eq:transversal_special}
    \mathcal{G} & =\mathcal{M} \cap \mathcal{N}, \textrm{ where }\nonumber \\
    \mathcal{M} & =\langle Z-d_1, Y-d_3  \rangle, \nonumber \\
      \mathcal{N} & =\langle Yd_5-Zd_4, X(d_1d_4-d_3d_5)+Yd_2d_5-d_1d_2d_4  \rangle.
\end{align}
{In the generic case of P4L, we showed that the one dimensional singularities can be avoided by choosing the lines such that they satisfy~\eqref{eq:discr}. It is a condition on the discriminant of the quadratic polynomial that appears as the first element of the Gr\"obner basis of $\mathcal{G}$. Similarly, here, the first polynomial of $gb_{G}$ in~\eqref{eq:gbGv} is quadratic in $Z$ and its discriminant is $(Yd_5-d_1d_4)^2$, which is always positive.} Hence the two transversal lines given by~\eqref{eq:transversal_special} are always real and the singularity cannot be avoided when the four observed lines adhere to the orthogonality and parallelism conditions of this section.

{To analyse the remaining singularities, we need to determine the variety $\mathbf{V}(\mathcal{H}) \cap \mathbf{V}(\mathcal{K})$.
To do so, the ideal $\langle \mathcal{H}, \mathcal{K} \rangle$ is considered (it is too large to be displayed here) and its Gr\"obner basis calculated:}
\begin{dmath} \label{eq:gbHvfi}
gb_{HK}=\{Z \left( -d_{{5}}+Z \right)  \left( Zd_{{4}}-d_{{3}}d_{{5}} \right) 
 \left( -d_{{1}}+Z \right) 
,\ Yd_{{5}}-Zd_{{4}},-d_{{1}}d_{{2}}d_{{4}} \left( d_{{1}}d_{{4}}+d_{{3}}d_{{5}}-2\,d_{{4}}
d_{{5}} \right)  \left( d_{{1}}d_{{4}}-d_{{3}}d_{{5}} \right) +
 \left( d_{{1}}d_{{4}}+d_{{3}}d_{{5}}-2\,d_{{4}}d_{{5}} \right) 
 \left( d_{{1}}d_{{4}}+d_{{3}}d_{{5}} \right)  \left( d_{{1}}d_{{4}}-d
_{{3}}d_{{5}} \right) X-d_{{2}}d_{{4}} \left( {d_{{1}}}^{2}{d_{{4}}}^{
2}-2\,d_{{1}}d_{{3}}d_{{4}}d_{{5}}-2\,d_{{1}}{d_{{4}}}^{2}d_{{5}}+{d_{
{3}}}^{2}{d_{{5}}}^{2}-2\,d_{{3}}d_{{4}}{d_{{5}}}^{2} \right) Z+4\,d_{
{2}}{d_{{4}}}^{3}{Z}^{3}-2\,d_{{2}}{d_{{4}}}^{2} \left( d_{{1}}d_{{4}}
+d_{{3}}d_{{5}}+2\,d_{{4}}d_{{5}} \right) {Z}^{2} \}.
\end{dmath}

\begin{figure}[t]
	\centering
	\includegraphics[width=0.8\linewidth]{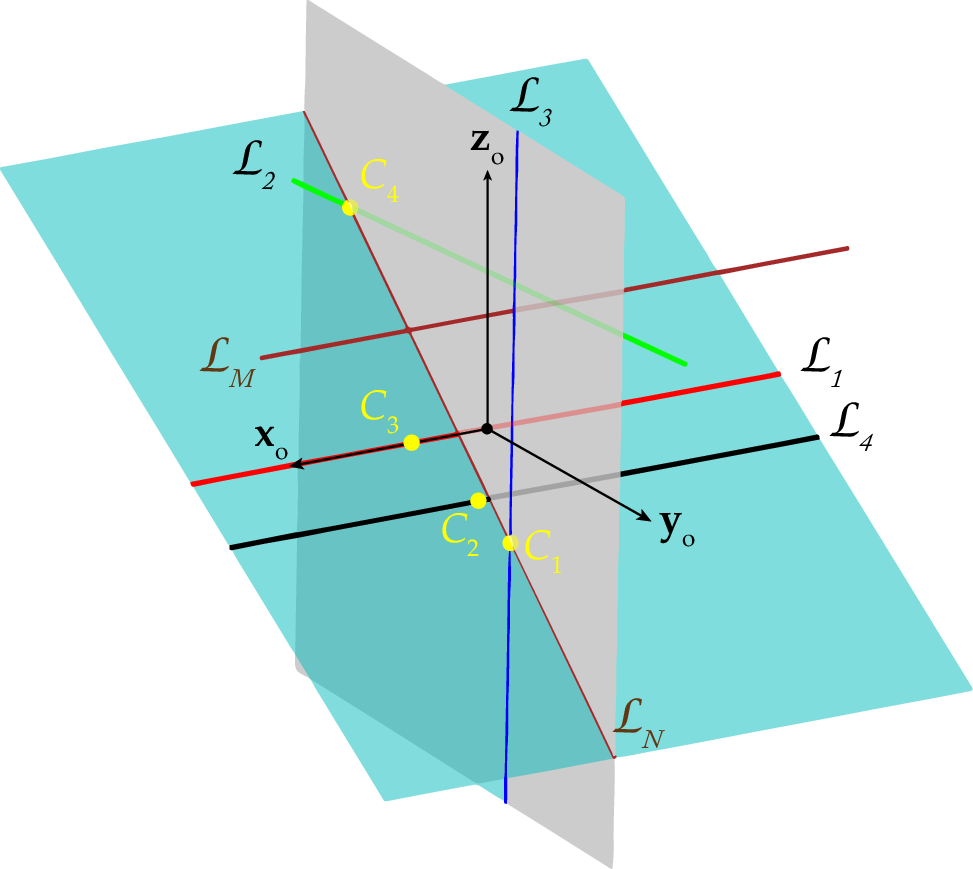}
	\caption[simple]{Singularities in P4L with orthogonality and parallelism constraints: Four observed lines $\mathcal{L}_i,i=1,2,3,4$ and their traversals $\mathcal{L}_M$ and $\mathcal{L}_N$. \label{fig:orthpar}}
\end{figure}
{The variety of $gb_{HK}$ is zero dimensional with degree 4 (see the attached Maple file). It implies that it is made up of 4 points in $\mathbb{C}[X,Y,Z]$ as shown in Fig.~\ref{fig:orthpar} whose coordinates are as follows:
\begin{align}
    &C_1=\left( 0, \dfrac{d_1d_4}{d_5},d_1 \right) \; ; \;  C_2=\left( \dfrac{(d_1-d_5)d_2d_4}{d_1d_4+d_3d_5-2d_4d_5},d_4,d_5 \right); \nonumber \\
     &C_3=\left(\dfrac{d_1d_2d_4}{d_1d_4+d_3d_5},0,0 \right) \; ; \; C_4=\left(d_2,d_3, \dfrac{d_3d_5}{d_4} \right).
\end{align}
}As mentioned in Section~\ref{subsec:VH}, some or all of these points might lie on the four observed lines or their two singular transversals, which we know for sure belong to the singularity loci. To acknowledge that, the saturation ideal $\mathcal{F}$ of~\eqref{eq:idealF} can be determined to check if $C_i \in \mathbf{V}(\mathcal{F})$, because only then, $C_i$ is a singularity locus. However, the Gr\"obner basis of $\mathcal{F}$ yields $\{ 1 \}$. By Hilbert's Nullstellensatz, when the Gr\"obner basis of an ideal is $\{ 1 \}$, its generators do not have a common solution~\citep{cox2013ideals} and hence, $\mathbf{V}(\mathcal{F})=\emptyset$. Thus, $C_i \not \in \mathbf{V}(\mathcal{F})$ for any $i=1,2,3,4$ implying that these points indeed lie on the four observed lines ($\mathcal{L}_1,\mathcal{L}_2,\mathcal{L}_3,\mathcal{L}_4$) or their transversals ($\mathcal{L}_M,\mathcal{L}_N$) as shown in Fig.~\ref{fig:orthpar} for some randomly chosen parameters $d_1,..,d_5$.}

{\bf Therefore, for the special case of P4L considered in this section, the only singularities are when $C$ lies on the four observed lines or their two transversal lines.}

{In fact, the results in this section remain valid under any permutation of the four observed lines and hence the analysis entails singularities in the case of four lines subject to any other orthogonality and parallelism constraints.} 

Let us now deal with the case of five lines observed by a camera.

\section{Singularities in P5L} 
\label{sec:singP5L}

\subsection{Parametrization}
{Let us consider the first three lines defined by the parametrization~\eqref{eq:param} and the fourth one by~\eqref{eq:param_l4}. In the same vein, the fifth line is defined using its two points of intersections $M_5$ and $N_5$ with the planes $y_o=0$ and $z_o=0$, respectively:  
\begin{align} 
\overrightarrow{OM_5}=[d_6,0,d_7]^T,\;\overrightarrow{ON_5}=[r_7,r_8,0]^T.
\end{align}
Thus, the direction vector of the fourth line is given by $\mathbf{u}_5=\overrightarrow{OM_5}-\overrightarrow{ON_5}$. After changing the variables $-r_8=s_8$ and $d_6-r_7=s_7$, we have
\begin{align} \label{eq:param_l5}
\overrightarrow{OM_5}=[d_6,0,d_7]^T,\;\mathbf{u}_5=[s_7,s_8,d_7]^T.
\end{align}
As mentioned in Section~\ref{subsec:paramP3L}, the orientation of the camera frame $\mathcal{F}_c$ and of the object frame $\mathcal{F}_o$ is considered the same also for the following analysis.}

\subsection{Singularity analysis}

From Section~\ref{subsec:newbasis}, we know that the rows of the interaction matrix $\mathbf{M}_{(5)}$ associated with the five lines represent a system of Pl\"ucker lines and that a basis for this system of lines can be expressed as $\boldsymbol{\xi}_{(5)} = [\boldsymbol{\xi}_{11}^T \ \boldsymbol{\xi}_{12}^T \ \dots \ \boldsymbol{\xi}_{51}^T \ \boldsymbol{\xi}_{52}^T]^T \in \mathbb{R}^{10\times 6}$ with $\boldsymbol{\xi}_{i1}$ and $\boldsymbol{\xi}_{i2}$ given by~\eqref{eq:basiseta}.

Let us then consider the ideal $\mathcal{I}_{210}$ generated by the maximal minors of $\boldsymbol{\xi}_{(5)}$, which in this case forms a system of $210$ polynomials $p_i$ in the variables $X,Y,Z$ and the parameters $\boldsymbol{\eta}=\{s_1,\ldots,s_8, d_1,\ldots,d_7\}$: $\mathcal{I}_{210} = \langle p_1, \dots, p_{210} \rangle$. 

Following a similar analysis of Section~\ref{sec:singP4L}, the 210 minors can be divided into three categories. $55$ of them are zero since there are $55$ submatrices containing four or five ideal lines. These matrices always have a rank at most 5 since only three ideal lines can be independent.

The second category are the determinants of block triangular matrices, composed of row vectors that represent three affine lines and three ideal lines. There are $\binom{5}{3} \times \binom{5}{3} =100$ of them.
Let these minors generate a subideal $\mathcal{I}_{100} \subseteq \mathcal{I}_{210}$. The generators of $\mathcal{I}_{100}$ are of the form $p_i = G_{ijk} H_{lmn}$, that are the products of the polynomials in~\eqref{eq:Gijk}, with the indices $\{i,j,k\}$ and $\{l,m,n\}$ ranging all triplets of numbers in $\{1,2,3,4,5\}$. Therefore, this ideal is the product of two smaller ideals: $\mathcal{I}_{100} = \mathcal{G} \times \mathcal{H}$, generated by $10$ polynomials each:
\begin{equation}
\label{eq:GijkHlmn_5lines}
\mathcal{G} = \langle G_{123},\dots, G_{235} \rangle, \quad \mathcal{H} = \langle H_{123},\dots, H_{235} \rangle.
\end{equation}

The generators of $\mathcal{G}$ and $\mathcal{H}$ are too long to be given here but $\mathcal{G}$ (resp. $\mathcal{H}$) is generated by polynomials of degree $2$ (resp. $3$) in $\Q[X,Y,Z]$ since its variety corresponds to the singular hyperboloids (resp. cubic surfaces) of P3L~(see Section~\ref{subsec:geomP3L}).

The remaining $55$ minors with degree 5 each in $\{X,Y,Z\}$ constitute the last category. Let them generate a subideal $\mathcal{K}_{55} \subseteq \mathcal{I}_{210}$.

{As before, we deduce that the solution set of the polynomials in $\mathcal{I}_{210}$ is contained in a larger variety which is the union of two varieties (see~\eqref{eq:KuGiH}):
\begin{align}
    \label{eq:KuGiHP5P}
 \mathbf{V}(\mathcal{I}_{210}) =  \left(\mathbf{V}(\mathcal{G}) \cap \mathbf{V}(\mathcal{K}_{55})\right) \cup \left(\mathbf{V}(\mathcal{H}) \cap \mathbf{V}(\mathcal{K}_{55})\right).
\end{align}
Our strategy will be to use the geometrical interpretation of previous sections wherever possible or else to use Gr\"obner Bases computations and multivariate polynomial division to analyse the varieties $\left(\mathbf{V}(\mathcal{G}) \cap \mathbf{V}(\mathcal{K}_{55})\right)$ and $\left(\mathbf{V}(\mathcal{H}) \cap \mathbf{V}(\mathcal{K}_{55})\right)$ separately.} Again, all mathematical derivations can be followed on the attached Maple file.

\subsection{{Analysis of the variety $\left(\mathbf{V}(\mathcal{G}) \cap \mathbf{V}(\mathcal{K}_{55})\right)$}} \label{subsec:VG5}

Let $gb_G = \langle g_1, \dots, g_s \rangle$ be a Gr\"obner Basis for the ideal
$\mathcal{G}$ with respect to the aforementioned lexicographical monomial ordering. We are again treating the parameters $\boldsymbol{\eta} = \{d_1,\dots, d_7, s_1,\ldots,s_8\}$ as variables here; that is, we are considering the polynomials in the ring $\mathbb{Q}[X,Y,Z,\boldsymbol{\eta}]$. Then, we can compute the normal form of the polynomials $p_i \in \mathcal{K}_{55}$ with respect to this basis to find $\overline{p_i}^{\langle g_1, \dots, g_s
\rangle}$ for all $i$. {We find $p_i=0$ for any $i$, implying that $\left(\mathbf{V}(\mathcal{G}) \cap \mathbf{V}(\mathcal{K}_{55})\right) = \mathbf{V}(\mathcal{G})$.} We already know from Theorem~\ref{thm:transversals} that $\mathbf{V}(\mathcal{G})$ should be a line that intersects the five observed lines. We cannot always find a line that intersects the given five lines unless they belong to a singular linear line complex~\citep{Pottmann_Wallner}. Thus, there must be a condition on the parameters such that $\mathbf{V}(\mathcal{G}) \neq \emptyset$. To find it, we consider a matrix with rows consisting of the Pl\"ucker coordinates of the observed lines $[\mathbf{u}_i,\ \overrightarrow{{OM}_i} \times \mathbf{u}_i]$ whose moment vectors are defined by considering the point $Q$ in~\eqref{eq:basiseta} as the origin of the object frame $O$:
\begin{equation}
     \mathbf{L}_5= \left[ \begin {array}{cccccc} 1&0&0&0&0&0\\ \noalign{\medskip}s_{{1}}
&s_{{2}}&0&d_{{1}}s_{{2}}&-d_{{1}}s_{{1}}&0\\ \noalign{\medskip}d_{{2}
}&s_{{3}}&s_{{4}}&-s_{{4}}d_{{3}}&s_{{4}}d_{{2}}&d_{{2}}d_{{3}}-d_{{2}
}s_{{3}}\\ \noalign{\medskip}s_{{5}}&d_{{4}}&s_{{6}}&d_{{4}}d_{{5}}-d_
{{4}}s_{{6}}&-s_{{5}}d_{{5}}&s_{{5}}d_{{4}}\\ \noalign{\medskip}s_{{7}
}&s_{{8}}&d_{{7}}&s_{{8}}d_{{7}}&d_{{6}}d_{{7}}-d_{{7}}s_{{7}}&-s_{{8}
}d_{{6}}\end {array} \right] .
\end{equation}
A line intersects the observed five lines only if the kernel $\mathbf{k}=[k_1,k_2,k_3,k_4,k_5,k_6]^T$ of the matrix $\mathbf{L}_5$ satisifies the Pl\"ucker relation $k_1k_4+k_2k_5+k_3k_6=0$. The first row of $\mathbf{L}_5$ imposes $k_1=0$. Eliminating $k_i$ from the remaining four equations $\mathbf{L}_5\mathbf{k}=0$ and the Pl\"ucker relation leads to a polynomial $h$ of degree 13 solely in terms of parameters $\boldsymbol{\eta}$ (see the attached Maple file).


{\bf It implies that there are no one dimensional singularities for P5L when the four observed lines are generic. Some conditions on the relative configurations of the lines must be satisfied for a transversal to exist so that the incidence of the camera center $C$ on it makes it the singularity locus.}

The results of this section are substantiated through an example in Appendix~B.

\subsection{{Analysis of the variety $\mathbf{V}(\mathcal{H}) \cap \mathbf{V}(\mathcal{K}_{55})$}} \label{subsec:VH5}

Due to the complexity of equations, it was not possible to compute a Gr\"obner Basis $gb_H = \{h_1, \dots, h_t\}$ for the ideal generated by 
$\mathcal{H}$ in $\mathbb{Q}(\boldsymbol{\eta})[X,Y,Z]$. Thus, the normal forms of the minors $p_i \in \mathcal{K}_{55}$, $\overline{p_i}^{\langle h_1, \dots, h_3\rangle} = f_i$ with respect to the basis $gb_H$ could not be calculated. {Hence, $\mathbf{V}(\mathcal{H}) \cap \mathbf{V}(\mathcal{K}_{55})$ is analysed in Appendix~B for some arbitrarily chosen parameters $\boldsymbol{\eta}$.} As in the end of Section~\ref{sec:singP4L}, this indicates that there exists, in the 
space of parameters, the zero set of some polynomial, depending on the parameters, such that, specializing the parameters outside this zero set yields no isolated singularity {as substantiated by an example in Appendix~A}.



Like P4L, it applies to P5L as well that, often, the observed lines in an environment are constrained with orthogonality and/or parallelism. One of these special cases is considered in the next section and the singularities are determined with the proposed approach without specializing any parameters.

\subsection{Singularities in P5L with orthogonality and parallelism} \label{subsec:P5Lop}

As a continuation of Section~\ref{subsec:P4Lop}, let us consider a fifth line $\mathcal{L}_5$, which is assumed to be collinear with axis $\mathbf{y_o}$. This way, we have three orthogonal lines $\mathcal{L}_1$, $\mathcal{L}_2$ and $\mathcal{L}_3$, a line $\mathcal{L}_4$ parallel to $\mathcal{L}_1$, and a line $\mathcal{L}_5$ parallel to $\mathcal{L}_2$. The location of lines $\mathcal{L}_1$ to $\mathcal{L}_4$ relative to frame $\mathcal{F}_o$ is still given by~\eqref{eq:param2}, and we parametrize $\mathcal{L}_5$ by
\begin{equation}
\label{eq:param_5lines}
\vv{OM_5} = [ d_6, \ 0, \ d_7 ]^T, \, \mathbf{u}_5 = [0, \ 1, \ 0]^T.
\end{equation}
As before, if the line $\mathcal{L}_5$ was instead given parallel to line $\mathcal{L}_3$, these parametrization will still be valid upon a redefinition of the object frame $\mathcal{F}_o$ and the renaming of the lines.


{Following the analysis done in the preceding section, the varieties $\mathbf{V}(\mathcal{G})$ and $\mathbf{V}(\mathcal{H}) \cap \mathbf{V}(\mathcal{K}_{55})$ will be analysed separately.} {In this context, the ideal $\mathcal{G}$ in~\eqref{eq:GijkHlmn_5lines} is calculated as follows:}
\begin{dmath}\label{eq:Gv5}
    \mathcal{G}  = \langle -XYd_{{5}}+XYd_{{7}}-XZd_{{3}}+XZd_{{4}}+Xd_{{3}}d_{{5}}-Xd_{{4}}d_{{
7}}+YZd_{{2}}-YZd_{{6}}-Yd_{{2}}d_{{7}}+Yd_{{5}}d_{{6}}-Zd_{{2}}d_{{4}
}+Zd_{{3}}d_{{6}}+d_{{2}}d_{{4}}d_{{7}}-d_{{3}}d_{{5}}d_{{6}}, \left( 
-d_{{5}}+Z \right)  \left( Xd_{{1}}-Xd_{{7}}+Zd_{{6}}-d_{{1}}d_{{6}}
 \right) ,- \left( -d_{{2}}+X \right)  \left( Xd_{{1}}-Xd_{{7}}+Zd_{{6
}}-d_{{1}}d_{{6}} \right) ,XYd_{{1}}-XYd_{{5}}-XZd_{{3}}+XZd_{{4}}-Xd_
{{1}}d_{{4}}+Xd_{{3}}d_{{5}}+YZd_{{2}}-Yd_{{1}}d_{{2}}-Zd_{{2}}d_{{4}}
+d_{{1}}d_{{2}}d_{{4}}, \left( -d_{{7}}+Z \right)  \left( Yd_{{5}}-Zd_
{{4}} \right) ,-XYd_{{7}}+XZd_{{3}}-YZd_{{2}}+YZd_{{6}}+Yd_{{2}}d_{{7}
}-Zd_{{3}}d_{{6}}, \left( -d_{{3}}+Y \right)  \left( Yd_{{5}}-Zd_{{4}}
 \right) ,-Z \left( Xd_{{1}}-Xd_{{7}}+Zd_{{6}}-d_{{1}}d_{{6}} \right) 
,- \left( -d_{{1}}+Z \right)  \left( Yd_{{5}}-Zd_{{4}} \right) ,XYd_{{
1}}-XZd_{{3}}+YZd_{{2}}-Yd_{{1}}d_{{2}} \rangle.
\end{dmath}

We expect that the one dimensional singularity corresponding to $\mathbf{V}(\mathcal{G})$ must be the transversal that intersects the five observed lines according to Theorem~\ref{thm:transversals}. However, we know from Section~\ref{subsec:VG5} that the parameters used to define the five observed lines must satisfy a condition for this transversal line to exist. As before, we can determine it by imposing the Pl\"ucker relation on the kernel of the matrix whose rows are the Pl\"ucker coordinates of the observed lines $[\mathbf{u}_i,\ \overrightarrow{OM_i} \times \mathbf{u}_i]$ parametrized by~\eqref{eq:param2} and~\eqref{eq:param_5lines}. This condition leads to the following polynomial in terms of the parameters that should be zero.
\begin{align}
    h(\boldsymbol{\eta})=d_{{5}} \left( d_{{1}}-d_{{7}} \right)  \left( d_{{1}}d_{{2}}d_{{4}}-d
_{{1}}d_{{4}}d_{{6}}-d_{{2}}d_{{4}}d_{{7}}+d_{{3}}d_{{5}}d_{{6}}
 \right). 
\end{align}
This polynomial can also be derived by finding the Gr\"obner basis $gb_{G}$ of $\mathcal{G}$ in~\eqref{eq:Gv5} w.r.t. the ordering $Z\succ_{lex} Y \succ_{lex} X \succ_{lex} d_1 \succ_{lex} \ldots \succ_{lex} d_7$. It consists of ten elements of which the first element is exactly $h(\boldsymbol{\eta})$. We look for conditions when $h=0$ so that $\mathbf{V}(\mathcal{G}_v) \neq \emptyset$.

When the first factor of $h$ vanishes, i.e. $d_5=0$, we get the line as the variety of the following ideal:
\begin{align}
    \mathcal{M}_{1}=\langle Z, X-d_2 \rangle.
\end{align}
When $d_1-d_7=0$, we get
\begin{align}
    \mathcal{M}_{2}=\langle Z-d_1, Y-d_3 \rangle.
\end{align}
Finally, when the third factor of $h$ vanishes, we have
\begin{align}
    \mathcal{M}_{3}=\langle Yd_5-Zd_4, X(d_1-d_7)+Zd_6-d_1d_6 \rangle.
\end{align}

\begin{figure}[t]
	\centering
	\includegraphics[width=0.8\linewidth]{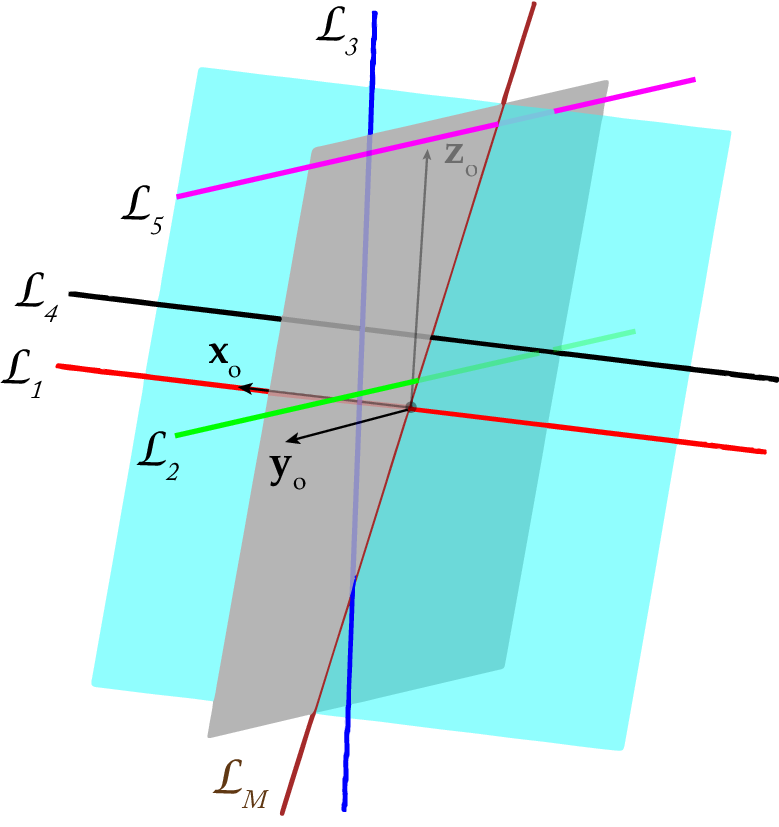}
	\caption[simple]{Singularities in P5L with orthogonality and parallelism constraints: Five observed lines $\mathcal{L}_i,i=1,2,3,4,5$ and their traversal $\mathcal{L}_M$. \label{fig:orthpar5}}
\end{figure}

Figure~\ref{fig:orthpar5} shows the third case where a line $\mathbf{V}(\mathcal{M}_3)$ intersects all five observed lines. 

To analyse the remaining singularities, we need to determine $\mathbf{V}(\mathcal{H}) \cap \mathbf{V}(\mathcal{K}_{55})$. {It amounts to analysing the ideal $\langle \mathcal{H}, \mathcal{K}_{55} \rangle$ (too large to be displayed here)}. Its Gr\"obner basis yields $\{1\}$ implying that the variety is null and hence there are no isolated singularities in the generic case. {However, it is possible that there are special relative configurations of the five lines for which $\mathbf{V}(\mathcal{H}) \cap \mathbf{V}(\mathcal{K}_{55}) \neq \emptyset$. As mentioned in Section~\ref{subsec:VH5}, it was not possible to find these configurations of the observed lines when they are generic, due to the computational complexity. Nonetheless, in this context, the constraints on the observed lines reduce the complexity and hence we are able to find the conditions on the parameters $d_i,\,i=1,\dots,7$ such that $\mathbf{V}(\mathcal{H}) \cap \mathbf{V}(\mathcal{K}_{55}) \neq \emptyset$. To do so, $\mathbf{V}(\mathcal{H})$ is first calculated by finding the Gr\"obner basis $gb_{H}$ of $\mathcal{H}$ w.r.t. the monomial order $Z\succ_{lex} Y \succ_{lex} X$. It contains 3 elements and is of dimension $0$. So, its variety is a point:}
\begin{align}
   &X={\frac {d_{{1}}d_{{2}}d_{{4}}+d_{{1}}d_{{4}}d_{{6}}-d_
{{2}}d_{{4}}d_{{7}}-d_{{3}}d_{{5}}d_{{6}}}{2 \left( d_{{1}}-d_{{7}}
 \right) d_{{4}}}}, \nonumber \\
 & Y=-{\frac {d_{{1}}d_{{2}}d_{{4}}-d_{{1}}d_{{4}
}d_{{6}}-d_{{2}}d_{{4}}d_{{7}}-d_{{3}}d_{{5}}d_{{6}}}{2d_{{5}}d_{{6}}}}
, \nonumber \\
 & Z=-{\frac {d_{{1}}d_{{2}}d_{{4}}-d_{{1}}d_{{4}}d_{{6}}-d_{{2}}d_
{{4}}d_{{7}}-d_{{3}}d_{{5}}d_{{6}}}{2d_{{4}}d_{{6}}}} .
\end{align}

{For this point to be a singularity, it should also belong to the variety $\mathbf{V}(\mathcal{H}) \cap \mathbf{V}(\mathcal{K}_{55})$ and hence it should absolutely lie in the variety $\mathbf{V}(\mathcal{K}_{55})$. Substituting 
the values of ${X,Y,Z}$ in $\mathcal{K}_{55}$ leaves 36 non-zero polynomials solely in terms of parameters $d_1,\dots,d_7$ (see the Maple file). They constitute the conditions for $\mathbf{V}(\mathcal{H}) \cap \mathbf{V}(\mathcal{K}_{55}) \neq \emptyset$.} 

{\bf Therefore, for the special case of P5L considered in this section, generically, there are no singularities. The singularities appear as a line and/or as a point for some special relative configurations of the five lines.}

{Again, the results in this section remain valid under any permutation of the five observed lines and hence the analysis entails singularities in the case of five lines subject to any other orthogonality and parallelism constraints.}
\begin{table}[t]
\caption[]{Different cases of singularities in P4L and P5L.}	\label{table:cases}
	\begin{tabular}{p{0.5cm}|p{2.6cm}|p{4cm}}
		\toprule
		Cases & Subcases & Singularity configurations\\
		\midrule%
		P3L & Three skew lines & 
		\begin{tabular}{p{4cm}}%
		$C$ lies on the hyperboloid of one sheet uniquely defined by the observed lines or on a cubic surface that contains the three lines
		\end{tabular} \\ 
		\hline
		\multirow{4}{*}[-1.2em]{P4L} & \begin{tabular}{p{2.5cm}}%
		 Four lines in a hyperbolic congruence%
		\end{tabular} & \begin{tabular}{p{4cm}}%
		$C$ lies on two affine lines intersecting the four observed lines and up to 10 real points
		\end{tabular} \\ \cline{2-3}
		& \begin{tabular}{p{2.5cm}}%
		 Four lines in a parabolic congruence%
		\end{tabular} & \begin{tabular}{p{4cm}}%
		$C$ lies on an affine line intersecting the four observed lines and up to 10 real points
		\end{tabular} \\ \cline{2-3}
		& \begin{tabular}{p{2.5cm}}%
		 Four lines in an elliptic congruence%
		\end{tabular} & \begin{tabular}{p{4cm}}%
		Up to 10 real points
		\end{tabular} \\ \cline{2-3}
    & \begin{tabular}{p{2.5cm}}%
		 With orthogonality and parallelism constraints%
		\end{tabular} & \begin{tabular}{p{4cm}}
		$C$ lies on the two affine lines intersecting the four observed lines
		\end{tabular}\\
		\hline
		\multirow{2}{*}[-1.2em]{P5L} & \begin{tabular}{p{2.5cm}}%
		 Five lines in a regular linear line complex%
		\end{tabular} & \begin{tabular}{p{4cm}}%
		No singularities
		\end{tabular} \\ \cline{2-3}
		& \begin{tabular}{p{2.5cm}}%
		 Five lines in a singular linear line complex%
		\end{tabular} & \begin{tabular}{p{4cm}}%
		$C$ lies on the line intersecting the five observed lines 
		\end{tabular} \\ \cline{2-3}
    & \begin{tabular}{p{2.5cm}}%
		 With orthogonality and parallelism constraints%
		\end{tabular} & \begin{tabular}{p{4cm}}
		No singularities; \\
		Special case: A line and/or a point
		\end{tabular}\\
		\bottomrule
	\end{tabular} 
\end{table}

{All cases corresponding to the singular configurations when observing three, four and five lines are summarized in Table~\ref{table:cases}.}

\section{Simulation results}
\label{sec:sim}

This section illustrates the impact that the exposed singularities have on the behavior of basic image-based visual servoing {and pose determination algorithms}.


From the results of Sections~\ref{sec:singP4L} and~\ref{sec:singP5L}, we designed a series of experiments where a free-flying camera is simulated and controlled in visual servoing from the observation of lines. {Another set of tests show the result of classical algorithms for pose estimation from lines when the camera is controlled in open-loop (the camera motion is specified beforehand).} All the simulations were performed using the visual servoing library ViSP~\citep{Marchand05b}.

\subsection{Singularities in P4L}
\label{sec::sims_P4L}

Let us consider four lines defined by a point and a direction~\eqref{eq:param}, and let us specialize the parameters arbitrarily as follows:
\begin{align} \label{eqs:params}
& s_1=4,\ s_2=-5, \ s_3=7, \ s_4=3, \ s_5=-2,\ s_6=13, \nonumber \\
& \ d_1=2, \ d_2=3, \ d_3=5, \ d_4=1, \ d_5=7.
\end{align}

For this configuration, as shown in Section~\ref{subsec:VG}, the four lines are in a hyperbolic congruence (see Fig.~\ref{fig:cong}) and there exist two lines $\mathcal{L}_M$ and $\mathcal{L}_N$ transversal to $\mathcal{L}_1, \dots, \mathcal{L}_4$, which have the following Pl\"ucker coordinates:
\begin{equation} \label{eq::sims_P4L_transversal_lines}
\begin{aligned}
    &\mathcal{L}_M = [0.0830 \ 0.4989 \ -0.8627 \ 0 \
    -0.9641 \ -0.5575];\\
    &\mathcal{L}_N = [0.2067 \ -0.03902 \ -0.9776 \ 0 \ -0.3509 \ 0.01401].\\
\end{aligned}
\end{equation}

The singularity loci of the interaction matrix consists of the two lines $\mathcal{L}_M$, $\mathcal{L}_N$ plus $6$ isolated points which belong in the variety $\mathbf{V}(\mathcal{H}, \mathcal{K})$ (see Section~\ref{sec::sims_4lines_isolated_points}).\\


\subsubsection{VS towards a desired pose near singularity line $\mathcal{L}_M$}
\label{section:simulation_line_LM}

\begin{figure}[t]
	\centering
	\includegraphics[width=90mm, height=64mm]{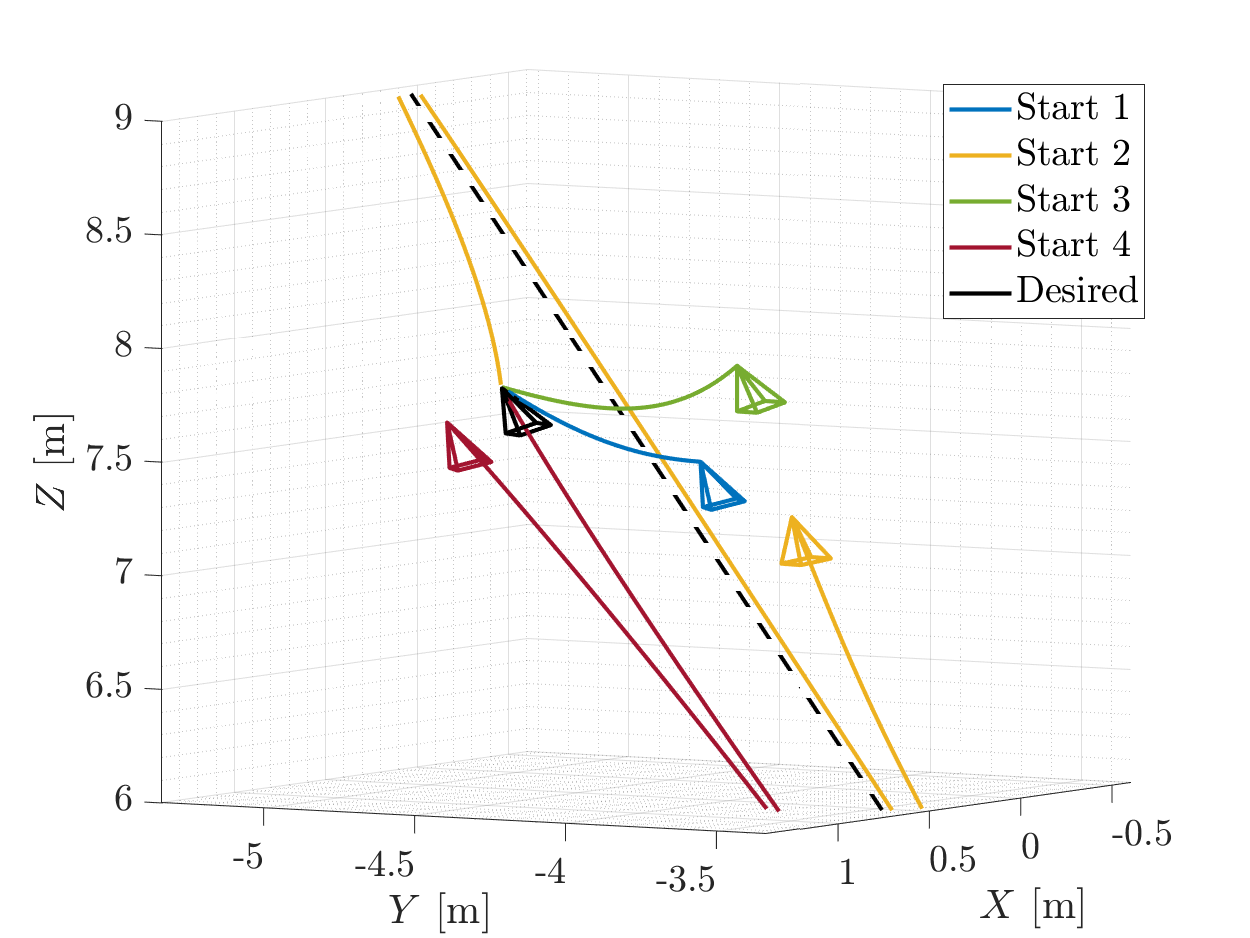}
	\caption{Visual servoing using four image lines, starting from four initial poses (coloured). The desired end pose is in black. The black dashed line is the singularity line $\mathcal{L}_{M}$ that intersects the four observed lines.\label{fig:simulations_traj}}
\end{figure}

\begin{figure}[t]
	\centering
	\includegraphics[width=90mm, height=45mm]{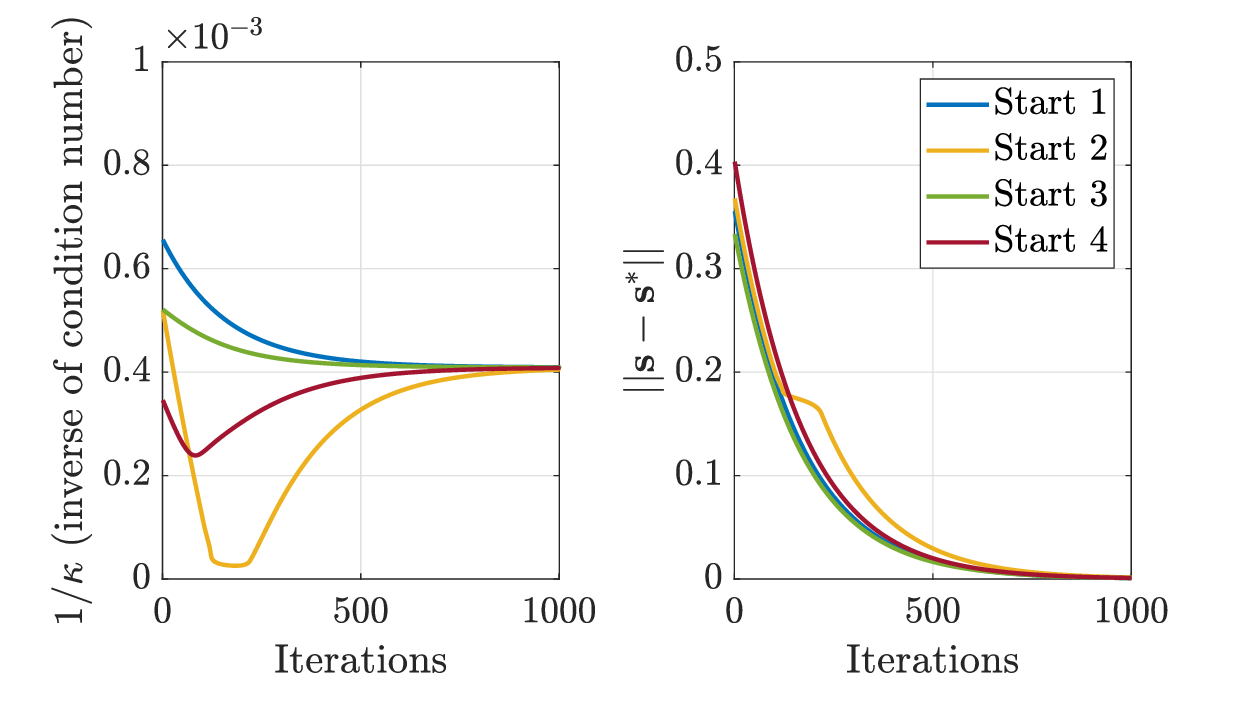}
	\caption{Inverse of the condition number $\kappa$ of the interaction matrix $\mathbf{M}_{(4)}$ (left) and norm of the error vector $||\mathbf{s}-\mathbf{s}^*||$ (right). \label{fig:errors_condnum}}
\end{figure}

\begin{figure}[t]
	\centering
	\includegraphics[width=90mm, height=45mm]{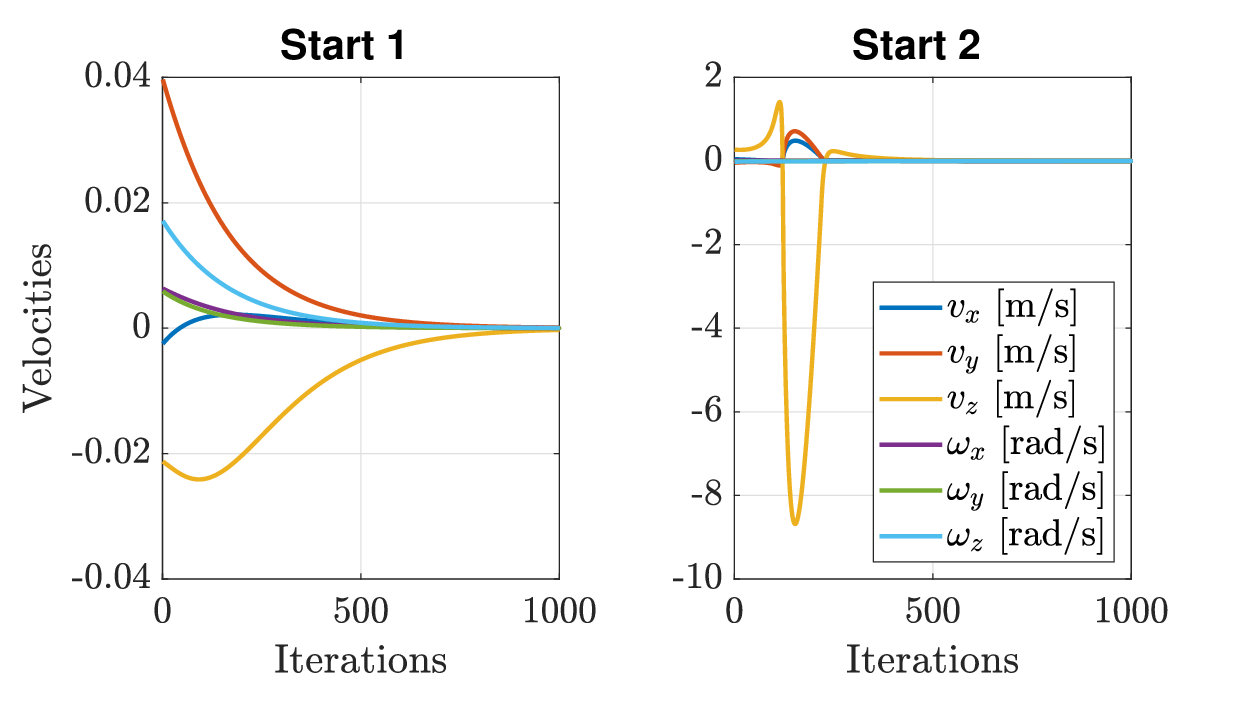}
	\caption{Velocity inputs $\boldsymbol{\tau}_c$ for the camera in a stable control scheme (left), and when crossing a singularity (right). \label{fig:vels}}
\end{figure}

\begin{figure}[t]
	\centering
	\includegraphics[width=90mm, height=45mm]{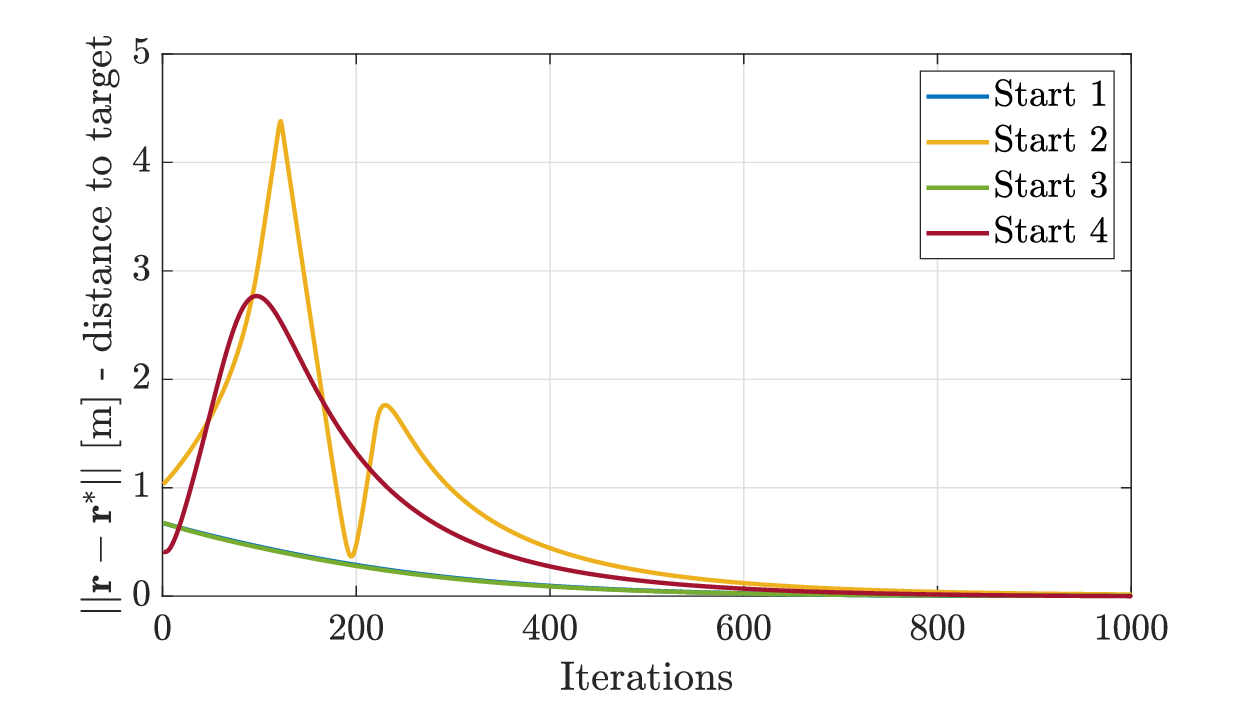}
	\caption{Distance to the target during the visual servo. \label{fig:distance}}
\end{figure}

We selected four initial camera positions in the proximity of a point $\mathcal{P}_M = [0.3962~\text{m} \ -4.337~\text{m} \ 7.50~\text{m}]$ that lies on the singularity line $\mathcal{L}_M$. From each starting point, we attempt to control the camera towards a desired position which is always the same. The coordinates of the start and target positions are defined relative to $\mathcal{P}_M$, and are shown in Table~\ref{tab:simulation_points} along with a commentary on the choice of points. The camera orientations were chosen such that the four lines are clearly visible from all locations.


At each iteration, the controller tries to minimize an error function $\mathbf{s}(t)-\mathbf{s}^*$. The vector of features $\mathbf{s} = [\mathbf{l}_1^{'T}, \dots, \mathbf{l}_4^{'T}]^T\in \mathbb{R}^{12}$ is composed of the coordinates $l'_{xi}$, $l'_{yi}$ and $l'_{zi}$ for each projected line $1 \leq i \leq 4$, while $\mathbf{s}^*$ contains the values of the features at the desired position. In order to achieve an exponential decrease of the error, the velocity input to the camera is 
\begin{equation}
\label{eq:velocity_input}
\boldsymbol{\tau}_c = -\lambda \, \mathbf{M}^+_{(4)} \left(\mathbf{s}(t)-\mathbf{s}^*\right),
\end{equation}
where $\mathbf{M}^+_{(4)}$ is the Moore-Penrose pseudoinverse of the interaction matrix~\eqref{eq:interaction_matrix_full} and $\lambda$ is a gain factor which was set to $0.1$. For these first simulations, we assume that all the parameters appearing in the matrix $\mathbf{M}_{(4)}$ are known and hence we can always obtain a perfect estimate of its pseudoinverse $\mathbf{M}^+_{(4)}$. Note that no noise was added to the visual data, and hence all the instabilities are due uniquely to the determinant of the interaction matrix vanishing at a singularity. 

Figure~\ref{fig:simulations_traj} displays the camera trajectories starting from each initial position. A normal behaviour is achieved from Starts $1$ and $3$: the camera describes an almost straight line towards the desired pose, and the magnitude of the error vector decreases exponentially (see Fig.~\ref{fig:errors_condnum}). 

Start $2$ is located opposite from the desired position relative to the point $P_{M}$. The camera reaches the target point eventually, but it diverges along the singularity line as it approaches it (see Fig.~\ref{fig:simulations_traj}). The inverse of the condition number $\kappa$ of $\mathbf{M}_{(4)}$, shown in Fig.~\ref{fig:errors_condnum}, reaches almost zero in the vicinity of the singularity, which means that the system in~\eqref{eq:velocity_input} is ill-conditioned and, as a result, the controller produces very high velocity commands causing instability. Figure~\ref{fig:vels} compares the velocity input profiles for Starts $1$ and $2$ throughout the simulation. The velocity inputs from Start $2$ are two orders of magnitude higher than those produced in a stable situation. Note that although the distance to the desired point increases during the undesirable motion (Fig.~\ref{fig:distance}), the magnitude of the error $||\mathbf{s} - \mathbf{s}^*||$ remains approximately constant (Fig.~\ref{fig:errors_condnum}). 

Start $4$ is located slightly further away from the singularity line, and closer to the desired position. The trajectory converges but the camera is again subjected to considerably high velocities and does not approach the target monotonically (see Figs.~\ref{fig:simulations_traj} and~\ref{fig:distance}). This illustrates that the impact of the singularities is not limited to the trajectories that directly cross a singular point, but instead, that there is an area of influence in the vicinity of a singularity where the behaviour can be affected by the high condition number of the interaction matrix.
{\tiny
	\begin{table}[t]
		\centering
		\caption{Initial and desired positions relative to a point on the singularity line $\mathcal{L}_{M}$ (all units are in meters). \label{tab:simulation_points}}
		\begin{tabular}{lrrrl}
			& $\Delta X$ & $\Delta Y$ & $\Delta Z$ & Note                           \\ \hline \hline
			Desired & $0.30$    & $-0.30$     & $0.30$    & Target end position $\mathbf{s}^*$ \\ \hline
			Start 1 & $0.20$     & $0.30$    & $0$    & Near to singularity.           \\ \hline
			Start 2 & $-0.30$     & $0.30$    & $-0.30$     & Opposed to desired.            \\ \hline
			Start 3 & $0$    & $0.30$     & $0.40$     & Near to singularity.         \\ \hline
			Start 4 & $0.10$    & $-0.60$    & $0.10$   & Away from singularity.        
		\end{tabular}
	\end{table}
}

\subsubsection{Trajectory following across singularity line $\mathcal{L}_N$}
\begin{figure}[t]
	\centering
	\includegraphics[width=90mm, height=64mm]{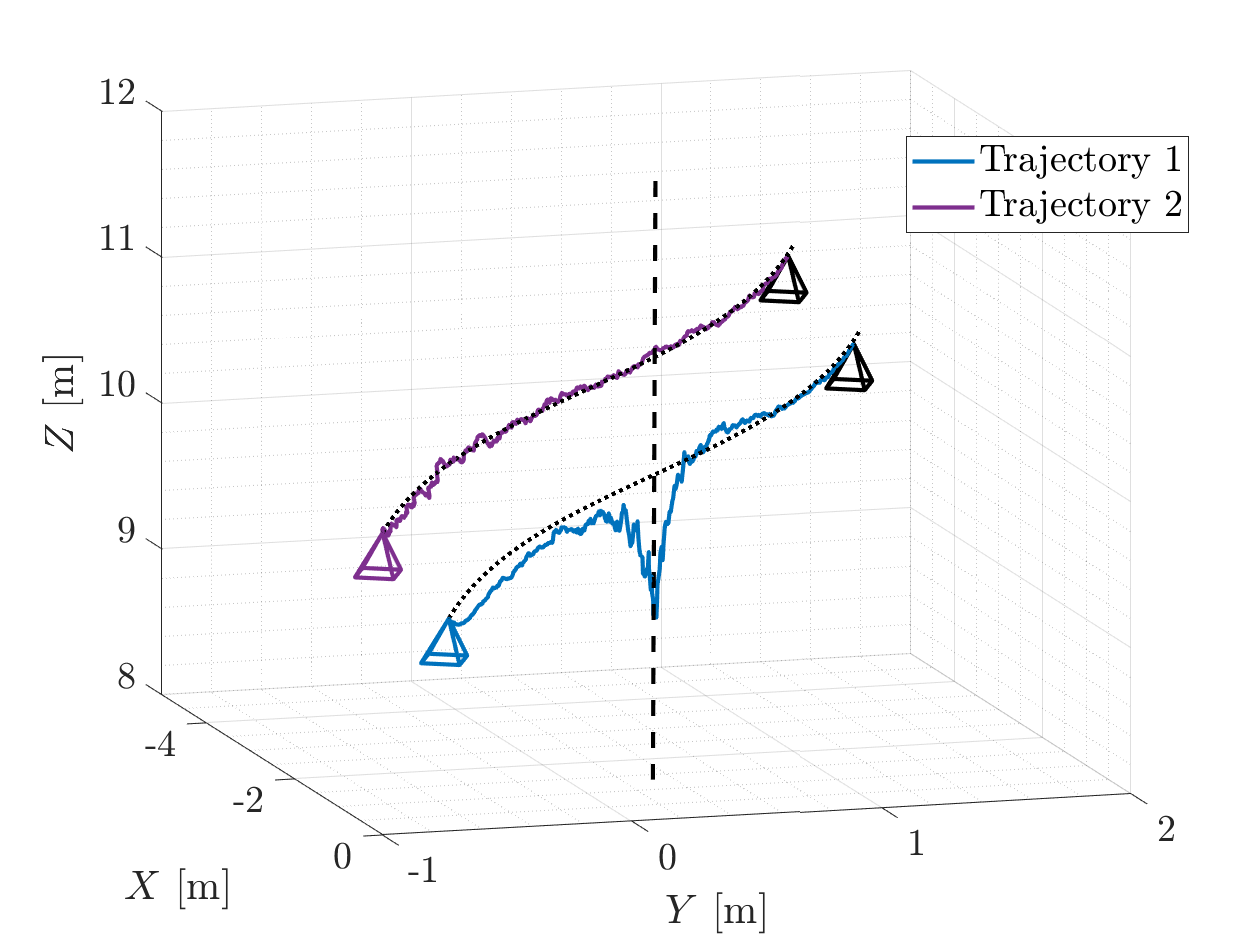}
	\caption{VS from four image lines: Trajectory described by the camera when controlling it along a prescribed path (thin dotted line). The control becomes unstable along the trajectory that crosses the singularity line $\mathcal{L}_N$ (black dashed line). \label{fig:traj_line2}}
\end{figure}

\begin{figure}[t]
	\centering
	\includegraphics[width=90mm, height=90mm]{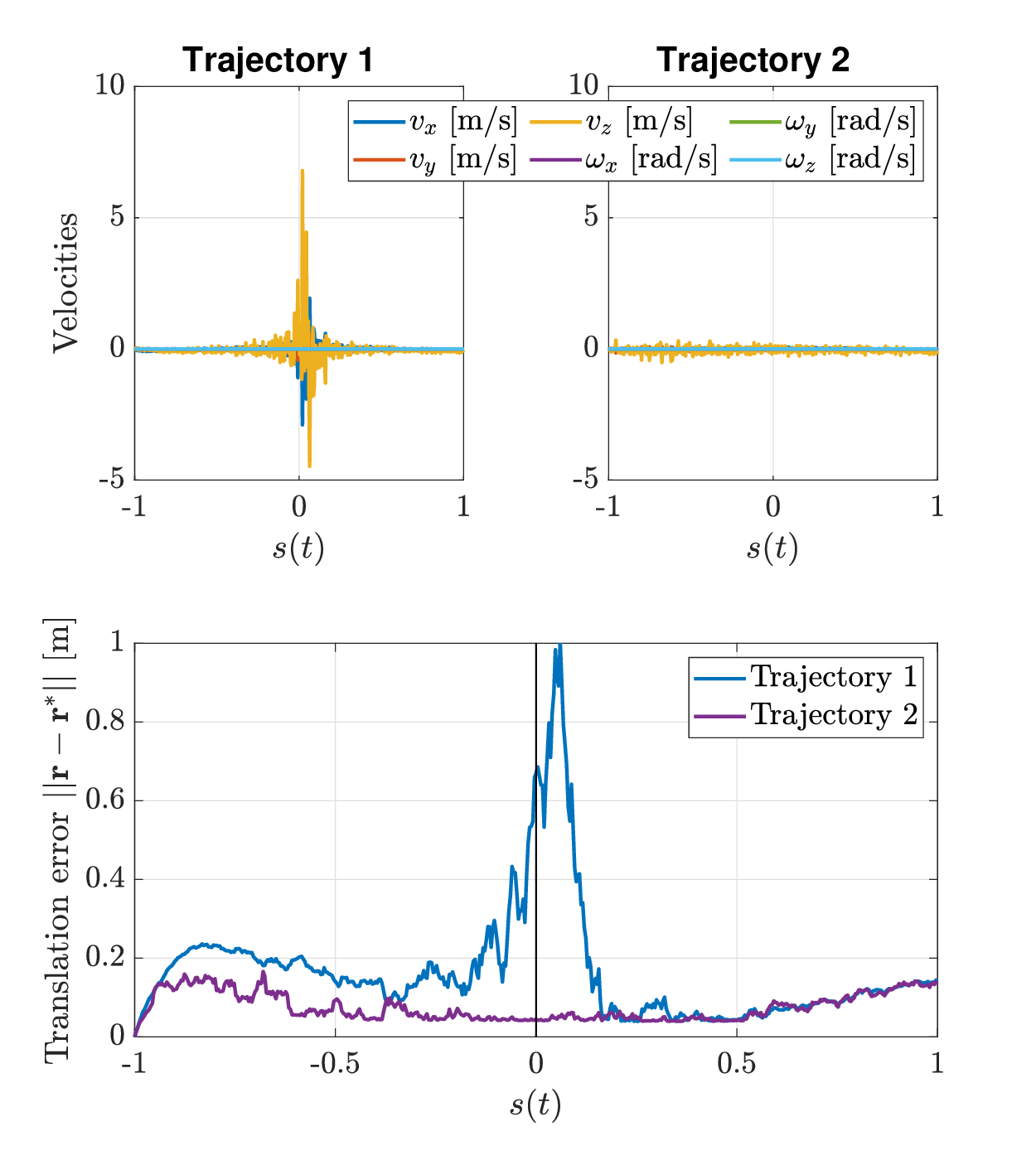}
	\caption{Camera velocity inputs $\boldsymbol{\tau}_c$ (top) and position error $||\mathbf{r}(t) - \mathbf{r}^*(t)||$ (bottom) along the trajectories. The vertical step in the bottom figure indicates where Trajectory 1 crosses the singularity line $\mathcal{L}_N$. \label{fig:vels_line2}}
\end{figure}

With the same four lines defined by the parameters~\eqref{eqs:params}, a simple trajectory along a cubic curve defined by $\mathbf{r}^*_1(t) = [X^*_1(t), Y^*_1(t), Z^*_1(t)]^T$ with 
\begin{equation}
\label{eq:traj1_sing}
\begin{aligned}
&X^*_1(t) = -s(t)^3 - 1.7551~\text{m},
&Y^*_1(t) = s(t) + 0.3992~\text{m}\\
&Z^*_1(t) = 0.7 s(t) + 10.0~\text{m}.
\end{aligned}
\end{equation}
for $s(t) = 0.04 t -1$ and $0 \leq t \leq 50$~s, was designed to cross the singularity line $\mathcal{L}_N$ at $s(t)=0$. A second, very similar trajectory $\mathbf{r}^*_2(t) = [X^*_2(t), Y^*_2(t), Z^*_2(t)]^T$ with 
\begin{equation}
\label{eq:traj2_nosing}
\begin{aligned}
&X^*_2(t) = -s(t)^3 - 3.2551~\text{m},
&Y^*_2(t) = s(t) + 0.3992~\text{m}\\
&Z^*_2(t) = 0.7 s(t) + 10.30~\text{m}.
\end{aligned}
\end{equation}
should not cross any singular points. 

Trajectory following can be performed using visual servoing by introducing a time-dependent vector $\mathbf{s}^*(t)$ of desired visual features in the control law~\eqref{eq:velocity_input}, which can be computed by forward projection~\eqref{eq:plucker2dline} of the 3D line coordinates in the camera frame $\mathcal{F}_c$ as the camera moves along the trajectory. 

As before, we assume that the parameters in the interaction matrix $\mathbf{M}_{(4)}$ can be measured such that we can obtain an estimate of its pseudoinverse $\mathbf{M}^+_{(4)}$ to use in the control law~\eqref{eq:velocity_input}. However this time we added white Gaussian noise of standard deviation $\sigma = 2 \cdot 10^{-3}$ to the 3D coordinates of the observed lines, in order to simulate the impact of errors in the measurements. The gain factor $\lambda$ was set to $1$. 

The camera behaviour is shown in Fig.~\ref{fig:traj_line2}. For trajectory $\mathbf{r}^*_2$~\eqref{eq:traj2_nosing}, away from the singularity line, the camera follows the prescribed path with relative accuracy. The velocity inputs are mild and there is a small, approximately constant tracking error of about $0.1$~m (Fig.~\ref{fig:vels_line2}), which could be reduced with a more sophisticated controller, for example by introducing an integral term in the control law~\eqref{eq:velocity_input}. 

For the trajectory $\mathbf{r}^*_1$~\eqref{eq:traj1_sing}, the camera is unable to follow the desired path accurately in the vicinity of the singularity line. Around $s(t)=0$, the velocity commands become very high in magnitude, inducing instability, and the translation error becomes as high as $1$~m (Fig.~\ref{fig:vels_line2}).

\subsubsection{VS around the isolated point singularities}
\label{sec::sims_4lines_isolated_points}

For the configuration~\eqref{eqs:params}, there also exist $6$ isolated points $\mathcal{P}_{i}$ which are zero-dimensional singularities of the interaction matrix. They are those solutions of the ideal $\langle \mathcal{H}, \mathcal{K} \rangle$ which are not contained in the variety $\mathbf{V}(\mathcal{G}) = \mathcal{L}_M \cup \mathcal{L}_N$ (see Appendix A) and have the following coordinates (in meters):
\begin{equation}
\label{eq:4linessims_isolated_points}
\begin{aligned}
&\text{Ex. }1: \quad \mathcal{P}_{1} = [-9.858 \  -2.473 \  -1.841],\\
&\text{Ex. }2: \quad \mathcal{P}_{2} = [0.0541  \  0.0092 \   1.8422],\\
&\text{Ex. }3: \quad \mathcal{P}_{3} = [-0.3203  \   0.0105  \  0.2205],\\
&\text{Ex. }4: \quad \mathcal{P}_{4} = [1.0113  \  0.7947 \  -0.8850],\\
&\text{Ex. }5: \quad \mathcal{P}_{5} = [0.9387  \  0.5681 \  -2.0225],\\
&\text{Ex. }6: \quad \mathcal{P}_{6} = [65.09 \ -96.57 \ -0.03639].
\end{aligned}
\end{equation}
Another point $\mathcal{P}_0$ is chosen arbitrarily and away from any singularities:
\begin{equation}
    \text{Ex. }0: \quad \mathcal{P}_{0} = [-8.858 \  -2.473 \  -1.841].
\end{equation}

For each of these locations, we simulated a trajectory with the shape of a \textit{quadrifolium} centered at $\mathcal{P}_i$, given by the following equations:
\begin{equation}
\label{eq:quadrifolium}
\begin{aligned}
&X^*(t) = \mathcal{P}_{ix}, \quad \quad
Y^*(t) = \mathcal{P}_{iy} + 0.3\cos(s)\cos(2s),\\
&Z^*(t) = \mathcal{P}_{iz} + 0.3\sin(s)\cos(2s).
\end{aligned}
\end{equation}
{where $\mathcal{P}_{ix}$, $\mathcal{P}_{iy}$, $\mathcal{P}_{iz}$ are the coordinates of each point}, and $s = 2\pi \,t/20$ with $0 \leq t \leq 20$~s, and we applied the control law~\eqref{eq:velocity_input} with $\lambda=5$. Once again, Gaussian noise of standard deviation $\sigma = 10^{-3}$ was added to the Pl\"ucker coordinates of the lines to simulate measurement errors.

\begin{figure}[t]
	\centering
	\includegraphics[width=90mm, height=64mm]{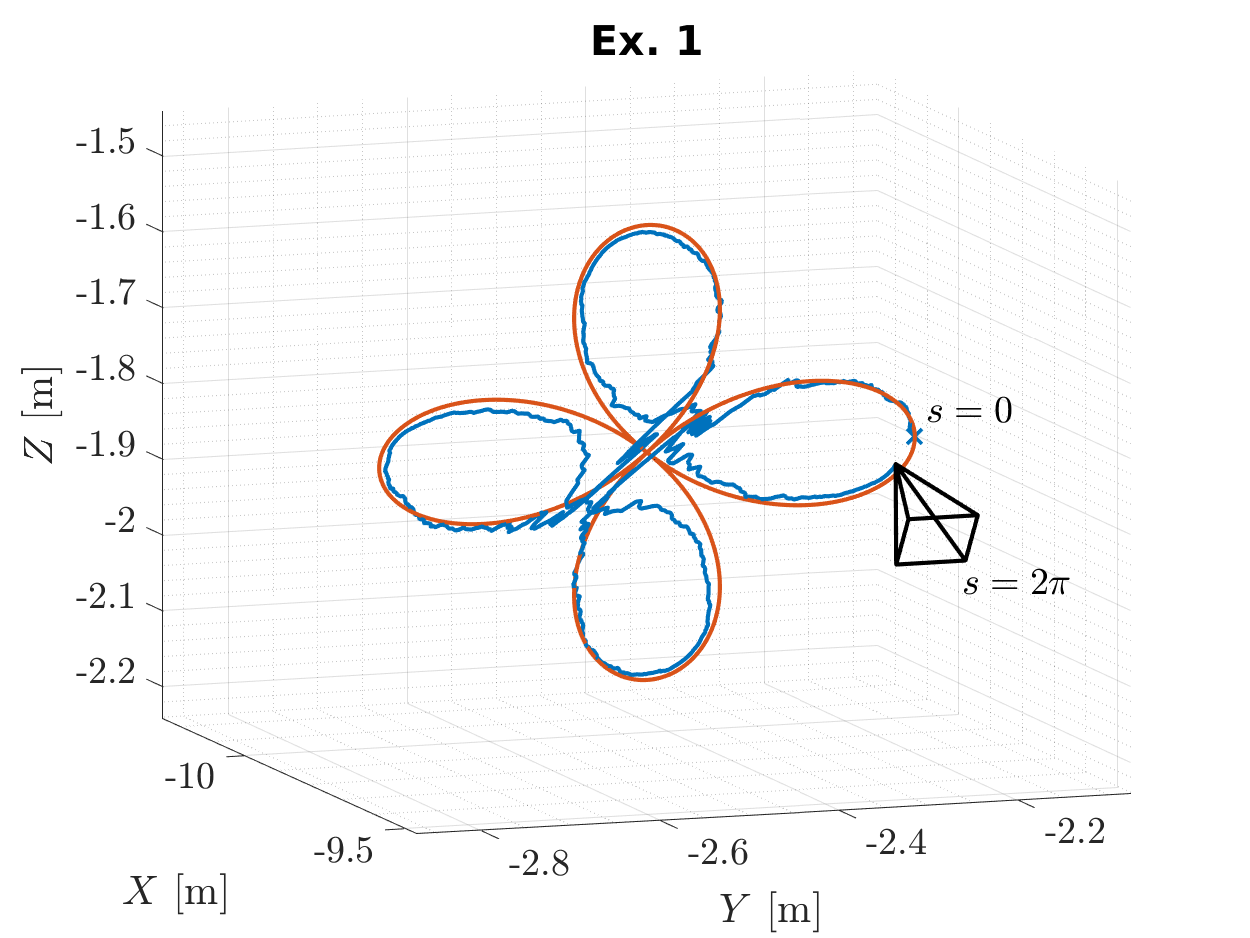}
	\caption{Visual servoing along a trajectory with the shape of a quadrifolium (red) centered at the singularity point $\mathcal{P}_1$. The true end camera position is drawn in black. A large translation error occurs 
	every time the camera approaches the singularity. \label{fig:4lines_sim3_clover}}
\end{figure}

\begin{figure}[t]
	\centering
	\includegraphics[width=90mm, height=90mm]{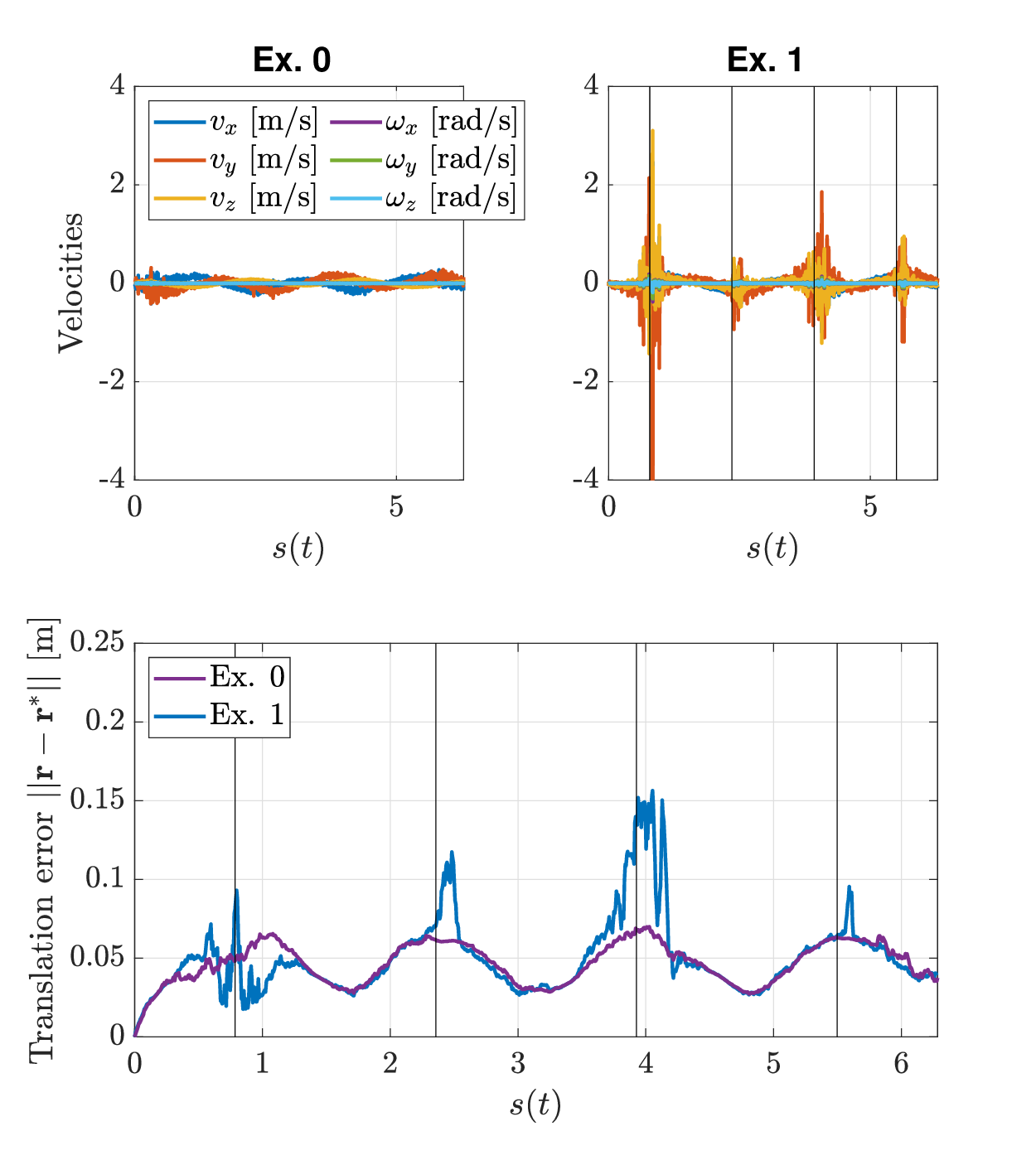}
	\caption{Velocity inputs $\boldsymbol{\tau}_c$ (top) and translation error $||\mathbf{r}-\mathbf{r}^*||$ during experiments $0$ and $1$. The vertical steps indicate where the trajectory in Ex. $1$ traverses the singularity point $\mathcal{P}_1$. \label{fig:4lines_sim3_velocities}}
\end{figure}

In Fig.~\ref{fig:4lines_sim3_clover} we show the results for the trajectory centered at the first of these points. The translation error along this trajectory is compared in Fig.~\ref{fig:4lines_sim3_velocities} with that of Ex. $0$, which does not come near a singularity. An oscillating tracking error is present in both cases due to the delay of the camera position relative to the desired point at a given time. However the presence of the singularity in Ex. $1$ results in destabilizing velocity commands as the camera approaches the center of the quadrifolium (for $s = \frac{\pi}{4} + n \frac{\pi}{2}$), as shown in Fig.~\ref{fig:4lines_sim3_velocities}. As a consequence, a significantly larger deviation occurs around this point.
 

The maximum translation errors for all the experiments~\eqref{eq:4linessims_isolated_points}, displayed in Fig.~\ref{fig:4lines_sim3_max_error}, occur always when the camera approaches the singularity point. Meanwhile the difference between the maximum and median errors indicate that the greatest part of the trajectory is completed with relative accuracy.

\begin{figure}[t]
	\centering
	\includegraphics[width=90mm, height=38mm]{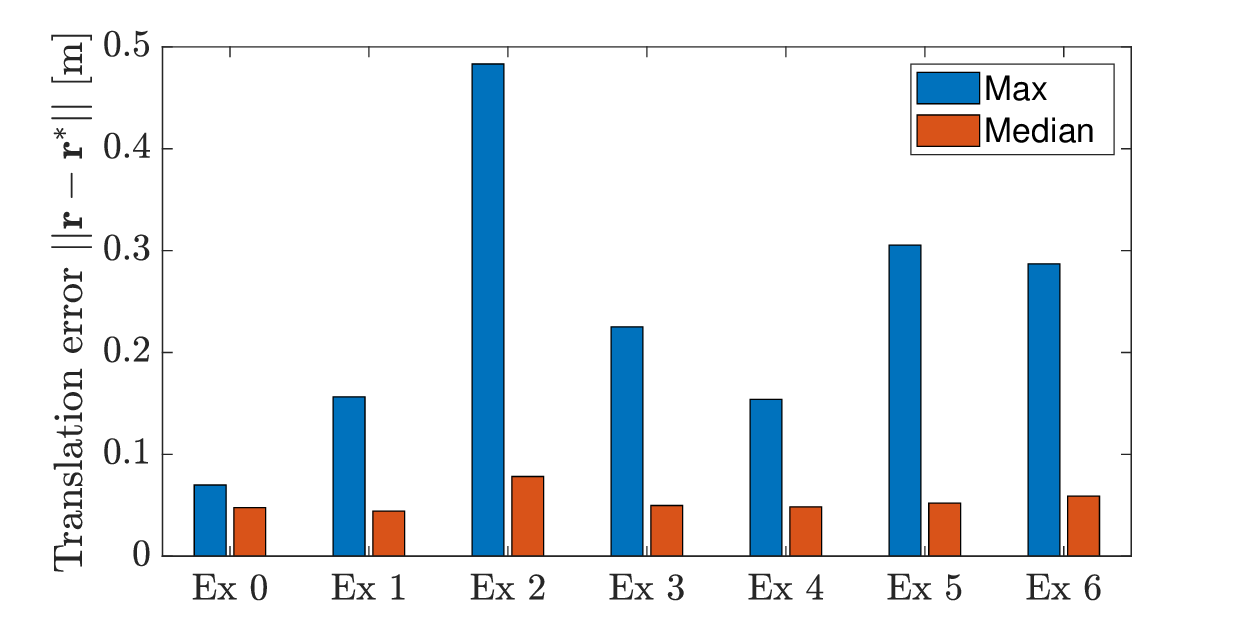}
	\caption{Maximum and median error along the quadrifolium trajectory for all experiments. \label{fig:4lines_sim3_max_error}}
\end{figure}

A particularly large error occurs in Ex. $2$. In this case, the point of singularity $\mathcal{P}_2$ is located very near both the singularity line $\mathcal{L}_M$ (see Section~\ref{section:simulation_line_LM}) and one of the observed lines $\mathcal{L}_2$ (we recall that the camera is at a singularity when it lies on $\mathcal{L}_i$ because it loses visibility of the line). When the camera approaches point $\mathcal{P}_2$, it diverges and is pulled towards the unstable regions around $\mathcal{L}_2$ and $\mathcal{L}_M$, resulting in a very large translation error.

\subsubsection{Pose Estimation along a quadrifolium trajectory}

This section illustrates the impact that the singularities have when performing pose estimation in their neighbourhood. Typically, pose computation algorithms can be classified in \textit{iterative} and \textit{non-iterative methods}. Iterative methods are usually more efficient and accurate than the non-iterative ones but, in contrast to them, they require the estimated pose to be initialized and their convergence is very sensitive to a bad initialization. 

For the following results we used our own implementation of the Robust PnL (RP$n$L) algorithm~(\cite{zhang2012robust}), regarded as one of the state-of-the-art non-iterative solvers for pose estimation from $n$ lines, which combines several classical methods for the solution of P3L~(\cite{wang2019camera}). The RP$n$L algorithm solves the P3L problem for $(n-2)$ triplets of lines and then selects the solution that yields the smallest reprojection error.

The accuracy of the RP$n$L method improves when redundant sets of lines ($n \geq 6$) are used~(\cite{wang2019camera}). Nevertheless, we consider here $n=4$, the minimal number of lines for which the pose estimation problem has a unique solution, in order to test the behaviour of pose estimation in the vicinity of the exposed singularities. In addition, the result of RP$n$L can be refined using a first-order iterative solver. Here we use Virtual Visual Servoing (VVS)~(\cite{Marchand_CGF_2002}), which minimizes the reprojection error of the lines by performing visual servoing on a virtual camera such that the desired image matches the image recorded by the real camera.

We consider four lines in the same configuration used in Section~\ref{sec::sims_P4L}, defined by a point and a direction~\eqref{eq:param} and with the parameters fixed as in~\eqref{eqs:params}. Three experiments were performed, based on the three points whose coordinates are shown in Table~\ref{tab:PE_points}. Centered at each of these, we simulated an open-loop trajectory, {defined thereafter}, along which the pose computation methods were assessed. For Ex. $1$ a generic point far from any singularities was chosen as a benchmark for the efficiency of the pose estimation algorithms. Ex. $2$ corresponds to the point singularity $P_1$ in~\eqref{eq:4linessims_isolated_points} - we demonstrate here only the behaviour of pose estimation near one of the points in~\eqref{eq:4linessims_isolated_points} because in practice the results are very similar around the other five isolated singularities. Finally, the point for Ex. $3$ lies on the singularity line $\mathcal{L}_M$. 



{\tiny
	\begin{table}[t]
		\centering
		\caption{Point coordinates used for simulations of pose determination (all coordinates given in meters). \label{tab:PE_points}}
		\begin{tabular}{lrl}
Ex. &  Coordinates & Note \\ \hline \hline
$1$ & $[5.0 \ 2.0 \ 3.0]$ & Away from singularities. \\ \hline
$2$ & $[ -9.858 \ -2.473 \ -1.841 ]$ & Isolated point singularity. \\ \hline
$3$ & $[ 0.7809 \ -2.024 \ 3.50 ]$ & On singularity line $\mathcal{L}_M$. 
		\end{tabular}
	\end{table}
}


{In all three cases the prescribed trajectory has the shape} of a \textit{quadrifolium} or four-leaved clover, defined by~\eqref{eq:quadrifolium} - for Ex. $2$, the pattern is rotated by $90$ degrees around the $Y$ axis, such that the ``leaves'' of the quadrifolium do not lie too close to the line $\mathcal{L}_M$. A constant camera orientation was chosen such that there is good visibility of the lines at all times (with the focal axis roughly pointing towards the origin).

Since the pose computation methods should be very sensitive to numerical noise in the proximity of a singularity, we added Gaussian noise with standard deviation $\sigma = 10^{-4}$ to the Pl\"ucker coordinates of the 3D lines.

Two parameters are measured from the simulations: the \textit{translation error} $t_e$, defined as the Euclidean distance between the true and estimated camera positions, and the \textit{rotation error}: the absolute value of the error angle
\begin{equation}
\label{eq:PE_error_angle}
\theta_e = \Bigg| \arccos \Bigg( \frac{1}{2} \bigg[ \text{tr} \bigg( {}^c\mathbf{R}_o \, \widehat{{}^c\mathbf{R}_o)}^T \bigg) - 1 \bigg] \Bigg) \Bigg|,
\end{equation}
where ${}^c\mathbf{R}_o$ and $\widehat{{}^c\mathbf{R}_o}^T$ are respectively the rotation matrices representing the true and the estimated orientation of the camera frame.

\begin{figure*}[t]
	\centering
\begin{subfigure}[b]{0.33\textwidth}
		\centering
		\includegraphics[width=55mm]{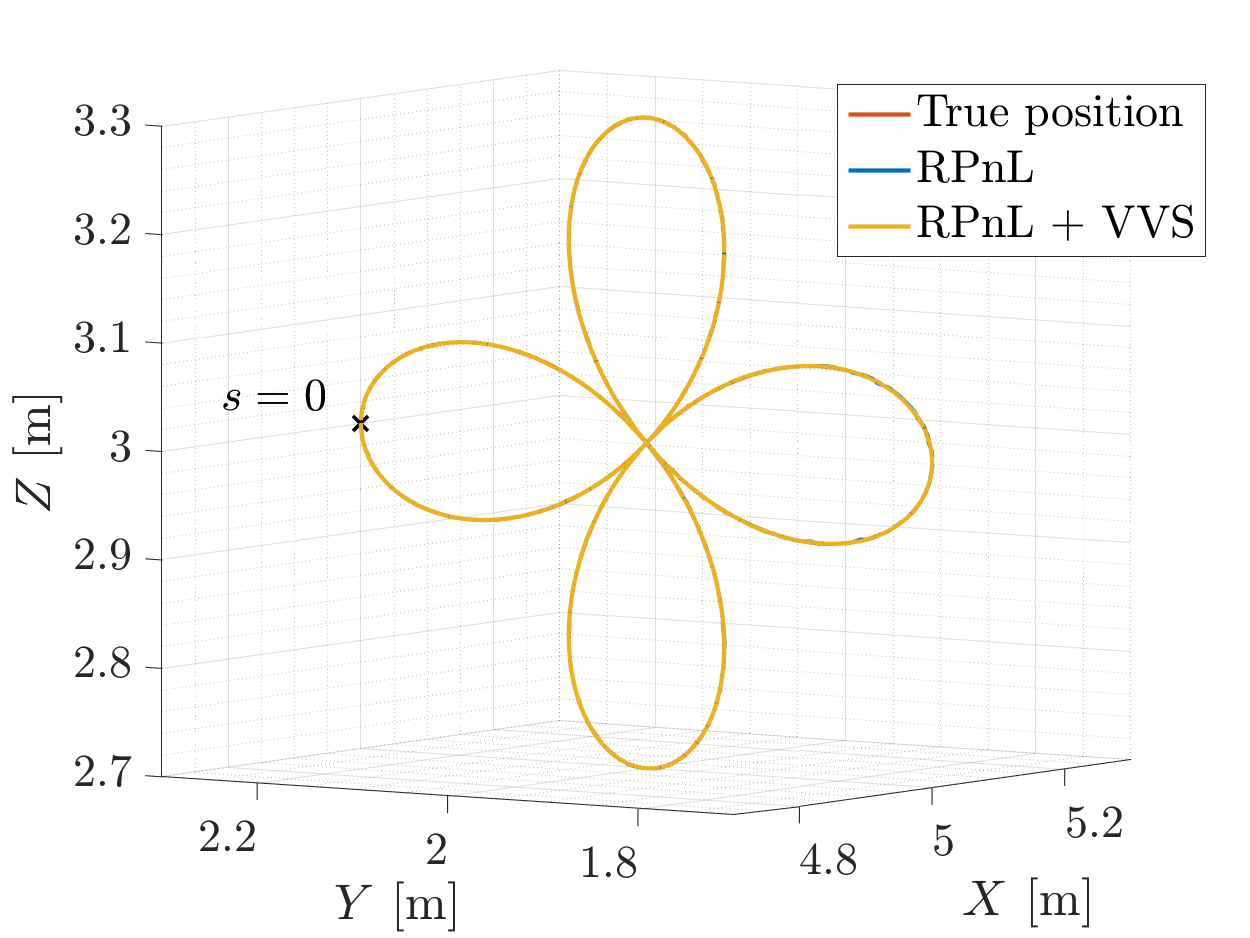}%
		\label{fig:sim_PE_P0_traj}%
\end{subfigure}
\hfill
\begin{subfigure}[b]{0.33\textwidth}
	\centering
		\includegraphics[width=55mm]{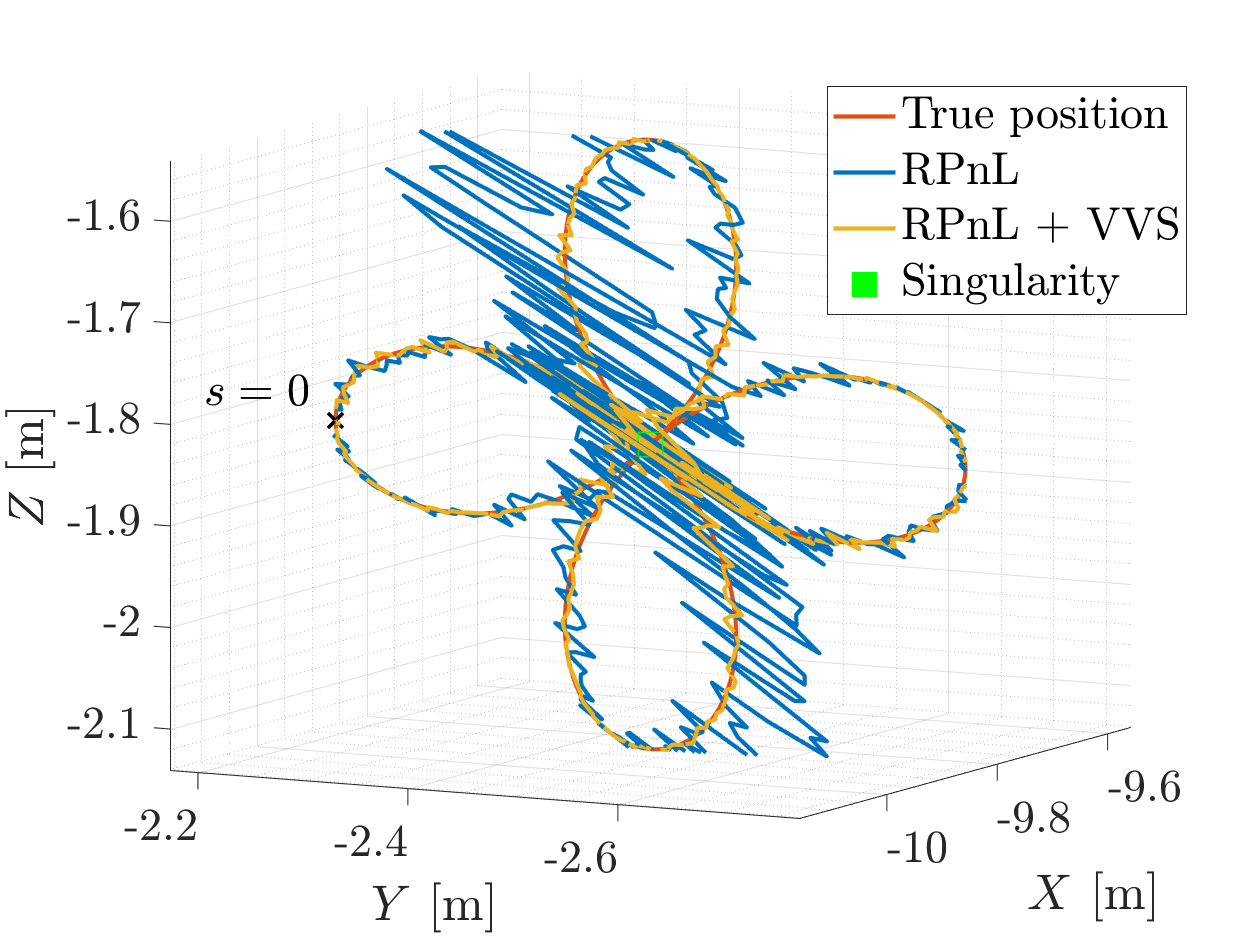}%
		\label{fig:sim_PE_P1_traj}%
\end{subfigure}
\hfill
\begin{subfigure}[b]{0.33\textwidth}
	\centering	\includegraphics[width=55mm]{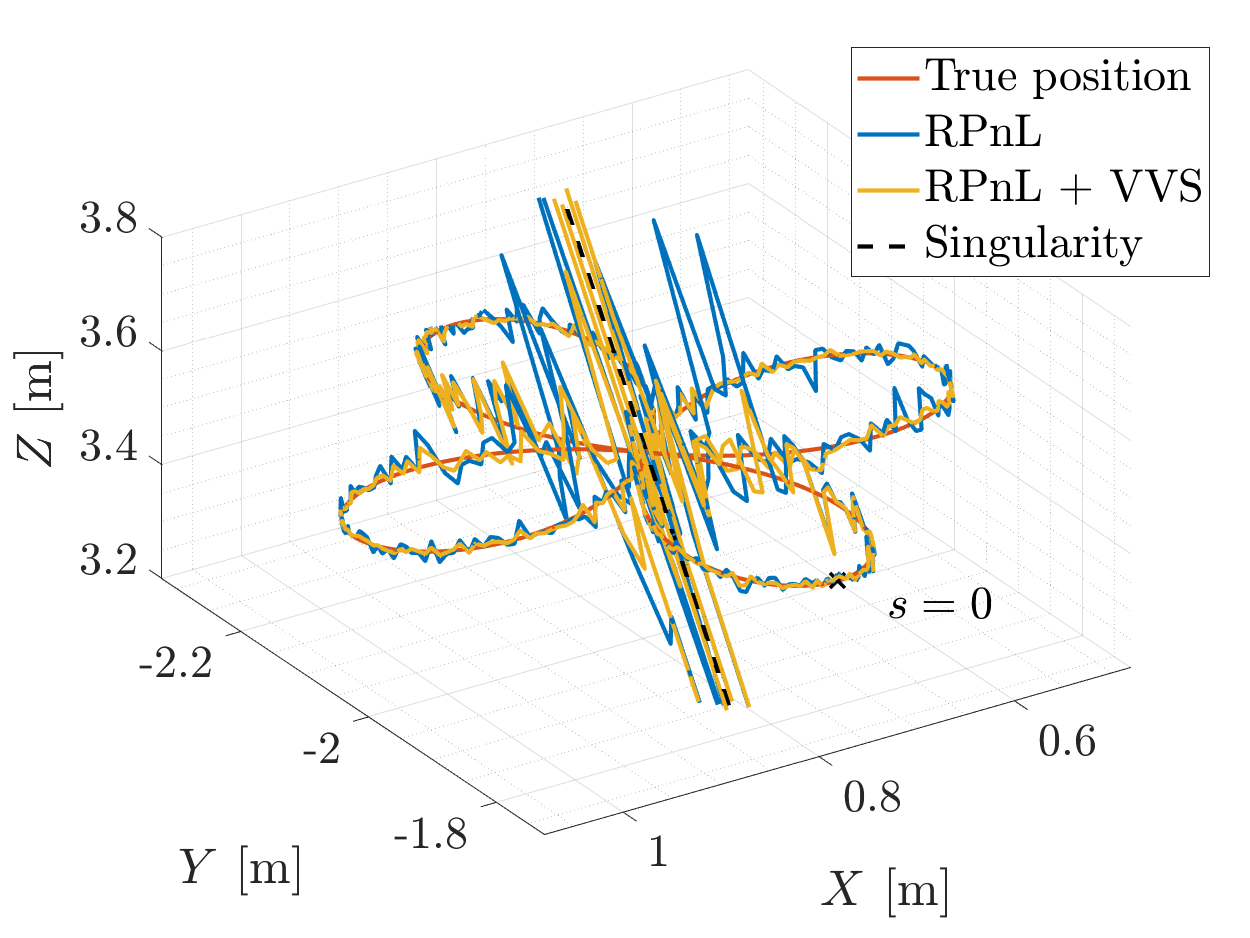}
\label{fig:sim_PE_PM1_traj}%
\end{subfigure}%

	\begin{subfigure}[b]{0.33\textwidth}
		\centering
		\includegraphics[width=55mm]{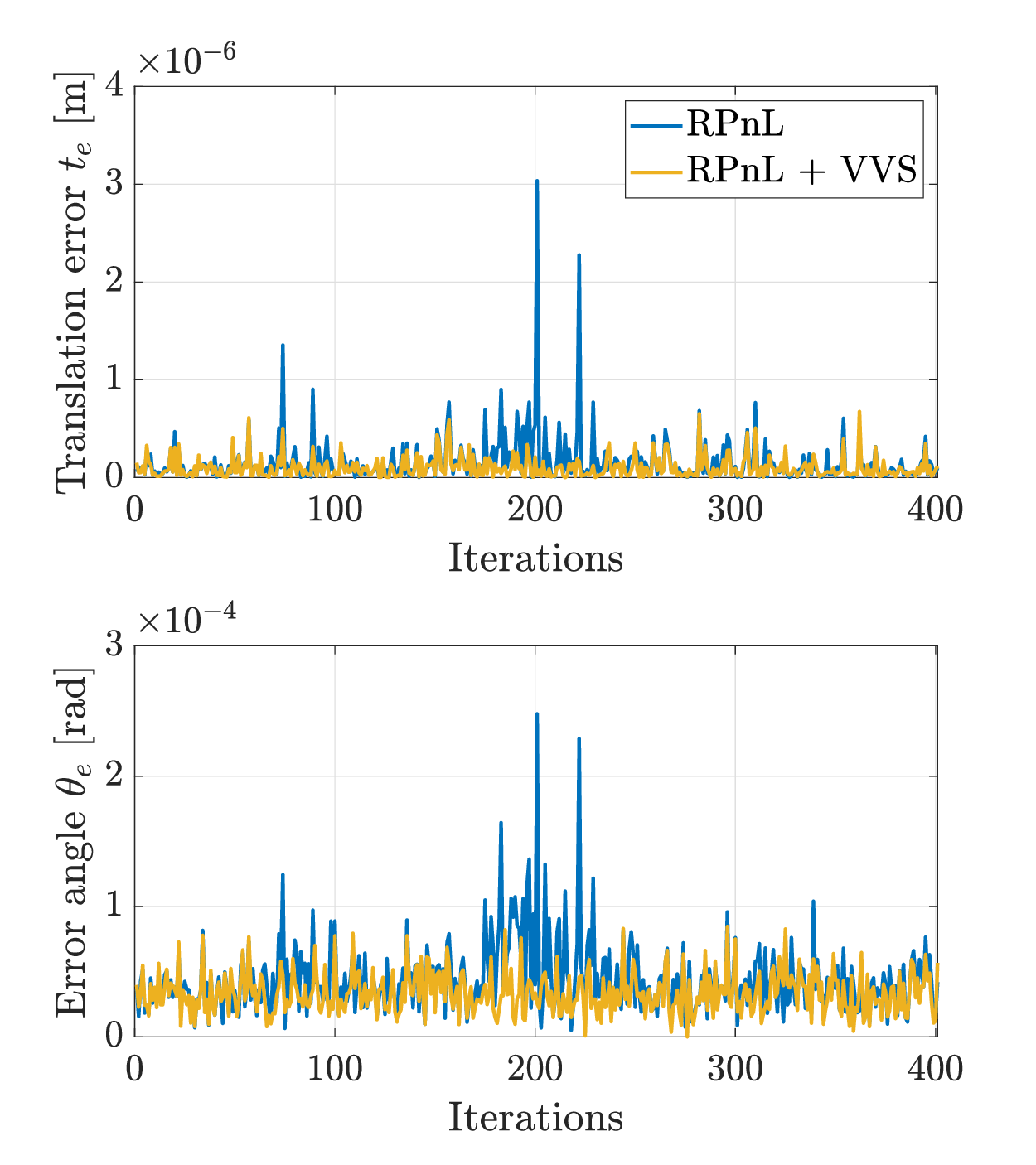}%
		\caption{Ex. $1$: Away from singularities.}
		\label{fig:sim_PE_P0_err}%
		\end{subfigure}
	\hfill
	\begin{subfigure}[b]{0.33\textwidth}
		\centering
		\includegraphics[width=55mm]{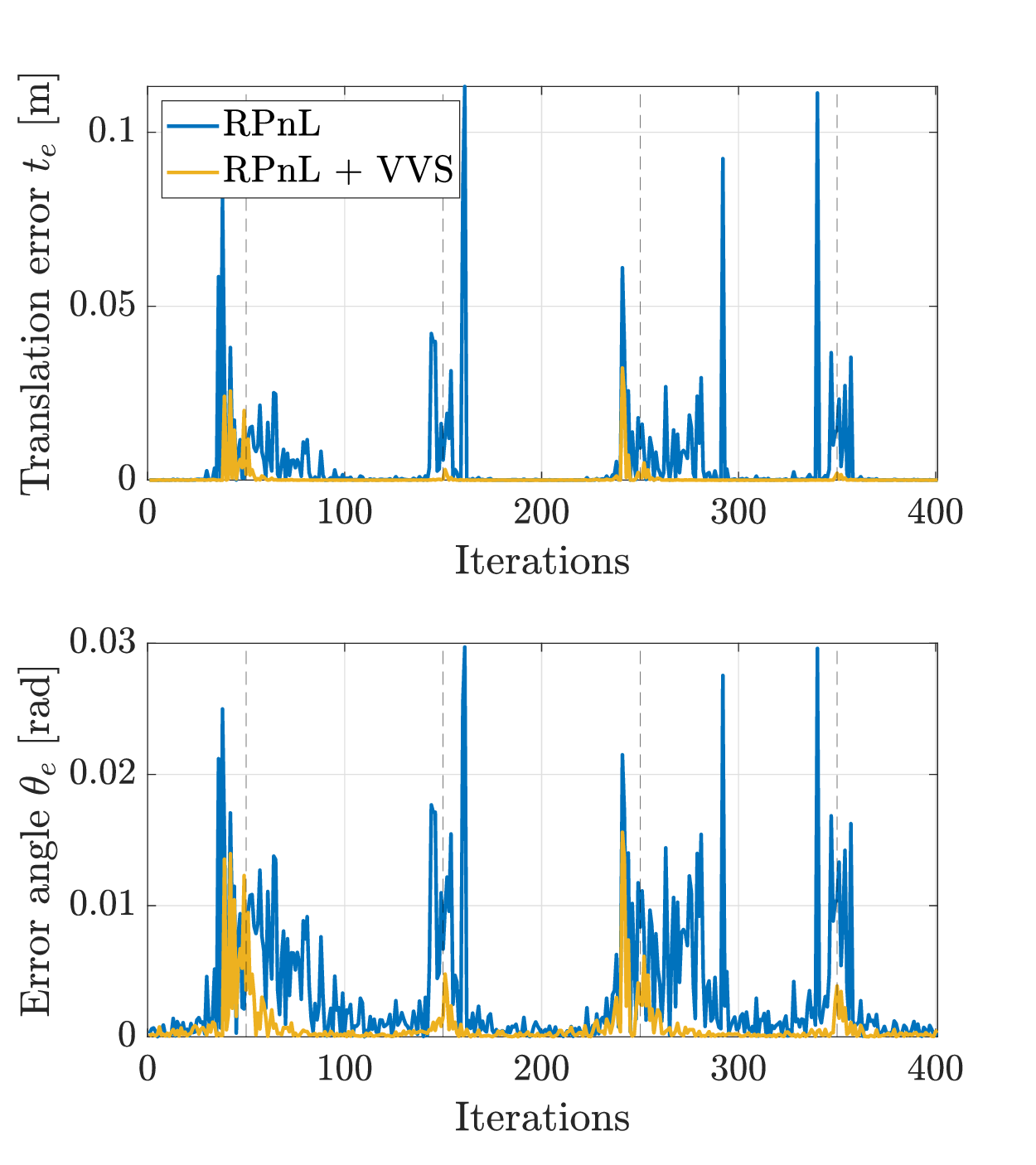}%
		\caption{Ex. $2$: Centered at an isolated singularity.}
		\label{fig:sim_PE_P1_err}%
	\end{subfigure}
	\hfill
	\begin{subfigure}[b]{0.33\textwidth}
		\centering	\includegraphics[width=55mm]{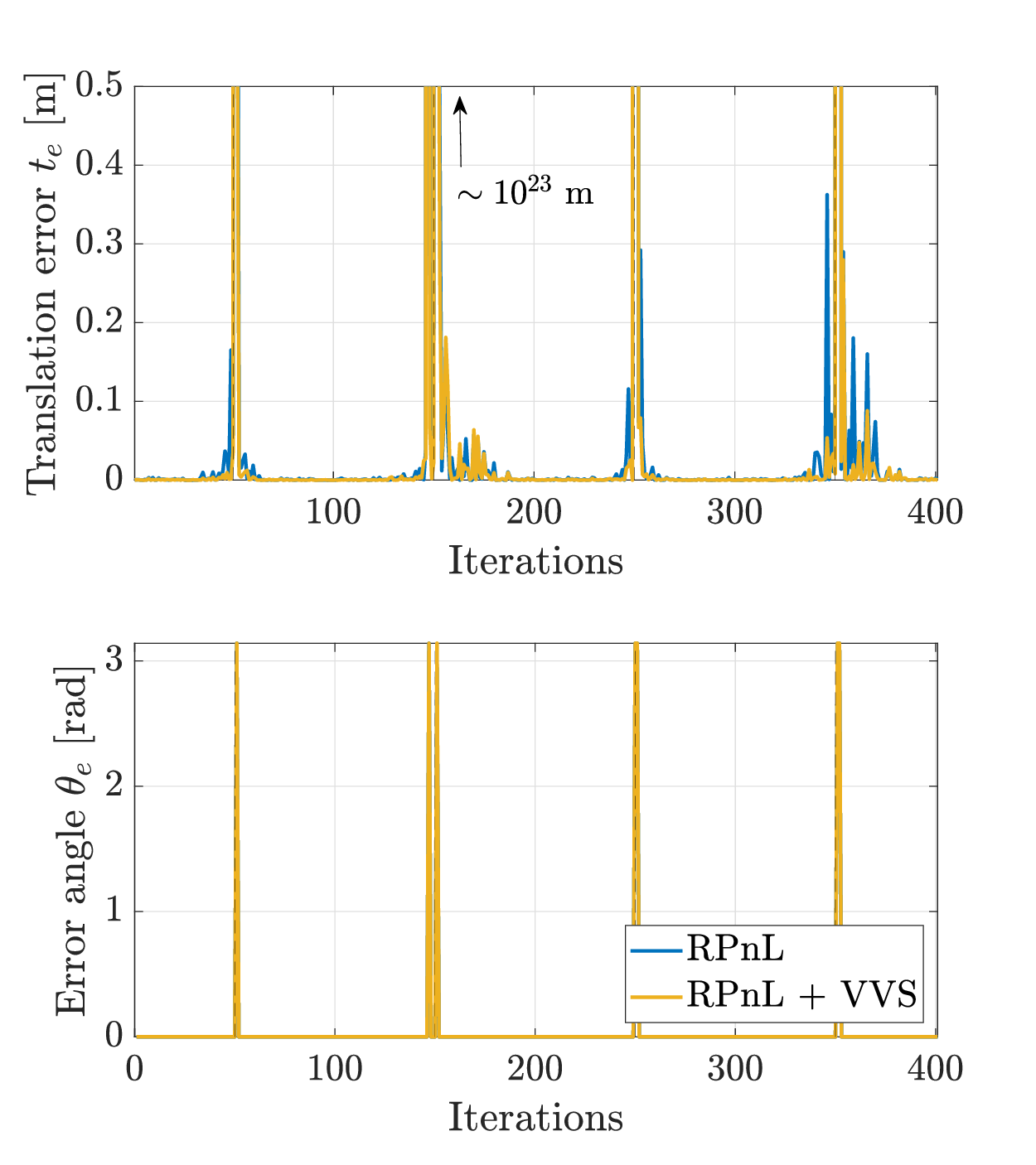}
\caption{Ex. $3$: Around the singularity line $\mathcal{L}_M$.}
		\label{fig:sim_PE_PM1_err}%
			\end{subfigure}%
	\caption{Pose estimation from four image lines along a trajectory with the shape of a quadrifolium centered at different points: a generic point away from singularities (left), an isolated point singularity (center) and a point on a line singularity (right). The top images show the true camera position (red), the estimation from the non-iterative RP$n$L algorithm (blue), and its refinement by VVS (yellow). In the bottom are displayed the translation error and the absolute error angle~\eqref{eq:PE_error_angle}. The vertical steps indicate the points where the camera passes through the singularity. Far away from any singularities both methods have a near-zero error; only the yellow plot is visible in the left image because the three trajectories overlap. In a large area around a singularity, RP$n$L becomes very sensitive to noise in the data, while VVS is quite effective in refining the result from RP$n$L except when very near the singularity. In the near proximity of the line singularity, both methods output an abhorrent estimation, with the errors tending to infinity. \label{fig:pose_estimation}}
\end{figure*}

The results from the RP$n$L algorithm and its refinement by VVS as the camera moves along each of the trajectories are depicted in Fig.~\ref{fig:pose_estimation}, with the corresponding error metrics displayed below. Away from the singularities, both methods yield near perfect estimations for both position and orientation (see Fig.~\ref{fig:sim_PE_P0_err}). 

In the second experiment (Fig.~\ref{fig:sim_PE_P1_err}), centered at an isolated singularity, the RP$n$L method becomes very inaccurate. A large translation error reaching up to $10$~cm occurs particularly as the camera crosses the singularity at the center of the pattern, but also in the vertical direction of the clover.
This is explained by the fact that RP$n$L solves the P3L problem for two triplets of lines (in this case $\mathcal{L}_1$, $\mathcal{L}_2$ and $\mathcal{L}_3$ on one hand, and $\mathcal{L}_1$, $\mathcal{L}_2$ and $\mathcal{L}_4$ on the other), and that for the P3L problem the singularity loci is a surface. The refinement from VVS generally allows reducing the translation error to below $0.1$~mm, except very near the singular point, where a persistent error of about $3$~cm remains.

In Ex. $3$, both the RP$n$L and VVS fail catastrophically to give an acceptable estimation near the singularity (see Fig.~\ref{fig:sim_PE_PM1_err}). As the camera approaches the line $\mathcal{L}_M$, the translation error blows up by several orders of magnitude ($\sim 10^{23}$~m).

These results show that a rank-defficiency of the interaction matrix can significantly impact the accuracy of pose computation methods, even in the case of direct solvers such as RP$n$L which do not explicitly involve the interaction matrix.

\subsection{Singularities in P5L}

\begin{figure}[t]
	\centering
	\includegraphics[width=90mm, height=64mm]{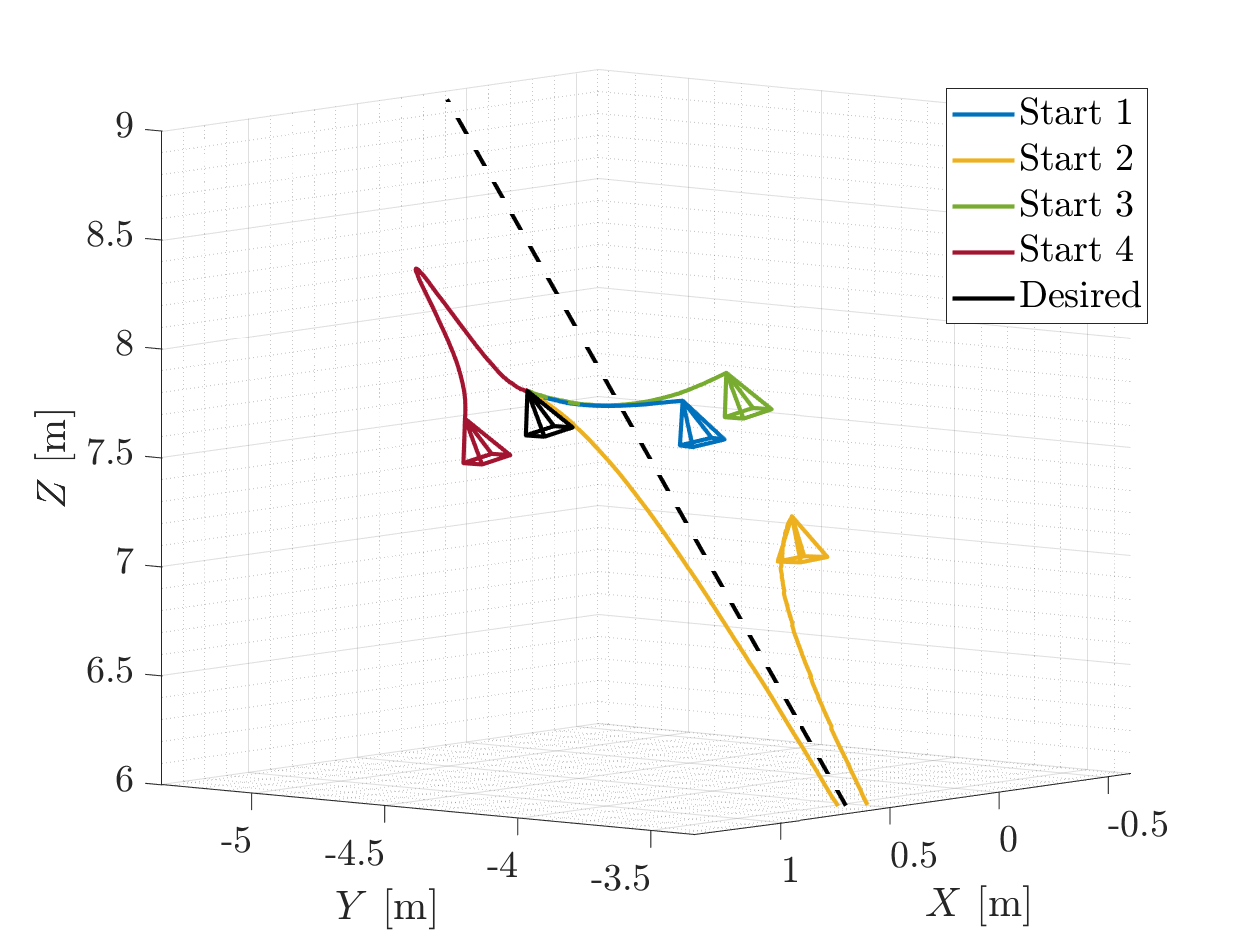}
	\caption{Visual servoing from five lines starting from different positions (coloured) towards a desired pose (black). The line $\mathcal{L}_M$ (dashed line) that intersects all five lines is a singularity of the interaction matrix.\label{fig:5lines_simulations_traj}}
\end{figure}

\begin{figure}[t]
	\centering
	\includegraphics[width=90mm, height=45mm]{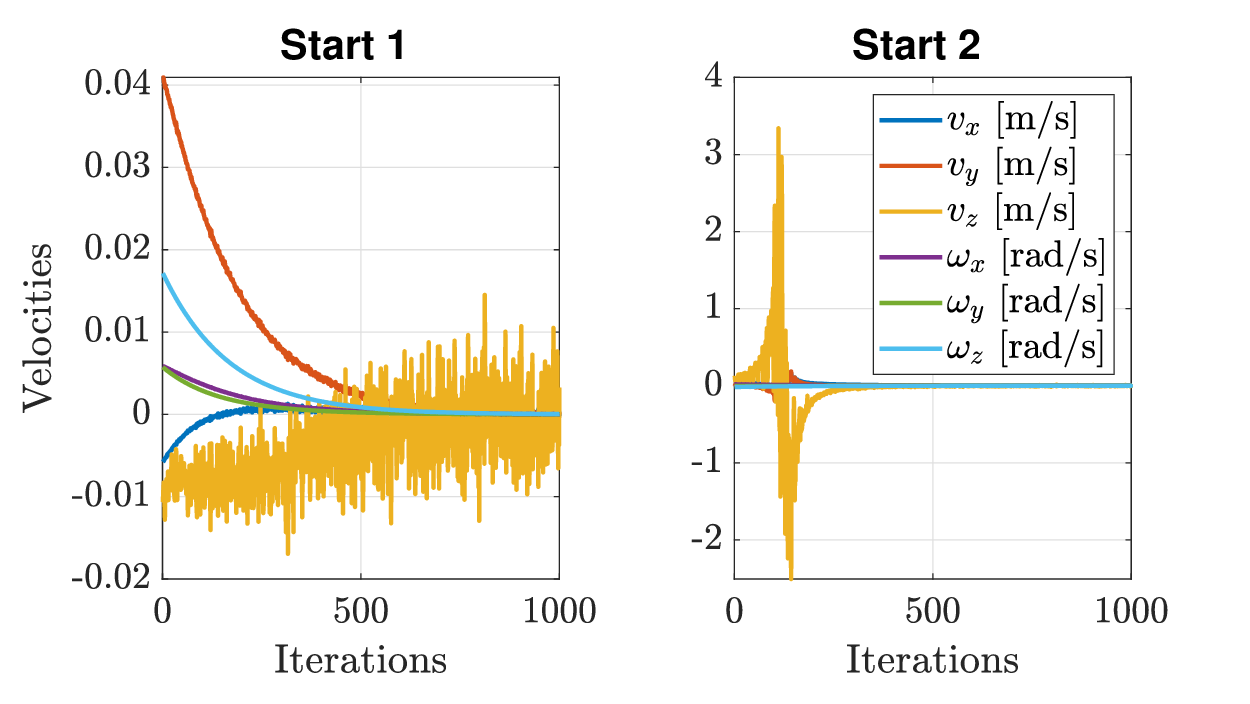}
	\caption{Camera velocity inputs $\boldsymbol{\tau}_c$ in a stable situation (left), and when crossing a singularity (right). \label{fig:5lines_vels}}
\end{figure}

\begin{figure}[t]
	\centering
	\includegraphics[width=90mm, height=45mm]{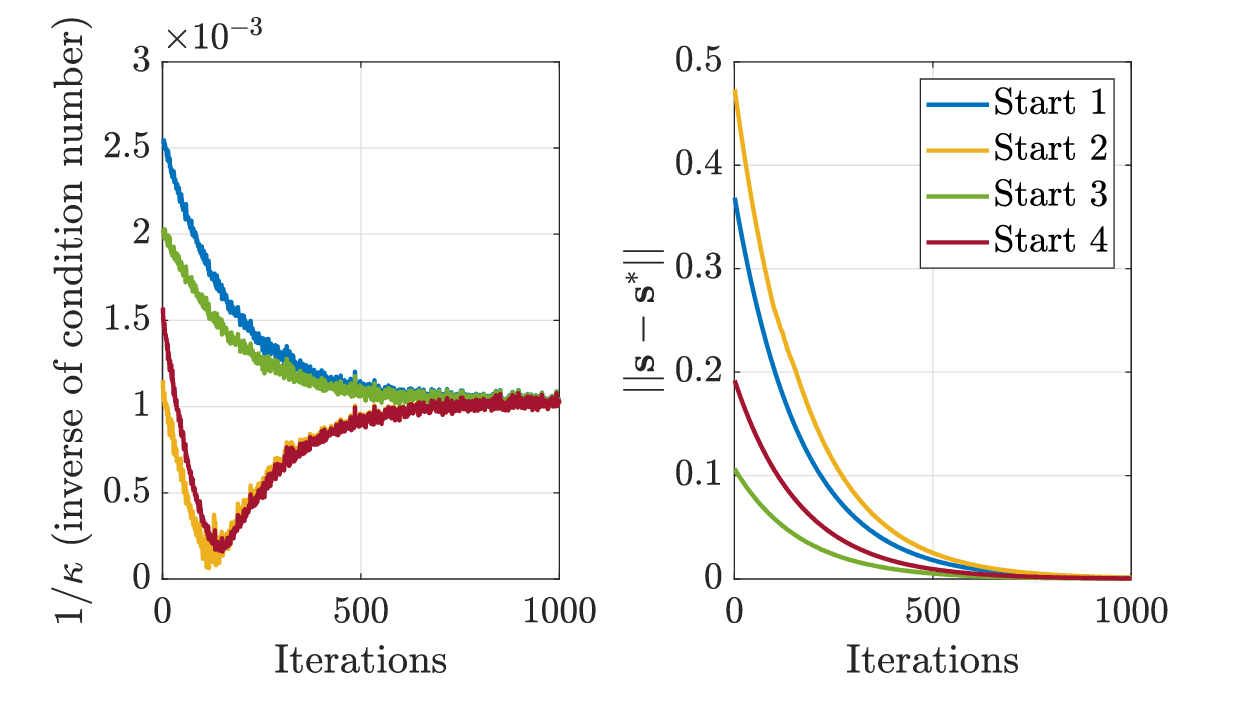}
	\caption{Inverse of the condition number $\kappa$ of the interaction matrix $\mathbf{M}_{(4)}$ (left) and norm of the error vector $||\mathbf{s}-\mathbf{s}^*||$ (right). \label{fig:5lines_errors_condnum}}
\end{figure}

We now consider the case of five lines determined by the following parameters:
\begin{equation} \label{eqs:sims_5lines_params}
\small
\begin{aligned} 
& s_1=4,\ s_2=-5, \ s_3=7, \ s_4=3, \ s_5=-2,\ s_6=13, s_7=-11,\ s_8=6,\nonumber\\
& \ d_1=2, \ d_2=3, \ d_3=5, \ d_4=1, \ d_5=7, \ d_6=-4,  \nonumber\\
& d_7= {\frac {128893236}{7630285}}-{\frac {24\,\sqrt {8508173023861}}{
		7630285}} \approx 7.7178.
\end{aligned}
\end{equation}

For this configuration, there is only one transversal line $\mathcal{L}_M$ that intersects all five lines (see Appendix B), defined by its Pl\"ucker coordinates:
\begin{equation}
\mathcal{L}_M = [0.0830 \ 0.4989 \ -0.8627 \ 0 \
-0.9641 \ -0.5575].
\end{equation}

From the results in Section~\ref{subsec:VG}, we know that a singularity will occur when the camera lies on this line. Note that the lines $\mathcal{L}_1, \dots, \mathcal{L}_4$ are defined identically as in Section~\ref{sec::sims_P4L}, and that $\mathcal{L}_5$ is chosen so as to intersect the first of the lines of singularity~\eqref{eq::sims_P4L_transversal_lines}.

\subsubsection{VS near the singularity line}

We considered a point $\mathcal{P}_M = [0.3962~\text{m} \ -4.337~\text{m} \ 7.50~\text{m}]$ that lies on the transversal $\mathcal{L}_M$ and defined four starting camera positions in its surroundings. From each of these points, we performed a simulation of visual servoing towards a target point. The initial and desired positions are the same as those considered in Section~\ref{section:simulation_line_LM}, whose coordinates relative to point $\mathcal{P}_M$ are given in Table~\ref{tab:simulation_points}. The camera velocity inputs are computed according to~\eqref{eq:velocity_input} with the factor $\lambda = 0.1$. Gaussian noise of standard deviation $\sigma = 2 \cdot 10^{-3}$ was added to the 3D coordinates of the lines.


In all four experiments, the magnitude of the error vector decreases exponentially (Fig.~\ref{fig:5lines_errors_condnum}). However, the trajectories displayed in Fig.~\ref{fig:5lines_simulations_traj} show very different behaviours. Starts $1$ and $3$ converge rapidly, with small camera velocities and displacements.

On the contrary, Starts $2$ (directly opposed to desired point) and $4$ (further away from the singularity), lead to unstable motion and large deviations. The velocity input profiles for Starts $1$ (stable) and $2$ (unstable) are compared in Fig.~\ref{fig:5lines_vels}. In the latter, the inputs become very high in magnitude as the inverse of the condition number of the interaction matrix drops near zero, when the camera crosses $\mathcal{L}_M$. Again we notice that there exists a region of instability around the singularity line, particularly strong in one direction which affects the trajectory of Start $4$, where the interaction matrix becomes ill-conditioned.

\subsubsection{Pose Estimation near the singularity line}

\begin{figure}[t]
	\centering
	\includegraphics[width=90mm, height=64mm]{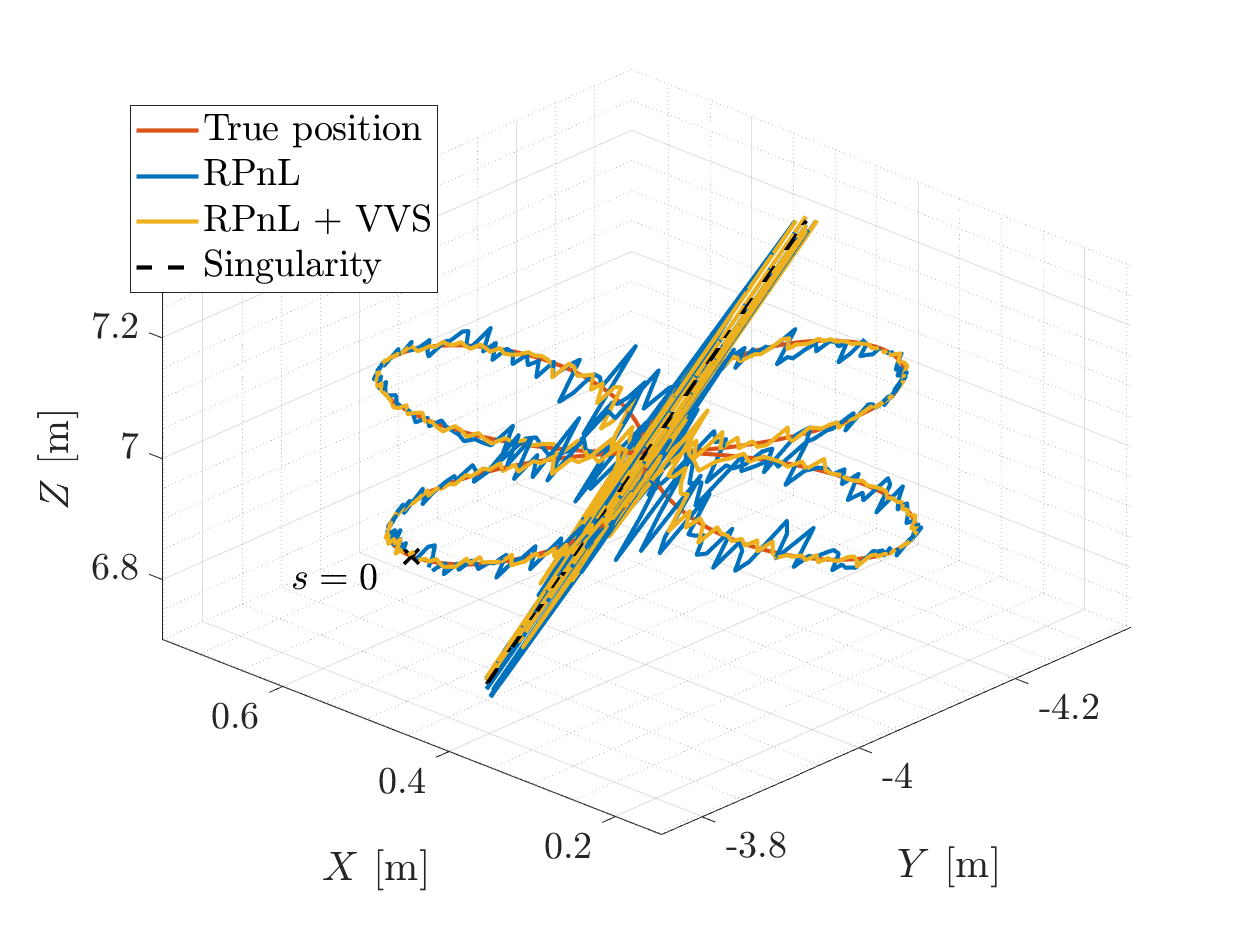}
	\caption{Pose computation from RP$n$L using five image lines along a quadrifolium trajectory centered at a point on the line singularity $\mathcal{L}_M$, and its refinement from VVS. In the proximity of the singularity the error in the estimation grows unbounded.}
	\label{fig:PE_5lines_traj}
\end{figure}

\begin{figure}[t]
	\centering
	\includegraphics[width=90mm, height=77mm]{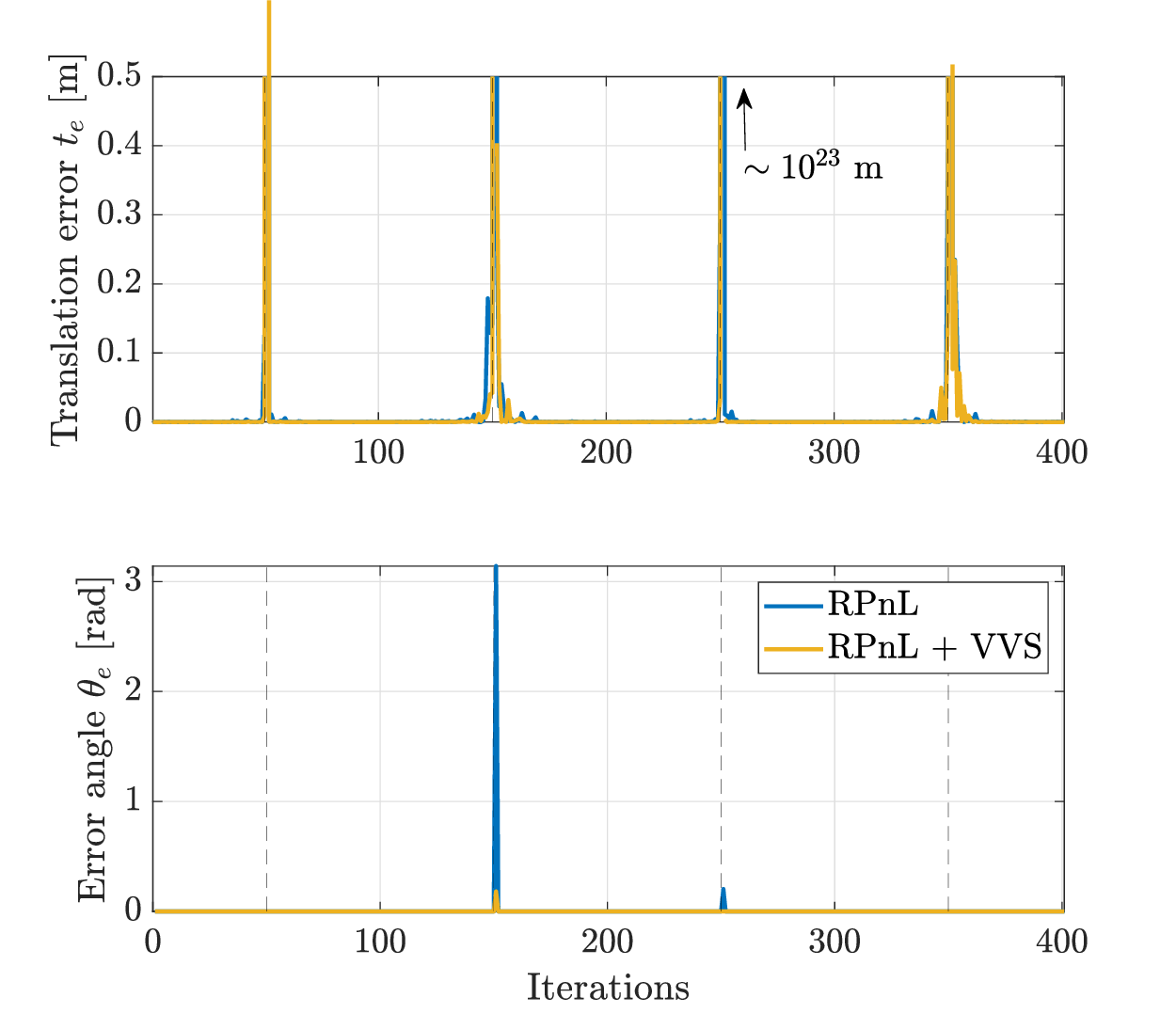}
	\caption{Translation (top) and rotation (bottom) errors in pose estimation from five lines along a quadrifolium trajectory. The vertical steps indicate where the camera crosses the singularity.}
	\label{fig:PE_5lines_error}
\end{figure}

For completeness, we conclude by demonstrating the behaviour of pose estimation from the observation of five lines near a singularity. We simulated a trajectory along a pattern with the shape of a quadrifolium (with equations~\eqref{eq:quadrifolium}) centered at a point with coordinates $[0.7809~\text{m} \, -2.024~\text{m} \, 3.50~\text{m}]$, which lies on the singularity line $\mathcal{L}_M$. Along this trajectory, the camera pose was computed using the RP$n$L algorithm~(\cite{zhang2012robust}) and the estimation was further refined using VVS~(\cite{Marchand_CGF_2002}). We considered Gaussian noise of standard deviation $\sigma = 5 \times 10^{-5}$ on the $3D$ coordinates of the observed lines.

The true pose is shown along the estimations from both methods in Fig.~\ref{fig:PE_5lines_traj}, while the translation error $t_e$ and the rotation error $\theta_e$ defined by~\eqref{eq:PE_error_angle}, are depicted in Fig.~\ref{fig:PE_5lines_error}. The observed behaviour is very similar to the case of four lines when pose reconstruction is performed near the singularity line: along the leaves of the quadrifolium, the RP$n$L presents a certain sensitivity to numerical noise, which is mitigated by the refinement through VVS. However, very near the singularity the errors blow up in magnitude and both methods prove totally unsuccessful. {For comparison, along a similar trajectory, but centered at coordinates $[0.5~\text{m} \, 4.0~\text{m} \, 1.0~\text{m}]$, far from any singularities, both methods are accurate up to $0.1$~mm in the estimation of position and up to $2\cdot 10^{-3}$~rad in orientation throughout.}


\section{{Conclusions}}
\label{sec:conclu}

In this paper, the singularities in the perspective four and five line problems were determined. Finding these singularities is crucial since they lead to controllability issues in visual servoing of image-lines and in large errors in pose estimation for P$n$L. To do so, a basis of the interaction matrix was found such that its rows are Pl\"ucker coordinates of $n$ affine and $n$ ideal lines for P$n$L.

First, it was recalled that the singularities in P$3$L are due to the vanishing of the determinant of the $(6 \times 6)$ interaction matrix which factors as a quadric and a cubic surface in terms of the position coordinates of the optical center of the camera. It was then proved that the quadric surface is essentially the hyperboloid of one sheet uniquely defined by the three observed lines. 

This fact was further used in the case of P4L to understand different cases such that the 28 principal minors of the $(8 \times 6)$ interaction matrix vanish simultaneously, leading to singularities. One of those cases is when the camera lies on two transversals intersecting the observed four lines. It was also shown that this one dimensional singularity could be avoided by choosing the fourth line to not intersect the hyperboloid defined by the other three lines. Additionally, Gr\"obner basis computations were used to expect up to ten real isolated singularities of P$4$L. 

Furthermore, in the case of P$5$L, no singularities were found for a generic choice of five lines. Nonetheless, some conditions of the relative configurations of the five lines were shown to yield a transversal line of singularities that intersects the five observed lines. 

The same analysis was done for four and five lines that are constrained to be orthogonal or parallel to each other to corroborate the results for the generic case. It turned out that for P$4$L, the singularities are two transversal lines and 10 isolated points whereas for P$5$L, it is just a transversal line.

{The results are supported with experimental simulations of Visual Servoing control of a camera and of pose determinations algorithms from the observation of lines in the proximity of the singularities. As expected, the ill-conditioning of the interaction matrix near a singular point results in unstable behaviour of the control law from VS and in significant losses of accuracy for the pose estimation methods.}

The geometric interpretation of the one dimensional singularities of P4L and P5L was provided, by extending the result that a hyperboloid of one sheet is a singularity in the P3L case. These one dimensional singularities appeared when the affine lines in the interaction matrix are linearly dependent by being parallel to the same plane. In the future, we will try to obtain the isolated singularities in P$n$L for four and five generic lines along with their geometric interpretation.

This work has been partially funded by the French ANR project SESAME (funding ID: ANR-18-CE33-0011).

\begin{acknowledgements}
This work has been partially funded by the French ANR project SESAME (funding ID: ANR-18-CE33-0011).
\end{acknowledgements}

\appendix
\appendix
\section*{Appendices}
\subsection*{A. Analysis of varieties $\mathbf{V}(\mathcal{G})$ and $\mathbf{V}(\mathcal{H}) \cap \mathbf{V}(\mathcal{K})$ in P4L for an example.}
\label{app:appendixA}
The singularities in P4L are determined for lines whose Pl\"ucker coordinates are arbitrarily chosen according to the following parameters:
\begin{align} \label{params}
    & s_1=4,\ s_2=-5, \ s_3=7, \ s_4=3, \ s_5=-2,\ s_6=13, \nonumber \\
    & \ d_1=2, \ d_2=3, \ d_3=5, \ d_4=1, \ d_5=7.
\end{align}
Then,
\begin{dmath*}
    \mathcal{G}= \langle -1765\,{X}^{2}-587\,XY-878\,XZ+660\,{Y}^{2}+232\,YZ-122\,{Z}^{2}+2606
\,X+216\,Y+598\,Z-708,\; -177\,XY-27\,XZ+75\,{Y}^{2}+17\,YZ-8\,{Z}^{2}+
156\,Y+6\,Z, \; -130\,XY+40\,XZ-104\,{Y}^{2}-62\,YZ+10\,{Z}^{2}+188\,Y-20
\,Z, \;
-30\,XY+145\,XZ-24\,{Y}^{2}+11\,YZ+30\,{Z}^{2}+210\,Y-60\,Z \rangle.
\end{dmath*}
The Gr\"obner basis $gb_G$ of $\mathcal{G}$ w.r.t. $Z\succ_{lex} Y \succ_{lex} X$ consists of four elements $\{g_1,g_2,g_3,g_4\}$. The first element factors (using the command \verb|evala(AFactor)| in Maple) as follows:
\begin{align*}
    g_1=-2166Y^2-1166YZ & +50Z^2\\
    & =\ \frac{1}{2166}\left( Z (\sqrt {448189}+583)+2166Y \right) \\
    & \ \left( Z(\sqrt 
{448189}-583)-2166Y \right).
\end{align*}
By substituting the factors into the other elements of $gb_G$, $\mathcal{G}$ can be decomposed into two subideals whose varieties correspond to the two transversal lines intersecting all the four observed lines:
\begin{align*}
    \mathcal{G} & =\mathcal{M} \cap \mathcal{N}, \textrm{ where }\\
    \mathcal{M} & =\langle Z(\sqrt{448189}+583)+2166Y, \\
    & \quad Z(329\sqrt {448189}-587953)+2166\sqrt {
448189} \\
  & \quad -3822990X+2822298  \rangle, \\
      \mathcal{N} & =\langle Z(\sqrt{448189}-583)-2166Y, \\
    & \quad Z(329\sqrt {448189}+587953)+2166\sqrt {
448189} \\
  & \quad +3822990X-2822298  \rangle.
\end{align*}
Following Section~\ref{subsec:VG}, it is straightforward to verify that $\overline{p_i}^{\langle g_1, \dots, g_4\rangle} = 0 \  \forall \ p_i \in \mathcal{K}$. So, the corresponding positive dimensional singularities are $\mathbf{V}(\mathcal{M})$ and $\mathbf{V}(\mathcal{N})$. 

The ideals $\mathcal{H}$ and $\mathcal{K}$ are determined whereas only $\mathcal{H}$ is displayed here since $\mathcal{K}$ is too large:
\begin{dmath}
   \mathcal{H}=\langle  -73960\,{X}^{3}-46428\,Y{X}^{2}+320426\,{X}^{2}Z+88867\,X{Y}^{2}+
163934\,YXZ+184389\,X{Z}^{2}+62940\,{Y}^{3}-356381\,{Y}^{2}Z-32282\,Y{
Z}^{2}+27183\,{Z}^{3}+210018\,{X}^{2}-721747\,XY-416981\,XZ-146898\,{Y
}^{2}-118097\,YZ-116973\,{Z}^{2}+111106\,X+377082\,Y+153504\,Z-56580,-
3038\,Y{X}^{2}+3686\,{X}^{2}Z-2288\,X{Y}^{2}+16544\,YXZ+3344\,X{Z}^{2}
-315\,{Y}^{3}-4111\,{Y}^{2}Z-157\,Y{Z}^{2}+663\,{Z}^{3}+27166\,XY-4168
\,XZ+2769\,{Y}^{2}-13942\,YZ-1527\,{Z}^{2}-25806\,Y+690\,Z,-650\,Y{X}^
{2}-3350\,{X}^{2}Z-195\,X{Y}^{2}+8450\,YXZ-845\,X{Z}^{2}+260\,{Y}^{3}+
3410\,{Y}^{2}Z+390\,Y{Z}^{2}-13390\,XY+1630\,XZ+276\,{Y}^{2}-8372\,YZ+
15088\,Y,225\,Y{X}^{2}-1685\,{X}^{2}Z+705\,X{Y}^{2}+450\,YXZ-345\,X{Z}
^{2}+420\,{Y}^{3}-1325\,{Y}^{2}Z-645\,Y{Z}^{2}-8056\,XY+705\,XZ-918\,{
Y}^{2}-1527\,YZ+5658\,Y \rangle
\end{dmath}
The Gr\"obner basis $gb_{H}$ of $\mathcal{H}$ w.r.t. $Z\succ_{lex} Y \succ_{lex} X$ contains 5 elements $\{h_1,h_2,h_3,h_4,h_5\}$. As proposed in Section~\ref{subsec:VH}, the normal forms can be calculated as $\overline{p_i}^{\langle h_1, \dots, h_5\rangle} = f_i$. The residuals $f_i$ do not vanish. Thus, the whole variety $\mathbf{V}(\mathcal{H}) \cap \mathbf{V}(\mathcal{K})$ is considered and it turns out to be of dimension $0$ and degree $22$ with 16 real solutions (see the attached Maple file):
\begin{table}[ht!]
			\caption{Elements of the variety $\mathbf{V}(\mathcal{H}) \cap \mathbf{V}(\mathcal{K})$.} \label{table:exVHK}
\centering
			\begin{tabular}{c| ccc}
				\toprule
			 & X & Y & Z\\
			 \midrule
			 *1 &-9.858 & - 2.473 & - 1.841 \\
			 \hline
			 2 & -0.720 &  0.0 &  0.0 \\
			 \hline
		 *3 & -0.320 &  0.010 &  0.220 \\
			 \hline
			 4 &-0.007 &  0.009 &  2.0 \\
			 \hline
		 *5 & 0.054 &  0.009 &  1.842 \\
			 \hline
			 6 & 0.328 &  0.0 &  0.0 \\
			 \hline
			 7 & 0.918 &  0.141 & - 2.082 \\
			 \hline
		 *8 & 0.938 &  0.568 & - 2.023 \\
				\hline
				9 & 0.972 & - 1.215 &  2.0 \\
				\hline
				 *10 & 1.011 &  0.794 & - 0.885 \\
				\hline
				11 & 1.016 &  0.371 & - 1.984 \\
				\hline
				12 & 1.218 &  0.390 & - 0.918 \\
				\hline
				13 & 3.231 &  0.0 &  0.0 \\
				\hline
				14 & 3.880 &  7.054 &  0.880 \\
				\hline
				 *15 & 65.09 & - 96.57 & - 0.036 \\
				\hline
				16 & 90.31 & - 112.9 &  2.0 \\
				\bottomrule
			\end{tabular}
\end{table}

However, it can be verified that some of these points lie on the observed four lines or their transversals $\mathbf{V}(\mathcal{M})$ and $\mathbf{V}(\mathcal{N})$. Since any point incident with these lines leads to a singularity, we are interested in singular points that do not lie on them. They can be calculated by determining the ideal $\mathcal{F}$ using~\eqref{eq:idealF}. The Gr\"obner basis $gb_F$ of $\mathcal{F}$ w.r.t. $Z\succ_{lex} Y \succ_{lex} X$ has the following nice structure~(called the \emph{shape position}):
\begin{align*}
    gb_F=\{f_a(Z), \ f_b(Z)+Y, \ f_c(Z)+X \},
\end{align*}
where $f_a(Z)=\Sigma_{i=1}^{10} a_iZ^i$ is a degree 10 univariate polynomial in $Z$, $f_b(Z)$ and $f_c(Z)$ are also univariate polynomials in $Z$. It follows that $\mathbf{V}(\mathcal{F})$ is of degree $10$. It consists of $6$ real points marked with an asterisk each in Table~\ref{table:exVHK}. Thus, the singularity loci for this example include the four observed lines, their two transversals and 6 points. 

Since the parameters were chosen randomly for this analysis, this 
indicates that for 
values of parameters, outside the zero set of some polynomial depending on the 
parameters (hence of measure zero), the real singular points in P4L can be up to 
$10$.

\subsection*{B. Analysis of varieties $\mathbf{V}(\mathcal{G})$ and $\mathbf{V}(\mathcal{H}) \cap \mathbf{V}(\mathcal{K})$ in P5L for an example.}
\label{app:appendixB}
The singularities in P5L are determined for lines whose Pl\"ucker coordinates are arbitrarily chosen as follows:
\begin{align} \label{paramsP5L}
    & s_1=4,\ s_2=-5, \ s_3=7, \ s_4=3, \ s_5=-2,\ s_6=13, s_7=-11, s_8=6, \nonumber \\
    & \ d_1=2, \ d_2=3, \ d_3=5, \ d_4=1, \ d_5=7, d_6=-4, d_7=11.
\end{align}
The Gr\"obner basis $gb_G$ of $\mathcal{G}$ w.r.t. $Z\succ_{lex} Y \succ_{lex} X$ is $\langle 1 \rangle$. By Hilbert's Nullstellensatz, $\mathbf{V}(\mathcal{G})=\emptyset$. 

If we choose the parameters according to~\eqref{paramsP5L} except $d_7$, which is chosen such that the parameters satisfy $h=0$ in Section~\ref{subsec:VG5}:
\begin{equation*}
   d_7= {\frac {128893236}{7630285}}-{\frac {24\,\sqrt {8508173023861}}{
7630285}},
\end{equation*}
then, calculating the Gr\"obner basis leads to:
\begin{align*}
    gb_G &=\{ 2166Y+ \left( \sqrt {448189}+583 \right) Z, \  -2166\sqrt {448189}-
2822298 \\ & \quad +3822990X+ \left( -329\sqrt {448189}+587953 \right) Z \}.
\end{align*}
It is the equation of a line intersecting the five observed lines.

The Gr\"obner basis $gb_H$ of $\mathcal{H}$ w.r.t. $Z\succ_{lex} Y \succ_{lex} X$ contains 3 elements $\{h_1,h_2,h_3,h_4,h_5\}$. As proposed in Section~\ref{subsec:VH5}, the normal forms can be calculated as $\overline{p_i}^{\langle h_1, \dots, h_3\rangle} = f_i,i=1,\dots,55$. The residues $f_i$ do not vanish. Therefore, he Gr\"obner basis of the ideal $\langle \mathcal{H}, \mathcal{K}_{55} \rangle$ is calculated and it turns out to be $\langle 1 \rangle$. Hence, for a generic choice of parameters, there are no singularities in P5L. 


\bibliographystyle{spbasic}      

\bibliography{bibreport}

\end{document}